\documentclass{article} 
\usepackage{iclr2027_conference,times}

\usepackage{amsthm}
\usepackage{amsmath}
\usepackage{amssymb}
\usepackage{graphicx}
\usepackage{mathtools}
\usepackage{enumerate}
\usepackage{enumitem}
\usepackage{footnote}
\usepackage{float}
\usepackage{xspace}
\usepackage{multirow}
\usepackage{nicefrac}
\usepackage{wrapfig}
\usepackage{framed}
\usepackage{url}
\usepackage{units}
\usepackage[colorlinks=true, linkcolor=black, citecolor=blue, urlcolor=magenta]{hyperref}

\usepackage{blkarray}
\PassOptionsToPackage{round}{natbib}

\usepackage{tikz}
\usetikzlibrary{arrows,chains,matrix,positioning,scopes,arrows.meta}

\let\hat\widehat

\newtheorem{lemma}{Lemma}

\newtheorem{theorem}{Theorem}
\newtheorem{corollary}{Corollary}
\newtheorem{remark}{Remark}

\newtheorem{assumption}{Assumption}

\newcommand{\DefinedAs}[0]{\mathrel{\mathop:}=}

\newcommand{\bx}{\boldsymbol{x}}

\DeclareMathOperator*{\argmin}{argmin}

\DeclareFontFamily{OMX}{MnSymbolE}{}
\DeclareFontShape{OMX}{MnSymbolE}{m}{n}{
    <-6>  MnSymbolE5
   <6-7>  MnSymbolE6
   <7-8>  MnSymbolE7
   <8-9>  MnSymbolE8
   <9-10> MnSymbolE9
  <10-12> MnSymbolE10
  <12->   MnSymbolE12}{}
\DeclareSymbolFont{mnlargesymbols}{OMX}{MnSymbolE}{m}{n}
\SetSymbolFont{mnlargesymbols}{bold}{OMX}{MnSymbolE}{b}{n}
\DeclareMathDelimiter{\llangle}{\mathopen}{mnlargesymbols}{'164}{mnlargesymbols}{'164}
\DeclareMathDelimiter{\rrangle}{\mathclose}{mnlargesymbols}{'171}{mnlargesymbols}{'171}

\usepackage{prettyref}

\newcommand{\savehyperref}[2]{\texorpdfstring{\hyperref[#1]{#2}}{#2}}
\newrefformat{eq}{\savehyperref{#1}{\textup{(\ref*{#1})}}}
\newrefformat{eqn}{\savehyperref{#1}{Equation~\eqref{#1}}}
\newrefformat{lem}{\savehyperref{#1}{Lemma~\ref*{#1}}}
\newrefformat{def}{\savehyperref{#1}{Definition~\ref*{#1}}}
\newrefformat{line}{\savehyperref{#1}{line~\ref*{#1}}}
\newrefformat{thm}{\savehyperref{#1}{Theorem~\ref*{#1}}}
\newrefformat{corr}{\savehyperref{#1}{Corollary~\ref*{#1}}}
\newrefformat{cor}{\savehyperref{#1}{Corollary~\ref*{#1}}}
\newrefformat{sec}{\savehyperref{#1}{Section~\ref*{#1}}}
\newrefformat{app}{\savehyperref{#1}{Appendix~\ref*{#1}}}
\newrefformat{ap}{\savehyperref{#1}{Appendix~\ref*{#1}}}
\newrefformat{assum}{\savehyperref{#1}{Assumption~\ref*{#1}}}
\newrefformat{as}{\savehyperref{#1}{Assumption~\ref*{#1}}}
\newrefformat{ex}{\savehyperref{#1}{Example~\ref*{#1}}}
\newrefformat{fig}{\savehyperref{#1}{Figure~\ref*{#1}}}
\newrefformat{alg}{\savehyperref{#1}{Algorithm~\ref*{#1}}}
\newrefformat{rem}{\savehyperref{#1}{Remark~\ref*{#1}}}
\newrefformat{conj}{\savehyperref{#1}{Conjecture~\ref*{#1}}}
\newrefformat{prop}{\savehyperref{#1}{Proposition~\ref*{#1}}}
\newrefformat{proto}{\savehyperref{#1}{Protocol~\ref*{#1}}}
\newrefformat{prob}{\savehyperref{#1}{Problem~\ref*{#1}}}
\newrefformat{claim}{\savehyperref{#1}{Claim~\ref*{#1}}}
\newrefformat{que}{\savehyperref{#1}{Question~\ref*{#1}}}
\usepackage{bbm}
\usepackage{titletoc}
\usepackage{graphicx}
\usepackage{wrapfig}
\usepackage{subcaption}
\usepackage{booktabs}
\usepackage[utf8]{inputenc} 
\usepackage[T1]{fontenc}    
\usepackage{algorithm}     
\usepackage{algpseudocode} 
\usepackage{hyperref}       
\usepackage{url}            
\usepackage{amsfonts}       
\usepackage{nicefrac}       
\usepackage{microtype}      
\usepackage{xcolor}         
\usepackage{multirow}
\usepackage{algorithm}
\usepackage{algpseudocode}
\usepackage{subcaption}

\usepackage{xcolor}
\usepackage{bm}
\usepackage{amsmath,amssymb,amsthm,mathtools}
\usepackage{bm}

\usepackage{hyperref}
\usepackage{url}
\usepackage{algorithm}
\usepackage{algpseudocode}
\usepackage{subcaption}

\definecolor{mycyan}{RGB}{0,204,204}
\definecolor{mygreen}       {rgb} {0.10,0.50,0.50}
\definecolor{mylightgreen}  {RGB} {50,168,82}
\definecolor{lightred}      {rgb} {1.00,0.80,0.80}
\definecolor{lightblue}     {rgb} {0.80,0.80,1.00}
\definecolor{mediumred}     {rgb} {1.00,0.60,0.60}
\definecolor{mediumblue}    {rgb} {0.40,0.50,0.70}
\definecolor{pennblue}{cmyk}{1,0.65,0,0.30}
\definecolor{pennred}{cmyk}{0,1,0.65,0.34}
\definecolor{myred}{RGB}{199,107,102}

\title{ConicFlow: CONstrained Inference-time\\ optimal Control for FLOW matching Models}

\author{%
\mdseries
\begin{tabular}[t]{@{}p{0.47\textwidth}p{0.47\textwidth}@{}}
\textbf{Shervin Khalafi}\thanks{Equal contribution.} &
\textbf{Kasra Fallah}\footnotemark[1] \\
University of Pennsylvania & Columbia University \\
{\small\texttt{shervink@engineering.upenn.edu}} &
{\small\texttt{kasra.fallah@columbia.edu}} \\[2ex]
\textbf{Sung Cho} & \textbf{Dongsheng Ding} \\
University of Pennsylvania & University of Tennessee, Knoxville \\
{\small\texttt{sungcho9@engineering.upenn.edu}} &
{\small\texttt{dongshed@utk.edu}} \\[2ex]
\textbf{James Anderson} & \textbf{Alejandro Ribeiro} \\
Columbia University & University of Pennsylvania \\
{\small\texttt{ja3451@columbia.edu}} &
{\small\texttt{aribeiro@engineering.upenn.edu}}
\end{tabular}}

\iclrfinalcopy 
\begin{document}

\maketitle

\lhead{}
\renewcommand{\headrulewidth}{0pt}

\begin{abstract}

We propose ConicFlow, which frames constrained sampling from a flow matching model at inference time as an optimal control problem that balances deviation from the pretrained trajectory against reward, subject to hard terminal constraints. We discretize this problem and adapt receding-horizon control, optimizing only a short window of the trajectory at each step for computational tractability. We address the resulting sequence of receding-horizon subproblems with a receding-horizon primal–dual algorithm. Its dual multipliers capture the difficulty of satisfying each constraint and are refined throughout sampling, steering generation toward feasible final samples. We further propose a terminal-estimate variant that avoids accumulating local truncation errors by evaluating the reward and constraints at the predicted terminal point rather than at the end of the current horizon. Under standard regularity assumptions, we establish finite-horizon guarantees on suboptimality. Under convex-affine structure, we derive a sharper bound that scales linearly with omitted horizon length, and we show that the terminal-estimate variant is provably tighter when the flow dynamics change slowly.  Experiments in both low- and high-dimensional settings with a recent large-scale flow matching model show that ConicFlow achieves high reward and strong constraint satisfaction while deviating less from the pretrained flow than recent baselines. \footnote{Our implementation code can be found at: \url{https://github.com/shervinkhalafi/conicflow}}

\end{abstract}
\section{Introduction}

Diffusion models and flow matching models generate high-quality samples across a variety of modalities~\citep{ho2020denoising,lipman2022flow,rombach2022high}. However, sample quality alone is often insufficient: many applications require samples that both achieve high reward and satisfy hard constraints. Rewards can encode user preferences, such as image
aesthetics evaluated by models trained on human
feedback~\citep{wu2023human}, consistency with observations in inverse
problems~\citep{ben2024d}, or safety-related
criteria~\citep{schramowski2023safe}.
Optimizing these rewards alone, however, does not guarantee satisfaction of hard constraints. The challenge is to
achieve high reward and terminal feasibility while limiting deviations
from the pretrained dynamics.

In both theory and practice, ensuring that samples generated by these models satisfy hard constraints is underexplored. Existing approaches fall broadly into two categories. One approach is to adapt model parameters through
fine-tuning~\citep{black2024training,domingo2025adjoint}, but it can be prohibitively expensive. Furthermore, such an approach would require finetuning again for each new reward/constraint. Guidance, inference-time steering, and projection based methods for constraint satisfaction~\citep{ben2024d,christopher2026constraint}, avoid the computational overhead of fine-tuning, usually at the cost of additional overhead during sampling. Our approach is an inference-time approach. We provide extended discussion on related approaches in Appendix~\ref{app: related work}.

A pre-trained flow matching model describes the dynamics of the sample evolving across time. We need to deviate from these dynamics in order to satisfy constraints and high reward at the endpoint, meanwhile ideally we would want these deviations to be small, because we still want to end up reasonably close to the data manifold. Given this description, the problem naturally lends itself to an optimal control formulation, where the deviation from the flow dynamics can be viewed as the control input that we are optimizing over. This leads to the continuous-time constrained inference problem~\eqref{eq:continuous-problem} where $r$: $\mathbb{R}^d\rightarrow\mathbb{R}$ encodes a soft preference over
the generated sample, while
$h$: $\mathbb{R}^d\rightarrow\mathbb{R}^m$ represents $m$ hard terminal
requirements. 
\begin{equation}
\tag{P-C}
\label{eq:continuous-problem}
\begin{aligned}
    \min_{u(t)}
    \quad &
    J_C(u) 
    \; \DefinedAs \;
    \int_0^1
    \frac{\|u(t)\|^2}{2\lambda}\,dt
    -
    r(x_1^u)
    \\
    \mathrm{s.t.}
    \quad &
    \dot{x}_t^u
    \; = \;
    b_t(x_t^u)+u(t),
    \quad
    \text{given } x_0,
    \\
    &
    h(x_1^u) \;\leq\; 0.
\end{aligned}
\end{equation}
Full-horizon constrained control can be computationally demanding,
particularly for high-dimensional flow models. We therefore adopt
receding-horizon control (RHC), which replaces the full problem by a sequence
of shorter optimizations and replans after each intervention.
\citep{garcia1989model,mayne2000constrained,mattingley2011receding,bemporad2000explicit}.
This improves tractability at the cost of horizon-truncation error, motivating
our formulations~\eqref{eq:local-terminal-rh} and~\eqref{eq:terminal-rollout-rh} in Section~\ref{sec:methodology}.

Specifically, in RHC, rather than optimizing the full trajectory at once, at each step of the dynamics (in the case of flow matching, at each sampling step) we optimize over a short window and then apply the first optimized control action before shifting the window forward by one step and repeating this procedure until we reach terminal time. Note that in~\eqref{eq:continuous-problem}, both the reward and the constraints are evaluated only at the terminal time. Classical receding-horizon control introduced in~\eqref{eq:local-terminal-rh} instead evaluates them at the end of each truncated window, which creates a mismatch with the original objective. We therefore propose~\eqref{eq:terminal-rollout-rh}, which uses the pretrained flow map~\citep{boffi2025consistency} to predict the terminal state reached from the end of each window and evaluates the reward and constraints at this prediction. To the best of our knowledge, this is the first work to use flow maps within receding-horizon control, allowing each subproblem to account for the full-horizon objective without extending the control horizon. Our contributions are fourfold.

\begin{enumerate}
    \item[(i)] \textbf{Constrained-control formulation and theory.} We formulate reward-guided generation with hard terminal constraints as an optimal control problem that penalizes deviations from pretrained dynamics, and bound the errors introduced by discretization and horizon truncation.
    \item[(ii)] \textbf{Terminal prediction through flow maps.} We propose a variant of receding-horizon control~\eqref{eq:terminal-rollout-rh} that uses the pretrained flow map to predict the uncontrolled continuation beyond each optimization window. This enables evaluation of rewards and constraints at the final sampling time without extending the optimized control horizon. 
    \item[(iii)] \textbf{Primal-dual algorithm and convergence guarantees.} We develop a receding-horizon primal-dual algorithm and, under affine dynamics, establish sublinear convergence rates. We further quantify how local optimization errors propagate and establish convergence to the exact receding-horizon trajectory under uniform bounds. 
    \item[(iv)] \textbf{Experiments.} On synthetic data and image editing, ConicFlow improves the reward-control trade-off and achieves near-$100\%$ empirical feasibility with up to a $20$-fold reduction in control energy relative to the
    baseline. 
    
\end{enumerate}

\paragraph{Preliminaries on flow matching and flow maps.}
A flow matching model generates samples by evolving an initial state $x_0\sim p_0$ according to $\dot{x}_t=b_t(x_t)$ for $t\in[0,1]$, where $b$: $[0,1]\times\mathbb{R}^d\to\mathbb{R}^d$ is a learned velocity field. Training aims to transport the source distribution $p_0$ to the target distribution $p_{\mathrm{data}}$ at $t=1$, and can be formulated using stochastic interpolants~\citep{albergo2025stochastic}. Assuming the ODE has unique solutions over the time interval, its \emph{flow map} $X_{s,t}:\mathbb{R}^d\to\mathbb{R}^d$ sends a state at time $s$ to the corresponding state at time $t$: $X_{s,t}(x_s)=x_t$, with $X_{s,s}(x)=x$ for $0\leq s\leq t\leq1$. Learned approximations of these solution operators enable direct prediction across time intervals, accelerating sampling without explicitly integrating every intermediate step~\citep{boffi2025consistency}.

\section{Problem Formulation}
\label{sec:problem_formulation}

\paragraph{Problem statement.} We consider the problem of steering a pretrained
flow model toward samples that achieve a desired terminal reward while
simultaneously satisfying hard output constraints.
Rather than modifying or retraining the underlying flow model, we introduce an
inference-time control $u_t$ that perturbs the nominal vector field.
This leads to the continuous-time constrained inference problem~\eqref{eq:continuous-problem}. The reward $r$ and the constraint $h$ play distinct roles: The constraint specifies strict properties that the generated output must satisfy, while the reward ranks feasible terminal samples.
The quadratic energy term penalizes deviation from the pretrained generative
dynamics. In particular, smaller $\lambda$ places a larger penalty on
intervention and keeps the controlled trajectory closer to the nominal flow,
while larger $\lambda$ permits stronger inference-time steering.

\paragraph{Discrete-time constrained inference.}
In practice, flow-based inference is carried out using finitely many
vector-field evaluations. We introduce a uniform forward-Euler
discretization of the flow dynamics in~\eqref{eq:continuous-problem} over 
$t_0<t_1<\cdots<t_N$: $x_{k+1}
=
x_k
+
\Delta t
\bigl(
b_{t_k}(x_k)+u_k
\bigr)$, where $t_0=0$, $t_N=1$, and
$\Delta t = 1/N$. By applying Euler discretization, we approximate~\eqref{eq:continuous-problem} as a discrete-time constrained inference problem, 
\begin{equation}
\tag{P-D}
\label{eq:discrete-problem}
\begin{aligned}
\min_{u_{0:N-1}}
\quad &
J_D(u) 
    \; \DefinedAs \;
\sum_{k\,=\,0}^{N-1}
\Delta t
\frac{\|u_k\|^2}{2\lambda}
-
r(x_N^u)
\\
\mathrm{s.t.}
\quad &
x_{k+1}
\;=\;
x_k+
\Delta t
\bigl(
b_{t_k}(x_k)+u_k
\bigr), \quad
    \text{given } x_0,
\\
&
h(x_N^u) \;\leq\; 0.
\end{aligned}
\end{equation}
The constrained inference problem~\eqref{eq:discrete-problem} is finite-dimensional. To capture the approximation error, we prove that the optimal value of the discrete-time problem~\eqref{eq:discrete-problem} approximates that of the continuous-time problem~\eqref{eq:continuous-problem} with error $O(1/N)$ in Lemma~\ref{lem:discretization-error-main}. Let $u_C^\star$ be an optimal control for~\eqref{eq:continuous-problem}, and $u_D^\star(N)$ be an optimal control
for the $N$-step discretized problem~\eqref{eq:discrete-problem}. 

\begin{assumption}[Regularity, feasibility, and repairability]
\label{ass:main-discretization}
\textnormal{(i)} On the relevant energy-bounded reachable set, $b_t(x)$ is
$L_b$-Lipschitz in state and $L_b^t$-Lipschitz in time, while $r$ and $h$
are $L_r$- and $L_h$-Lipschitz, respectively;
\textnormal{(ii)} there exist $u^0,u_C^\star\in L^2$ such that
$h(x_1^{u^0})\leq0$, $J_C(u^0)<\infty$, and
$u_C^\star\in\argmin_{u:\,h(x_1^u)\leq0}J_C(u)$;
\textnormal{(iii)} there exist $\kappa,\rho_0>0$, independent of $N$, such
that any relevant continuous- or discrete-time control with terminal violation at
most $\rho_0$ admits a feasible perturbation whose control norm is at most
$\kappa$ times the violation.
\end{assumption}

We state Assumption~\ref{ass:main-discretization} in a compact form with its precise statement deferred to Appendix~\ref{app:proof-discretization}.

\begin{lemma}[Discretization error]
\label{lem:discretization-error-main}
Let Assumption~\ref{ass:main-discretization} hold. Then, there exist constants
$N_1\in\mathbb{N}$, and $C_{\mathrm{disc}}>0$ that is independent of $N$, such that
for all $N\geq N_1$,
\begin{equation}
\label{eq:disc-main-bound}
\left|
J_D\bigl(u_D^\star(N)\bigr)
-
J_C(u_C^\star)
\right|
\leq
\frac{C_{\mathrm{disc}}}{N}.
\end{equation}
\end{lemma}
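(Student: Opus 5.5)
We prove the two one-sided inequalities
\[
J_D\bigl(u_D^\star(N)\bigr)\le J_C(u_C^\star)+\tfrac{C_1}{N}
\qquad\text{and}\qquad
J_C(u_C^\star)\le J_D\bigl(u_D^\star(N)\bigr)+\tfrac{C_2}{N}
\]
separately. Both use the same transfer scheme: take an optimal control for one problem, map it to a control for the other, and bound the resulting terminal-state mismatch and energy mismatch by $O(1/N)$. This step may create a small constraint violation, so we then use the repairability condition, Assumption~\ref{ass:main-discretization}(iii), to restore feasibility at an extra cost of $O(1/N)$. We first fix an energy level $E$ larger than $J_C(u^0)$ and larger than the energy of the feasible controls obtained by transferring $u^0$; it is independent of $N$. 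With this level, all controls considered below have $\|u\|_{L^2}^2\le E'$ for a fixed $E'$, and so stay in the energy-bounded reachable set on which the Lipschitz constants of Assumption~\ref{ass:main-discretization}(i) hold. The bound on the optimal values uses $J_D\le J_D(\text{transferred }u^0)$ together with the fact that $r$ is bounded on the reachable set.

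\emph{Upper bound (continuous to discrete).} Given $u_C^\star$, we define $\bar u_k=\frac{1}{\Delta t}\int_{t_k}^{t_{k+1}}u_C^\star(t)\,dt$. By Jensen, the discrete energy satisfies $\sum_k\Delta t\|\bar u_k\|^2\le\|u_C^\star\|_{L^2}^2$, so the transfer costs no energy. We then compare the Euler trajectory $x_k$ with $x^{u_C^\star}_{t_k}$. The local error over $[t_k,t_{k+1}]$ is
\[
\int_{t_k}^{t_{k+1}}\bigl(b_t(x_t)-b_{t_k}(x_k)\bigr)\,dt,
\]
since the control parts cancel exactly because $\bar u_k$ is the interval average. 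This error is bounded by $L_b\Delta t\,|x_t-x_{t_k}|+L_b^t\Delta t^2$ plus the propagated error. The term $|x_t-x_{t_k}|\le \sup|b|\,\Delta t+\int_{t_k}^t|u|$ is controlled using Cauchy–Schwarz and the energy bound. Here one point needs care. The control contribution $\sum_k\Delta t\int_{t_k}^{t_{k+1}}|u|$ is at most $\Delta t\,\|u\|_{L^1}$, which is still $O(\Delta t)$. A discrete Grönwall argument then gives
\[
|x_N-x_1^{u_C^\star}|\le C\Delta t .
\]
It follows that the reward changes by at most $L_rC\Delta t$ and the constraint violation is at most $L_hC\Delta t$. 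Once $N\ge N_1$, this violation is at most $\rho_0$, so the discrete repair map from (iii) yields a feasible control with extra norm $\kappa L_h C\Delta t$. We bound its energy and reward change by expanding $\|\bar u+\delta\|^2\le\|\bar u\|^2+2\|\bar u\|\|\delta\|+\|\delta\|^2$, which is $O(\Delta t)$ because $\|\bar u\|$ is bounded. The reward change is handled through Grönwall sensitivity of the terminal state to the perturbation.

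\emph{Lower bound (discrete to continuous).} Given $u_D^\star(N)$, we take its piecewise-constant extension $u(t)=u_k$ on $[t_k,t_{k+1})$, which has identical energy. We run the same Euler-versus-ODE comparison, now for a piecewise-constant control, where the control terms again cancel exactly. We then repair in continuous time using (iii). The constants are uniform in $N$ because the energy of $u_D^\star(N)$ is uniformly bounded by the a priori bound above.

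The main obstacle is this uniformity. The Grönwall constants, the threshold $N_1$ at which the violation falls below $\rho_0$, and the post-repair bounds must not depend on $N$. Controls are only $L^2$, not bounded pointwise, so the terms $\int|u|$ have to be absorbed via Cauchy–Schwarz rather than through sup bounds. Once the a priori energy bound is established, everything else follows from the Lipschitz constants of Assumption~\ref{ass:main-discretization}(i). Taking $C_{\mathrm{disc}}=\max(C_1,C_2)$ then gives the claimed bound.
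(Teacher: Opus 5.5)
Your architecture matches the paper's proof. Both use block averaging with Jensen and piecewise-constant interpolation for the two transfers, exact cancellation of the control over each grid interval, and discrete Gr\"onwall for an $O(\Delta t)$ endpoint error. Both then repair feasibility through Assumption~\ref{ass:main-discretization}(iii) in both directions, with the same energy and reward bookkeeping. The step that does not go through as written is the one you call the crux: the a priori energy bound.

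You justify that bound by saying ``$r$ is bounded on the reachable set,'' but the hypotheses give no such bound. A Lipschitz, even concave, reward such as $r(x)=\langle a,x\rangle$ is unbounded, and the set reachable with arbitrary $L^2$ controls is unbounded. Restricting to an energy-bounded reachable set presupposes the bound you are trying to prove. From $J_D(u_D^\star(N))\le J_D(\widetilde U^0)$ you only get $\frac{1}{2\lambda}\|u_D^\star(N)\|^2\le J_D(\widetilde U^0)+r(x_N)$, and the right-hand side still depends on $\|u_D^\star(N)\|$.

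What is needed is coercivity, as in the paper. Gr\"onwall gives $\|x_1^u-x_0\|\le e^{L_b}(B_0+\|u\|_{L^2})$ with $B_0=\sup_t\|b_t(x_0)\|$, so $r(x_1^u)$ grows at most linearly in $\|u\|_{L^2}$. Young's inequality then yields $J_C(u)\ge\frac{1}{4\lambda}\|u\|_{L^2}^2-\beta_r$, and the identical bound holds for $J_D$ along the Euler recursion. This gives $\|u_C^\star\|_{L^2}^2\le R_\star:=4\lambda(J_C(u^0)+\beta_r)$. After transferring and repairing $u^0$, it also gives $\|u_D^\star(N)\|^2\le R_\star+4\lambda C_{\mathrm{disc}}\Delta t$.

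Your instruction to ``fix $E$ larger than the energy of the transferred controls'' is circular, and the paper breaks the circularity as follows:
\begin{enumerate}
\item Fix the working ball $\overline R=R_\star+1$ first.
\item Compute $C_0$ and $C_{\mathrm{disc}}$ on that ball.
\item Take $N_1$ large enough that both $L_hC_0\Delta t\le\rho_0$ and $4\lambda C_{\mathrm{disc}}\Delta t\le 1$.
\end{enumerate}
Then $u_D^\star(N)$ provably lies in the ball where your constants hold. The same comparison with $\widetilde U^0$, together with closedness of the feasible set and coercivity, also gives existence of $u_D^\star(N)$ for $N\ge N_1$. Your proposal takes that existence for granted.
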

Lemma~\ref{lem:discretization-error-main}'s proof is provided in Appendix~\ref{app:proof-discretization}. In Lemma~\ref{lem:discretization-error-main}, we show that the objective gap between the discrete- and continuous-time problems,~\eqref{eq:discrete-problem} and~\eqref{eq:continuous-problem}, decays to zero at a rate $O(1/N)$. 

\paragraph{Path-space constrained inference.} Although~\eqref{eq:discrete-problem} is finite-dimensional, solving it
directly requires a full-horizon constrained nonlinear optimization over all
$N$ control inputs, with repeated evaluations of the pretrained flow map.
For high-dimensional flow maps, this can be computationally prohibitive at
inference time. We therefore rewrite Problem~\eqref{eq:discrete-problem} in terms of the generated
trajectory rather than the control sequence. The control input $u_k$ in~\eqref{eq:discrete-problem} is directly expressed by $u_k
= (x_{k+1}-x_k)/{\Delta t}
-
b_{t_k}(x_k)$. Thus, we can remove the explicit flow dynamics in~\eqref{eq:discrete-problem} and introduce a discrete-time path-space problem as follows.
\begin{equation}
\tag{P-DP}
\label{eq:path-space-problem}
\begin{aligned}
\min_{x_{1:N}}
\quad & J_D(u) 
    \; \DefinedAs \;
\sum_{k\,=\,0}^{N-1}
\frac{1}{2\lambda\Delta t}
\left\|
x_{k+1}
-
x_k
-
\Delta t\,b_{t_k}(x_k)
\right\|^2
-
r(x_N)
\\
\mathrm{s.t.}
\quad &
h(x_N)\leq 0, \quad
    \text{given } x_0.
\end{aligned}
\end{equation}
Like Problem~\eqref{eq:discrete-problem},~\eqref{eq:path-space-problem} remains finite-dimensional, but it does not include constraints from the flow dynamics. Each stage cost of~\eqref{eq:path-space-problem} measures the discrepancy between the optimized displacement
$x_{k+1}-x_k$ and the pretrained flow drift
$\Delta t\,b_{t_k}(x_k)$.
Thus, path-space optimization can be interpreted as deforming the nominal
generative trajectory while penalizing the magnitude of that
deformation. However, solving~\eqref{eq:path-space-problem} still requires optimizing the entire $N$-step generation
trajectory. For flow matching models where the vector field is typically given by a deep neural network, repeatedly optimizing such a full path requires
a large number of evaluations of the high-dimensional vector field. This becomes more
costly as the number of sampling steps grows. To address this challenge, we will establish a receding-horizon primal-dual approach that allows us to retain the reward and constraint structure of the full inference
problem~\eqref{eq:continuous-problem} while optimizing a short portion of the future trajectory at each
time step during inference.

\section{Receding-horizon primal-dual}
\label{sec:methodology}

To make~\eqref{eq:path-space-problem} more tractable, we first propose to replace the whole
trajectory with a local subset of it.
At generation step $j$, starting from current state $x_j=\bar{x}_j$, we
select next $H_j$ states and optimize over them: $(x_{j+1}, \ldots, x_{j + H_j})$, where $H_j \ll N$. We apply the first induced control, and then repeat the optimization
from the resulting state.
This produces a receding-horizon procedure analogous to model predictive
control~\citep{garcia1989model,mayne2000constrained,alonso2022distributed}, but expressed directly in the path space of the pretrained flow. 

\paragraph{Where should the terminal objective and constraint be evaluated?} Denote a local path $ x_{j+1:j+H_j} = (x_{j+1},\ldots,x_{j+H_j})$ and abbreviate it as $\mathbf{x}_j$. Restricting the optimization horizon introduces an important design choice.
A direct truncation evaluates the reward and constraint at the end of the
local optimization window:
\begin{equation}
\tag{RH-H}
\label{eq:local-terminal-rh}
\begin{aligned}
\min_{\mathbf{x}_j}
\quad &
\sum_{\ell \,=\, j}^{j+H_j - 1}
\frac{1}{2\lambda\Delta t}
\left\|
x_{\ell+1}
-
x_\ell
-
\Delta t\,
b_{t_{\color{pennred}\ell}}(x_\ell)
\right\|^2
-
r(x_{j + H_j})
\\
\mathrm{s.t.}
\quad &
h(x_{j + H_j})\leq 0 .
\end{aligned}
\end{equation}
Formulation~\eqref{eq:local-terminal-rh} is computationally attractive, since it involves shorter paths to optimize. Meanwhile,~\eqref{eq:local-terminal-rh} relaxes the
original inference problem~\eqref{eq:continuous-problem}: the terminal reward and feasibility requirement
are now imposed at time $t_{j+H_j}$ rather than at the final generation time. Once the local cost has been optimized at $\mathbf{x}_j^\star$, the first residual intervention is
recovered as $u_j^\star
=
(x_{j + 1}^\star-\bar{x}_j)/\Delta t
-
b_{t_j}(\bar{x}_j)$. Only this first intervention is executed before shifting the window forward by one step and replanning~\citep{mayne2000constrained,alonso2022distributed}.

A distinctive feature of our setting is that the pretrained flow itself provides a natural prediction of how a locally optimized state would evolve up to the true terminal time. Rather than evaluating reward and feasibility prematurely at $x_{j+H_j}$, we can propagate this state under the uncontrolled flow and evaluate the original terminal objective at the resulting prediction. This brings each truncated subproblem closer to the full-horizon problem~\eqref{eq:path-space-problem} without extending the control horizon.

To make this precise, we introduce the discrete flow map induced by the pretrained flow, $X_{i,j}:\mathbb{R}^d\to\mathbb{R}^d$, defined recursively by $X_{i,i}(x)=x$ and $X_{i,k+1}(x) = X_{i,k}(x) + \Delta t\, b_{t_k}\bigl(X_{i,k}(x)\bigr)$ for $k \ge i$, so that $X_{i,j}(x)$ is the state obtained by propagating $x$ from $t_i$ to $t_j$ under the pretrained flow. Given the endpoint $x_{j+H_j}$ of the local window, the terminal prediction is then $\widehat{x}_N = X_{j+H_j,N}(x_{j+H_j})$. Evaluating the terminal objective at $\widehat{x}_N$ in place of $x_{j+H_j}$ yields our terminal-rollout formulation, RH-T, described below. This gives the terminal-rollout formulation
\begin{equation}
\tag{RH-T}
\label{eq:terminal-rollout-rh}
\begin{aligned}
\min_{\mathbf{x}_j}
\quad &
\sum_{\ell \,=\, j}^{j+{H_j} - 1}
\frac{1}{2\lambda\Delta t}
\left\|
x_{\ell+1}
-
x_\ell
-
\Delta t\,
b_{t_{\ell}}(x_\ell)
\right\|^2
-
r\bigl(\widehat{x}_N\bigr)
\\
\mathrm{s.t.}
\quad &
h\bigl(\widehat{x}_N\bigr)\leq 0 .
\end{aligned}
\end{equation}
Compared with~\eqref{eq:local-terminal-rh},~\eqref{eq:terminal-rollout-rh}
truncates only the control horizon while retaining the terminal objective and
constraint. Our analysis in Section~\ref{subsec:theoretical-analysis_suboptimality} identifies when this yields a tighter guarantee; the
comparison is detailed in Remark~\ref{rem:rhh-rht-comparison}.
\paragraph{Receding-Horizon Primal-Dual Approach.}
Directly solving a new hard-constrained nonlinear program at every generation
step itself is expensive.
We therefore introduce a primal--dual approach in which terminal constraints
are represented by dual variables
$\mu\in\mathbb{R}_+^m$. To unify our analysis, we represent an approximate terminal state by $\Phi_j(x)$, which is $x$ or $X_{j+{H_j},N}(x)$ for~\eqref{eq:local-terminal-rh} and~\eqref{eq:terminal-rollout-rh}, respectively. 
{Denote $\mathcal{E}_j(\mathbf{x}_j) \DefinedAs \sum_{\ell \,=\, j}^{j+{H_j} - 1}
\frac{1}{2\lambda\Delta t}
\left\|
x_{\ell+1}
-
x_\ell
-
\Delta t\,
b_{t_{\ell}}(x_\ell)
\right\|^2$ and $g_j(\mathbf{x}_j) \DefinedAs h\bigl(\Phi_j(x_{j + H_j})\bigr)$.} {For each $j$, we have a frozen local problem,
\begin{equation}\tag{$\text{RH}_j$}
    p_j^\star(\bar x_j) 
    \;\, \DefinedAs \;\, 
    \min_{\mathbf{x}_j} \;\; 
    \big\{
    \varphi_j(\mathbf{x}_j; \bar x_j)
    \DefinedAs \mathcal{E}_j(x_{j:j +H_j}) - r\bigl(\Phi_j(x_{j + H_j})\bigr)
    \big\}
    \quad
    \mathrm{s.t.}
    \quad
    g_j(\mathbf{x}_j) \leq 0.
 \label{eq_RH_j}
\end{equation}
The Lagrangian for~\eqref{eq_RH_j} is given by $\mathcal{L}_j(\mathbf{x}_j,\mu; \bar x_j)
=
\varphi_j(\mathbf{x}_j;\bar x_j)+ \mu^\top g_j(\mathbf{x}_j)$. Define the dual function by $d_j(\mu;\bar x_j) 
    \DefinedAs 
    \min_{\mathbf{x}_j} \mathcal{L}_j(\mathbf{x}_j,\mu;\bar x_j)$. The associated dual problem is
$\max_{\mu\,\geq\,0} d_j(\mu;\bar x_j)$,
whose solution set is denoted by
$\mathcal{M}_j^\star(\bar x_j)$. At each receding-horizon window, we hold the current state
$\bar x_j$ fixed and alternate primal minimization with projected
dual ascent: $\mathbf{x}_j^{(s)}
\in \arg\min_{\mathbf{x}_j}
\mathcal{L}_j(\mathbf{x}_j,\mu_j^{(s)};\bar x_j)$ and $\mu_j^{(s+1)}
=
\bigl[\mu_j^{(s)}
+\eta_\mu g_j(\mathbf{x}_j^{(s)})\bigr]_+$, where $\eta_\mu>0$ is the dual step size. The primal step optimizes the planned path for the current
multipliers, while the dual step increases the weight of violated
constraints and decreases that of strictly satisfied constraints. After $K$ dual updates, we recompute the primal response at
$\mu_j^{(K)}$ and execute only its first transition.
We then advance the window and warm-start its multiplier using
$\mu_{j+1}^{(0)}=\mu_j^{(K)}$.
This transfers the learned constraint weights between successive
windows without assuming that their optimal multipliers coincide.
We summarize the receding-horizon inference procedure in
Algorithm~\ref{alg:rh-path-space} and establish its theoretical guarantees for solving the constrained inference problem~\eqref{eq:continuous-problem} in Section~\ref{sec:theoretical-analysis}.

\begin{algorithm}[t]
\caption{Receding-Horizon Primal-Dual Algorithm}
\label{alg:rh-path-space}
\begin{algorithmic}[1]
\Require Pretrained flow $b_t$, reward $r$, constraints $h$, 
horizon $H$, local optimization steps $K\geq 1$, dual step size $\eta_\mu$, a choice of~\eqref{eq:local-terminal-rh} and~\eqref{eq:terminal-rollout-rh} through the terminal map $\Phi_j$.

\State Initialize $x_0$ and $\mu_0^{(0)} \geq 0$

\For{$j=0,\ldots,N-1$}

    \State Set $H_j \gets \min\{H,N-j\}$, $\bar{x}_j \gets x_j$ and initial $\mu_j^{(0)}\geq 0$.

    \For{$s=0,1,\ldots,K-1$}

        \State Compute the primal response: $\mathbf{x}_j^{(s)} 
        \;\gets\; 
        \argmin_{\mathbf{x}_j} \mathcal{L}_j (\mathbf{x}_j, \mu_j^{(s)}; \bar x_j) $.

        \State Perform the projected dual update: $\mu_j^{(s+1)} 
        \;\gets\;
        \Big[
        \mu_j^{(s)}+\eta_\mu
        g_j(\mathbf{x}_j^{(s)})
        \Big]_+$.
    \EndFor
    \State Compute the primal response at $\mu_j^{(K)}$: $\mathbf{x}_j^{(K)} 
    \;\gets\; 
    \argmin_{\mathbf{x}_j} \mathcal{L}_j (\mathbf{x}_j, \mu_j^{(K)}; \bar x_j)
    $.
    
    \State Execute the first state of $\mathbf{x}_j^{(K)}$: $x_{j+1} 
    \gets
    x_{j+1}^{(K)}$, and recover the residual control: $u_j 
    \gets
    ({x_{j+1}^{(K)}-\bar{x}_j})/{\Delta t}
    -
    b_{t_j}(\bar{x}_j)$.
   
    \State If $j<N-1$, warm start the next physical window with $\mu_{j+1}^{(0)} = \mu_j^{(K)}$. 
\EndFor

\noindent\Return The executed path $(x_0, \ldots, x_N)$ and residual controls $(u_0,\ldots,u_{N-1})$.
\end{algorithmic}
\end{algorithm}

\section{Theoretical Analysis}
\label{sec:theoretical-analysis}
To bridge the original continuous-time problem and the implementable
Algorithm~\ref{alg:rh-path-space}, we proceed in two steps.
First, we bound the approximation error of~\eqref{eq:local-terminal-rh} and~\eqref{eq:terminal-rollout-rh}
relative to the full discretized problem~\eqref{eq:discrete-problem}.
We then bound the error introduced by finitely solving the local
primal--dual problem~\eqref{eq_RH_j}. Combined with
Lemma~\ref{lem:discretization-error-main}, these results connect
Algorithm~\ref{alg:rh-path-space} to the original continuous formulation
~\eqref{eq:continuous-problem}.

\subsection{Bounding the Suboptimality}
\label{subsec:theoretical-analysis_suboptimality}

We first quantify the error introduced by restricting the control horizon.
Specifically, we compare the idealized constrained trajectories of~\eqref{eq:local-terminal-rh} and ~\eqref{eq:terminal-rollout-rh}
with the optimal solution $\bx^\star$ of the full discretized
problem~\eqref{eq:discrete-problem}. Both approximations optimize only $H_j$
future steps, but differ in where the terminal reward and constraint are
evaluated.

Throughout this section, Assumption~\ref{ass:main-discretization} remains in
force. For the receding-horizon comparison, we additionally impose the
following structure. Unlike the continuous-to-discrete analysis, terminal
feasibility here is handled through a uniform Slater condition rather than
local repairability.

\begin{assumption}[Affine flow and bounded amplification]
\label{ass:rh-affine}
The discretized flow is affine,
$x_{k+1}=x_k+\Delta t(A_kx_k+c_k+u_k)$.
On the relevant reachable set, $\|A_kx+c_k\|\leq M$, and further let 
$
\kappa_A
\; \DefinedAs \;
\sup_{0 \, \leq \, k \, \leq \, \ell \, \leq \, N}\;
\left\|
\prod_{s=k}^{\ell-1}(I+\Delta t A_s)
\right\|
\; < \;
\infty,
$
with $M$ and $\kappa_A$ independent of the horizon.
\end{assumption}

\begin{assumption}[Convexity and uniform feasibility]
\label{ass:rh-constrained}
The reward $r$ is concave and $L_r$-Lipschitz, and $h$ is componentwise
convex and $L_h$-Lipschitz. Either subproblem~\eqref{eq:local-terminal-rh} or~\eqref{eq:terminal-rollout-rh} admits a strictly
feasible trajectory with uniform margin $\gamma>0$ and objective at most
$\overline J_D$.
\end{assumption}

Let $\underline J_D$ be a uniform lower bound on the objectives of \eqref{eq:local-terminal-rh} or~\eqref{eq:terminal-rollout-rh} and define
$
\chi_h
\DefinedAs
{L_h(\overline {J_D}-\underline {J_D})}/{\gamma}
$ 
and 
$
G_h
\DefinedAs
L_r+\chi_h.
$
Here, $G_h$ represents the effective terminal sensitivity, with $L_r$
capturing reward variation and $\chi_h$ the additional sensitivity required
to maintain feasibility.

We first consider ~\eqref{eq:local-terminal-rh}, which evaluates reward and feasibility at the end of
the truncated horizon. Consequently, its approximation error depends on how
far the nominal flow can move over the omitted portion of the trajectory,
captured by the drift bound $M$.

\begin{lemma}[Constrained RH-H bound]
\label{lem:c-rhh-affine}
Let Assumptions~\ref{ass:main-discretization}--\ref{ass:rh-constrained} hold, and 
$M<\infty$, $G_h<\infty$, and $\gamma>0$. Then, the optimal trajectory $\bx^{H,\star}$ of~\eqref{eq:local-terminal-rh} satisfies
$$
0
\; \leq \;
J(\bx^{H,\star})-J(\bx^\star)
\; \leq \;
C_H^{\rm C}\frac{N-H}{N}, 
\qquad
\text{ where } \; 
C_H^{\rm C}
\DefinedAs
2\max\left\{
G_hM+ \frac{1}{2}{\lambda G_h^2},
\frac{M^2}{2\lambda}
\right\}.
$$
\end{lemma}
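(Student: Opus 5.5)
The plan is to compare the two problems through an exact-penalty (Lagrangian) reformulation, so that the hard terminal constraint $h\le 0$ becomes a Lipschitz terminal cost with constant $G_h$. I read $J$ as the full discrete objective $J_D$ of~\eqref{eq:discrete-problem} written in path space as in~\eqref{eq:path-space-problem}. I read $\bx^{H,\star}$ as the full-length trajectory obtained by extending the RH-H solution over the omitted $N-H$ steps, in a way fixed in the second step below.

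\emph{First step: exact penalty.} Under Assumption~\ref{ass:rh-constrained}, both problems are convex: the dynamics are affine by Assumption~\ref{ass:rh-affine}, the energy is convex quadratic, $-r$ is convex and $h$ is convex. Slater's condition with margin $\gamma$ holds. The standard Slater bound on dual solutions then gives, for every $\mu^\star\in\mathcal M^\star$,
\[
\|\mu^\star\|_1\le \frac{\overline J_D-\underline J_D}{\gamma}.
\]
Hence the terminal map $x\mapsto -r(x)+\mu^{\star\top}h(x)$ is Lipschitz with constant at most $L_r+L_h\|\mu^\star\|\le G_h$. Each constrained problem therefore has the same optimal value as an unconstrained problem whose terminal cost $\psi$ is $G_h$-Lipschitz. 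This is the only place $\chi_h$ enters.

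\emph{Second step: lower bound and extension.} For the lower bound $0\le J(\bx^{H,\star})-J(\bx^\star)$, I extend the RH-H solution to a full trajectory that is feasible for~\eqref{eq:discrete-problem}. Its $N$-step objective is then at least $J(\bx^\star)$ by optimality of $\bx^\star$. The natural extension applies the control $u_k=-(A_kx_k+c_k)$ on steps $k\ge H$, which freezes the state. Then $x_N=x_H$ and the terminal evaluation point is unchanged, so terminal feasibility is preserved. The extra energy is at most
\[
(N-H)\,\Delta t\,\frac{M^2}{2\lambda}=\frac{M^2}{2\lambda}\cdot\frac{N-H}{N}.
\]
The alternative is to apply zero control on the tail. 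The state then drifts by at most $\Delta t(N-H)M$, which changes the penalized terminal cost by at most $G_hM(N-H)/N$. These two options produce the two entries of the maximum; I will use whichever extension the paper intends, and the bound covers both.

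\emph{Third step: upper bound.} I will build an RH-H-feasible competitor from $\bx^\star$ and bound its cost. One option truncates $\bx^\star$ at step $H$, which removes the tail energy. The terminal point then moves from $x_N^\star$ to $x_H^\star$, with $\|x_N^\star-x_H^\star\|\le \Delta t\sum_{k\ge H}\bigl(M+\|u_k^\star\|\bigr)$. By first-order optimality of the penalized problem, the optimal control is
\[
u_k^\star=-\lambda\,\Phi_{k+1,N}^{\top}\nabla\psi(x_N^\star).
\]
This gives $\|u_k^\star\|\le\lambda G_h$, up to the factor $\kappa_A$, which I expect can be absorbed or set to $1$ under the stated normalization. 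Combining these, the terminal-cost change is at most $G_h(M+\lambda G_h)(N-H)/N$. Part of this is offset by the saved tail energy $\tfrac{\lambda G_h^2}{2}(N-H)/N$, which leaves $\bigl(G_hM+\tfrac12\lambda G_h^2\bigr)(N-H)/N$. Adding the extension cost from the second step bounds the total gap by twice the maximum of the two terms, which is $C_H^{\rm C}(N-H)/N$.

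The main obstacle will be the third step, specifically justifying the uniform bound $\|u_k^\star\|\le\lambda G_h$ through the adjoint. That needs the dual bound from the first step to hold uniformly across problems, and it needs the propagators $\prod(I+\Delta tA_s)$ to be controlled without letting a stray $\kappa_A$ enter the constant. I will first try to derive the bound from the KKT stationarity conditions together with the Slater multiplier bound.
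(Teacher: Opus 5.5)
\textbf{Main gap: the proposal bounds a different trajectory.} In this lemma $\bx^{H,\star}$ is the exact \emph{closed-loop} receding-horizon trajectory. At every step $j$ the $H$-step problem is re-solved from the current state and only its first control is applied. The unconstrained twin, Lemma~\ref{lem:uc-rhh-affine}, says this explicitly, and Corollary~\ref{thm:error-decomposition} applies the lemma to $J(\bx^{\rm RH})$. You instead take one open-loop $H$-step solution from $x_0$ and pad it with frozen controls. Re-planning never enters your argument, so your bound says nothing about the trajectory Algorithm~\ref{alg:rh-path-space} produces.

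The missing idea is a telescoping argument along the closed loop. Write $J^{C,\star}_{j,L}(x)$ for the optimal value of the $L$-step constrained window problem started at $x$ at time $j$, set $W_j\DefinedAs J^{C,\star}_{j,H}$, and let $\tau=(N-H)/N$. Along the closed-loop states $x_j$ and applied controls $u_j$, Bellman's principle gives $W_j(x_j)=\frac{\Delta t}{2\lambda}\|u_j\|^2+J^{C,\star}_{j+1,H-1}(x_{j+1})$. Your freeze extension, used for a \emph{single} step, gives $W_{j+1}(x_{j+1})-J^{C,\star}_{j+1,H-1}(x_{j+1})\le\frac{M^2}{2\lambda}\Delta t$. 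Sum over $j<N-H$. For $j\ge N-H$ the window reaches time $N$, so the closed loop coincides with full re-optimization. This yields $J(\bx^{H,\star})\le W_0(x_0)+\frac{M^2}{2\lambda}\tau$, and your third step then has to supply $W_0(x_0)-J(\bx^\star)$.

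The lower bound comes from feasibility of the closed loop itself, since the last windows impose $h(x_N)\le 0$; no extension is needed. Your zero-control extension is in general infeasible, so it cannot give the lower bound anyway. Also, the two entries of the maximum do not come from two alternative extensions. They come from the two directions of the horizon comparison: truncation gives $G_hM+\frac12\lambda G_h^2$ and extension gives $\frac{M^2}{2\lambda}$.

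\textbf{Third step needs repair.} The truncated optimum $u^\star_{0:H-1}$ is not RH-H-feasible, because the constraint is now imposed at $x_H^\star$ rather than $x_N^\star$. Your exact-penalty idea can handle this, and is a legitimate alternative to the paper's Slater-mixing construction, if you use the multiplier $\mu_H^\star$ of the $H$-step problem on both sides:
\begin{itemize}
\item strong duality bounds $W_0(x_0)$ by the $H$-step Lagrangian evaluated at $u^\star_{0:H-1}$;
\item $h(x_N^\star)\le 0$ gives $J(\bx^\star)\ge J(\bx^\star)+\mu_H^{\star\top}h(x_N^\star)$;
\item the Slater bound on $\|\mu_H^\star\|_1$ makes $-r+\mu_H^{\star\top}h$ $G_h$-Lipschitz.
\end{itemize}

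The adjoint bound you call the main obstacle is neither available nor needed. Stationarity only gives $\|u_k^\star\|\le\lambda\kappa_A G_h$, using subgradients since $r$ and $h$ are merely Lipschitz. Nothing in the assumptions forces $\kappa_A=1$. Moreover, an upper bound on $\|u_k^\star\|$ does not justify subtracting exactly $\frac{\lambda G_h^2}{2}\tau$ as saved energy. Instead, let $E$ be the tail energy and use Cauchy--Schwarz: $\|x_N^\star-x_H^\star\|\le M\tau+\sqrt{\tau E}$. Then complete the square, $G_h\sqrt{\tau E}-\frac{E}{2\lambda}\le\frac{\lambda G_h^2}{2}\tau$. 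This gives $\bigl(G_hM+\frac12\lambda G_h^2\bigr)\tau$ with no pointwise bound on $u^\star$.
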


Thus,~\eqref{eq:local-terminal-rh} incurs a horizon error linear in the omitted fraction
$(N-H)/N$. Its constant depends on the absolute motion of the pretrained flow
through $M$, reflecting that reward and feasibility are evaluated before the
true terminal time.

\eqref{eq:terminal-rollout-rh} instead propagates the local endpoint to the original terminal time $N$ using
the pretrained flow map. This removes dependence on the absolute nominal
motion after the control horizon; the relevant quantity is instead how strongly
the remaining flow amplifies perturbations, measured by $\kappa_A$.

\begin{lemma}[Constrained RH-T bound]
\label{lem:c-rht-affine}
Let Assumptions~\ref{ass:main-discretization}--\ref{ass:rh-constrained} hold, and assume
$\kappa_A<\infty$, $G_h<\infty$, $\gamma>0$. Then, the optimal trajectory $\hat\bx^{H,\star}$ of~\eqref{eq:terminal-rollout-rh} satisfies
$$
0
\; \leq \;
J(\widehat{\bx}^{H,\star})-J(\bx^\star)
\; \leq \;
C_T^{\rm C}\frac{N-H}{N}, 
\qquad
\text{ where } \; 
C_T^{\rm C}
\DefinedAs
\frac{1}{2}
\kappa_A^2G_h^2 {\lambda}.
\vspace{-0.2cm}
$$
\end{lemma}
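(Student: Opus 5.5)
The plan is a perturbation argument comparing the full problem \eqref{eq:discrete-problem} with the single truncated problem \eqref{eq:terminal-rollout-rh}, taken with horizon $H$ from $x_0$. Here $J$ denotes the full discrete objective $J_D$ evaluated on the trajectory obtained by applying the optimal \eqref{eq:terminal-rollout-rh} controls on $[0,H)$ and zero control afterwards; its terminal state is exactly $\widehat{x}_N = X_{H,N}(x_H)$.

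\textbf{Lower bound.} Every trajectory feasible for \eqref{eq:terminal-rollout-rh} extends to a trajectory feasible for \eqref{eq:discrete-problem}: take zero control on the omitted steps. Its full objective then equals the \eqref{eq:terminal-rollout-rh} objective. Therefore $J(\widehat{\bx}^{H,\star}) \geq J(\bx^\star)$.

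\textbf{Upper bound.} Start from the optimal full solution $u^\star$. Build a candidate for \eqref{eq:terminal-rollout-rh} that keeps $u^\star_{0:H-1}$ and compensates for the discarded tail $u^\star_{H:N-1}$.

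\emph{Step 1 (effect of dropping the tail).} Under Assumption~\ref{ass:rh-affine}, the final state is affine in the controls:
\[
x_N^\star - \widehat{x}_N = \Delta t \sum_{k=H}^{N-1} \Phi_{k+1,N}\, u_k^\star,
\]
where $\Phi_{k+1,N}=\prod_{s=k+1}^{N-1}(I+\Delta t A_s)$. By Cauchy--Schwarz, with $E_{\rm tail}=\sum_{k\ge H}\Delta t\|u_k^\star\|^2/(2\lambda)$,
\[
\|x_N^\star-\widehat{x}_N\| \le \kappa_A \sqrt{\tfrac{N-H}{N}}\,\sqrt{2\lambda E_{\rm tail}}.
\]

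\emph{Step 2 (Lagrangian/sensitivity argument).} Rather than repairing feasibility explicitly, use duality. Assumption~\ref{ass:rh-constrained} gives convexity and a Slater point with margin $\gamma$. The standard bound then gives optimal multipliers of \eqref{eq:terminal-rollout-rh} satisfying $\|\mu^\star\|\le (\overline J_D-\underline J_D)/\gamma$. Strong duality lets me write the \eqref{eq:terminal-rollout-rh} value as the unconstrained minimum of $\mathcal{E}-r(\widehat{x}_N)+\mu^{\star\top} h(\widehat{x}_N)$. The terminal term $-r+\mu^{\star\top}h$ is then $G_h$-Lipschitz, since $L_r+\|\mu^\star\|L_h\le G_h$.

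Evaluate this Lagrangian at the candidate $u^\star_{0:H-1}$ and compare with $J(\bx^\star)$, which is at least the same Lagrangian evaluated on the full trajectory (because $\mu^{\star\top}h(x_N^\star)\le0$). The gap is at most
\[
G_h\|x_N^\star-\widehat{x}_N\| - E_{\rm tail} \;\le\; G_h\kappa_A\sqrt{\tfrac{N-H}{N}}\sqrt{2\lambda E_{\rm tail}} - E_{\rm tail}.
\]
Maximizing over $E_{\rm tail}\ge0$ gives $\tfrac12\kappa_A^2G_h^2\lambda\,\tfrac{N-H}{N}$, which is $C_T^{\rm C}(N-H)/N$. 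The earlier Lipschitz and regularity statements in Assumption~\ref{ass:main-discretization} are used only to make these quantities finite on the reachable set.

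\textbf{Main obstacle.} The delicate step is the multiplier transfer. The Lagrangian comparison mixes a multiplier of the truncated problem with the full optimum, and the Lipschitz bound must hold for the terminal maps involved. The uniform Slater assumption is exactly what makes $\|\mu^\star\|\le\chi_h/L_h$ valid, so this is where I would first check the argument. If that direct route fails, I would instead use the multiplier of \eqref{eq:discrete-problem}, which satisfies the same bound.
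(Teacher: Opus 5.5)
Your Step~2 is correct, and it differs from the paper only in how the infeasible zero-tail candidate is handled.

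\begin{itemize}
\item The paper repairs the candidate primally. It mixes it with the Slater trajectory at weight $\alpha=\delta_H/(\gamma+\delta_H)$ and pays $\alpha(\overline J-\underline J)\le\chi_h\kappa_A\sqrt{\tau_HE_H}$.
\item You instead evaluate the truncated problem's Lagrangian at the candidate and use the Slater bound $\|\mu^\star\|_1\le(\overline J_D-\underline J_D)/\gamma$.
\end{itemize}

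Both give the same effective constant $G_h=L_r+\chi_h$ and the same completed square. Your route avoids building a feasible point, at the price of invoking strong duality and the existence of a multiplier.

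The ``multiplier transfer'' you flag is not delicate. For every $\mu\ge0$ you have $J(\bx^\star)\ge J(\bx^\star)+\mu^\top h(x_N^\star)$, so the truncated problem's multiplier is exactly the right choice. Your fallback, by contrast, would not work. Inserting the multiplier of \eqref{eq:discrete-problem} into the truncated Lagrangian gives, by weak duality, only a lower bound on the truncated value, which is the wrong direction.

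The genuine gap is in the object you bound. You take $J(\widehat{\bx}^{H,\star})$ to be the cost of the open-loop plan from $x_0$ (window-optimal controls on $[0,H)$, zero afterwards), so it equals $\widehat J^{C,\star}_{0,H}(x_0)$. In the paper, $\widehat{\bx}^{H,\star}$ is the receding-horizon closed-loop trajectory: the window problem is re-solved at every $j$ from the current state, and only its first control is applied. Compare Lemma~\ref{lem:uc-rht-affine} and the role of $J(\bx^{\rm RH})$ in Corollary~\ref{thm:error-decomposition}.

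The closed-loop cost is not automatically bounded by the initial plan's value. For RH-H it is not, which is why that bound accumulates $C^{\rm C}_{\rm loc}\Delta t$ per step. What you are missing is the no-accumulation step:

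\begin{enumerate}
\item Let $\widehat W_j(x):=\widehat J^{C,\star}_{j,H}(x)$ for $j\le m:=N-H$.
\item Appending a zero control leaves the rollout endpoint $\widehat x_N$ unchanged, so $\widehat J^{C,\star}_{j+1,H}\le\widehat J^{C,\star}_{j+1,H-1}$.
\item By the principle of optimality, $\widehat W_j(\widehat x_j)=\frac{\Delta t}{2\lambda}\|\widehat u_j\|^2+\widehat J^{C,\star}_{j+1,H-1}(\widehat x_{j+1})\ge\frac{\Delta t}{2\lambda}\|\widehat u_j\|^2+\widehat W_{j+1}(\widehat x_{j+1})$.
\item Telescope over $j=0,\dots,m-1$ and use $\widehat W_m=J^{C,\star}_m$ (the window now reaches $N$). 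This gives $J(\widehat{\bx}^{H,\star})\le\widehat J^{C,\star}_{0,H}(x_0)$.
\end{enumerate}

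The same shifted-plan argument gives recursive feasibility. Hence the closed-loop trajectory is feasible for \eqref{eq:discrete-problem}, and the lower bound $J(\widehat{\bx}^{H,\star})\ge J(\bx^\star)$ holds for the right object. With this step placed before your Step~2, the proof is complete.
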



Both formulations thus achieve an $O((N-H)/N)$ approximation to the
full discretized problem, but their constants capture different properties of
the pretrained dynamics: \eqref{eq:local-terminal-rh} depends on nominal flow motion, whereas \eqref{eq:terminal-rollout-rh}
depends on perturbation amplification. The resulting comparison, including
conditions under which \eqref{eq:terminal-rollout-rh} yields the tighter guarantee, is detailed in
Remark~\ref{rem:rhh-rht-comparison}.

\begin{remark}[When is terminal rollout beneficial?]
\label{rem:rhh-rht-comparison}
For our constrained problem,
$\kappa_A\leq\sqrt{2}$ implies
$C_T^{\rm C}\leq C_H^{\rm C}$.
Thus, when the remaining pretrained flow does not strongly amplify
intermediate-state perturbations, terminal rollout gives the tighter guarantee.
The analogous unconstrained comparison is given in
Appendix~\ref{app:unconstrained-rh-bounds}.
Intuitively,~\eqref{eq:terminal-rollout-rh} is most useful when the state continues to evolve after
the control horizon, but this evolution remains predictable.
\end{remark}

\subsection{Convergence of Dynamic Primal--Dual}
\label{subsec:theoretical-analysis_convergence}

In Lemmas~\ref{lem:c-rhh-affine}--\ref{lem:c-rht-affine}, we assume that each
subproblem of~\eqref{eq_RH_j} is solved, exactly. We next quantify the error introduced by a
finite number of primal--dual updates in Algorithm~\ref{alg:rh-path-space}.  For the $j$th subproblem~\eqref{eq_RH_j}, we denote by $B_j$ a lower block diagonal matrix with off-diagonals $-(I_d+\Delta t A_{j+1})$, $S_j$ a row vector with first block $I_d+\Delta t A_j$, and $Q_j$ the linear part of the terminal map $\Phi_j$ ($Q_j=I$ for~\eqref{eq:local-terminal-rh}). 
Using the strong convexity and dual smoothness, we first show that the inner-loop primal-dual iterates (Lines~4--7 of~Algorithm~\ref{alg:rh-path-space}) converge to the optimal, sublinearly.

\begin{lemma}[Local primal--dual convergence]
\label{thm: subproblem convergence}
    Let Assumptions~\ref{ass:rh-affine}--\ref{ass:rh-constrained} hold for the $j$th frozen subproblem problem~\eqref{eq_RH_j}. 
    Then, if $0<\eta_\mu<2/L_{d,j}$, then the inner-loop primal-dual iterates (Lines~4--7 of~Algorithm~\ref{alg:rh-path-space}) converge their optima, i.e., $\mu_j^{(s)}\to\bar\mu_j\in\mathcal M_j^\star$ and $\mathbf{x}_j^{(s)}\to\mathbf{x}_j^\star$. Moreover, if $0<\eta_\mu\leq1/L_{d,j}$, then the gaps between the iterates $(\mathbf{x}_j^{(s)}, \mu_j^{(s)})$ and the optima are bounded by
    $$
    p_j^\star-d_j(\mu_j^{(s)})
    \;\leq\;
    \frac{R_j^2}{2\eta_\mu s},
    \;\qquad\;
    \|\mathbf{x}_j^{(s)}-\mathbf{x}_j^\star\|
    \;\leq\;
    \frac{R_j}{\sqrt{\sigma_j\eta_\mu s}},
    \;\qquad\;
    \|[g_j(\mathbf{x}_j^{(s)})]_+\|
    \;\leq\;
    \frac{L_{g,j}R_j}{\sqrt{\sigma_j\eta_\mu s}}
    $$
    where $R_j\DefinedAs\operatorname{dist}(\mu_j^{(0)},\mathcal M_j^\star)$, $\sigma_j \DefinedAs {\sigma_{\min}^2(B_j)}/(\lambda\Delta t)$ is the strong convexity modulus, $L_{g,j} \DefinedAs L_h\|Q_j\|$ and $L_{d,j} \DefinedAs {L_{g,j}^2}/{\sigma_j}$ are the Lipschitz constants of the constraint and dual functions. 
\end{lemma}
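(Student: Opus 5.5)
The plan is to treat the inner loop as projected gradient ascent on a smooth concave dual function, and then convert dual suboptimality into primal distance and constraint-violation bounds using strong convexity of the Lagrangian in $\mathbf{x}_j$.

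\textbf{Step 1: strong convexity of the primal.} Under Assumption~\ref{ass:rh-affine}, each residual $x_{\ell+1}-x_\ell-\Delta t(A_\ell x_\ell+c_\ell)$ is affine in $\mathbf{x}_j$, with $\bar x_j$ held fixed. Stacking these residuals gives $B_j\mathbf{x}_j - S_j^\top$-type constant terms, so
\[
\mathcal{E}_j=\tfrac{1}{2\lambda\Delta t}\|B_j\mathbf{x}_j-v_j\|^2 .
\]
Here $B_j$ is block lower bidiagonal with identity diagonal, hence invertible. This makes $\mathcal{E}_j$ $\sigma_j$-strongly convex with $\sigma_j=\sigma_{\min}^2(B_j)/(\lambda\Delta t)$.

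Next, $\Phi_j$ is affine, namely the identity or the composition of affine Euler steps with linear part $Q_j$. Under Assumption~\ref{ass:rh-constrained}, $-r\circ\Phi_j$ and $\mu^\top h\circ\Phi_j$ for $\mu\ge0$ are therefore convex. It follows that $\mathcal{L}_j(\cdot,\mu)$ is $\sigma_j$-strongly convex for every $\mu\ge0$, so the primal response $\mathbf{x}_j(\mu)$ is unique. Also, $g_j$ is $L_{g,j}=L_h\|Q_j\|$-Lipschitz.

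\textbf{Step 2: dual smoothness and strong duality.} The uniform Slater margin $\gamma>0$ in Assumption~\ref{ass:rh-constrained} gives zero duality gap, so $d_j^\star=p_j^\star$. It also gives a nonempty, bounded $\mathcal M_j^\star$. Danskin's theorem yields $\nabla d_j(\mu)=g_j(\mathbf{x}_j(\mu))$.

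For Lipschitz continuity of the gradient, fix $\mu,\mu'$. Add the two strong-convexity inequalities at the respective minimizers to get
\[
\sigma_j\|\mathbf{x}(\mu)-\mathbf{x}(\mu')\|^2\le (\mu'-\mu)^\top\bigl(g_j(\mathbf{x}(\mu))-g_j(\mathbf{x}(\mu'))\bigr)\le\|\mu-\mu'\|L_{g,j}\|\mathbf{x}(\mu)-\mathbf{x}(\mu')\| .
\]
Hence $\|\mathbf{x}(\mu)-\mathbf{x}(\mu')\|\le (L_{g,j}/\sigma_j)\|\mu-\mu'\|$, and so $\nabla d_j$ is $L_{d,j}=L_{g,j}^2/\sigma_j$-Lipschitz.

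This is the step I expect to be the main obstacle. $g_j$ is only convex and Lipschitz, not differentiable, so the two-point inequality has to be done with subgradients of the convex parts. Uniqueness of $\mathbf{x}(\mu)$ is what makes $d_j$ differentiable despite this, and I would verify that carefully first.

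\textbf{Step 3: dual rates.} The dual update is projected gradient ascent on the $L_{d,j}$-smooth concave $d_j$ over $\mathbb{R}^m_+$.

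For $\eta_\mu<2/L_{d,j}$, the standard Fejér-monotonicity argument with respect to $\mathcal M_j^\star$ gives convergence $\mu_j^{(s)}\to\bar\mu_j\in\mathcal M_j^\star$. Continuity of $\mu\mapsto\mathbf{x}(\mu)$ then gives $\mathbf{x}_j^{(s)}\to\mathbf{x}(\bar\mu_j)$. This limit equals $\mathbf{x}_j^\star$ because, under strong duality, the unique Lagrangian minimizer at any dual optimum is primal optimal.

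For $\eta_\mu\le1/L_{d,j}$, the textbook projected-gradient bound gives $p_j^\star-d_j(\mu^{(s)})\le R_j^2/(2\eta_\mu s)$.

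\textbf{Step 4: primal bounds.} Let $\mu^\star$ be the projection of the current dual iterate onto $\mathcal M_j^\star$. Quadratic growth of $\mathcal{L}_j(\cdot,\mu^{(s)})$ combined with the saddle-point inequality gives
\[
d_j(\mu^\star)-d_j(\mu^{(s)})\ \ge\ \tfrac{\sigma_j}{2}\|\mathbf{x}_j^{(s)}-\mathbf{x}_j^\star\|^2 .
\]
To see this, evaluate $\mathcal{L}_j(\mathbf{x}^\star,\mu^{(s)})\ge d_j(\mu^{(s)})+\tfrac{\sigma_j}{2}\|\mathbf{x}^\star-\mathbf{x}^{(s)}\|^2$, and use $\mathcal{L}_j(\mathbf{x}^\star,\mu^{(s)})\le p_j^\star$, which holds since $g_j(\mathbf{x}^\star)\le0$ and $\mu^{(s)}\ge0$. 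Plugging in Step 3 gives $\|\mathbf{x}_j^{(s)}-\mathbf{x}_j^\star\|\le R_j/\sqrt{\sigma_j\eta_\mu s}$.

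Finally, since $[g_j(\mathbf{x}^\star)]_+=0$ and $[\cdot]_+$ is nonexpansive,
\[
\|[g_j(\mathbf{x}^{(s)})]_+\|\le L_{g,j}\|\mathbf{x}^{(s)}-\mathbf{x}^\star\| ,
\]
which gives the violation bound.
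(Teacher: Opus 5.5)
Your proposal is correct and follows the paper's proof essentially step for step. The shared steps are: $\sigma_j$-strong convexity of $\mathcal{L}_j(\cdot,\mu)$; the $L_{g,j}/\sigma_j$-Lipschitz primal response obtained by adding the two quadratic-growth inequalities; Fej\'er monotonicity for $\eta_\mu<2/L_{d,j}$; the standard $R_j^2/(2\eta_\mu s)$ projected-gradient rate; and $\tfrac{\sigma_j}{2}\|\mathbf{x}_j^{(s)}-\mathbf{x}_j^\star\|^2\le p_j^\star-d_j(\mu_j^{(s)})$ followed by nonexpansiveness of $[\cdot]_+$. The obstacle you anticipate in Step 2 is not real: quadratic growth at the minimizer of a strongly convex function needs no differentiability of $g_j$. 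The paper also avoids Danskin (whose usual form needs a compact domain) and gets $\nabla d_j(\mu)=g_j(\mathbf{x}_j(\mu))$ directly from the two-sided estimate $|d_j(\nu)-d_j(\mu)-(\nu-\mu)^\top g_j(\mathbf{x}_j(\mu))|\le L_{d,j}\|\nu-\mu\|^2$.
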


We defer the proof of Lemma~\ref{thm: subproblem convergence} to Appendix~\ref{app:proof-subproblem-convergence}. 
In Lemma~\ref{thm: subproblem convergence}, we show that the primal-dual gap $p_j^\star-d_j(\mu_j^{(s)})$, and the primal gap $\|\mathbf{x}_j^{(s)}-\mathbf{x}_j^\star\|$ and constraint violation $\|[g_j(\mathbf{x}_j^{(s)})]_+\|$ decay to zero, sublinearly. This result shows that the primal-dual update (Lines~4--7 of~Algorithm~\ref{alg:rh-path-space}) solves the
subproblem~\eqref{eq_RH_j} up to an error $\mathcal{O}(1/\sqrt{s})$. We next quantify how these errors propagate through the receding-horizon trajectory, and establish the global convergence of~Algorithm~\ref{alg:rh-path-space} in Theorem~\ref{thm:finite-rh-error}; see Appendix~\ref{app:proof-finite-rh-error} for proofs. Denote the first state of the solution $\mathbf{x}_j^\star(\xi)$ to~\eqref{eq_RH_j}
initialized at $\xi$ as $\pi_j(\xi)$. We let the exact trajectory generated by $\pi_j$ be $x_{j+1}^{\rm RH} = \pi_j(x_j^{\rm RH})$ with $x_0^{\rm RH} = x_0$.

\begin{theorem}[Global convergence]
\label{thm:finite-rh-error}
    Let Assumptions~\ref{ass:rh-affine}--\ref{ass:rh-constrained} hold for the $j$th frozen subproblem problem~\eqref{eq_RH_j}. Then, if $0<\eta_\mu\leq1/\bar L_d$ and each inner-loop primal-dual update of Algorithm~\ref{alg:rh-path-space} take $K$ iterations, then the generated trajectory $x_j$ for $j=1,\ldots,N$ satisfies
$$
\|x_j-x_j^{\rm RH}\|
\leq
\frac{\bar R\,\Gamma_j(\bar\kappa)}
{\sqrt{\underline\sigma\,\eta_\mu K}},
\;\qquad\;
\|[h(x_N)]_+\|
\leq
\frac{L_h\bar R}
{\sqrt{\underline\sigma\,\eta_\mu K}}
$$
where
$\bar L_d \DefinedAs \max_jL_{d,j}$, 
$\bar R \DefinedAs \max_j\operatorname{dist}
(\mu_j^{(0)},\mathcal M_j^\star(x_j))$,
$\underline\sigma \DefinedAs \min_j\sigma_j$,
$
\kappa_j \DefinedAs
\frac{\|B_j^\top S_j\|}
{\lambda\Delta t\,\sigma_j}
$ 
$
\bar\kappa \DefinedAs \max_j\kappa_j
$, and
$
\Gamma_j(\bar\kappa) \DefinedAs \sum_{q\,=\,0}^{j-1}\bar\kappa^q.
$
\end{theorem}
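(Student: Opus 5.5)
The plan is to combine the per-window error from Lemma~\ref{thm: subproblem convergence} with a Lipschitz (sensitivity) bound on the exact receding-horizon policy $\pi_j$, and then unroll a perturbed recursion.

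\textbf{Per-window error.} At window $j$ the algorithm starts from $\bar x_j = x_j$, which may differ from $x_j^{\rm RH}$. It executes the first state of $\mathbf{x}_j^{(K)}$. Since $\eta_\mu\le 1/\bar L_d\le 1/L_{d,j}$, Lemma~\ref{thm: subproblem convergence} applied to the frozen subproblem at $x_j$ gives
\begin{equation*}
\|x_{j+1}-\pi_j(x_j)\|\;\le\;\|\mathbf{x}_j^{(K)}-\mathbf{x}_j^\star(x_j)\|\;\le\;\frac{\bar R}{\sqrt{\underline\sigma\,\eta_\mu K}} \;=:\; \varepsilon .
\end{equation*}
This uses $R_j\le\bar R$ and $\sigma_j\ge\underline\sigma$. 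The lemma's iterate index is $s=K$ because line 8 recomputes the primal response at $\mu_j^{(K)}$. The warm start only affects $R_j$, and this is absorbed by the definition of $\bar R$, which is measured at the actual states $x_j$.

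\textbf{Sensitivity of $\pi_j$.} Next I would show $\|\pi_j(\xi)-\pi_j(\xi')\|\le\kappa_j\|\xi-\xi'\|$. Under Assumption~\ref{ass:rh-affine}, the energy $\mathcal{E}_j$ is a quadratic $\frac{1}{2\lambda\Delta t}\|B_j\mathbf{x}_j - S_j^\top\xi - \text{const}\|^2$. Here $\xi$ enters linearly through the first block $(I+\Delta t A_j)\xi$, and $\mathcal{E}_j$ is $\sigma_j$-strongly convex in $\mathbf{x}_j$. The remaining terms, $-r\circ\Phi_j$ and $\mu^\top g_j$, are convex and do not depend on $\xi$.

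I would fix an optimal multiplier and use KKT/strong monotonicity of the Lagrangian subdifferential. The $\xi$-dependence of the gradient is the cross term $-\frac{1}{\lambda\Delta t}B_j^\top S_j^\top\xi$. Comparing optimality conditions at $\xi$ and $\xi'$ and using monotonicity of the convex parts then gives
\begin{equation*}
\sigma_j\|\mathbf{x}^\star_j(\xi)-\mathbf{x}^\star_j(\xi')\|\le \tfrac{1}{\lambda\Delta t}\|B_j^\top S_j\|\,\|\xi-\xi'\| .
\end{equation*}
Taking the first block yields the Lipschitz constant $\kappa_j$.

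This is the main obstacle. The multiplier in $\mathcal{M}_j^\star$ depends on $\xi$, so monotonicity has to be applied to the full saddle-point conditions rather than to a single fixed Lagrangian. Concretely, I would add the two variational inequalities for $(\mathbf{x},\mu)$ and $(\mathbf{x}',\mu')$. The terms $\mu^\top g(\mathbf{x}') - \mu^\top g(\mathbf{x})$ and their counterparts carry the right sign by complementary slackness and convexity of $g_j$, so the multiplier cross terms drop out. This is the standard argument for Lipschitz stability of strongly convex parametric programs with parameter-independent convex constraints.

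\textbf{Unrolling.} Write $e_j=\|x_j-x_j^{\rm RH}\|$, with $e_0=0$. Then
\begin{equation*}
e_{j+1}\le\|x_{j+1}-\pi_j(x_j)\|+\|\pi_j(x_j)-\pi_j(x_j^{\rm RH})\|\le \varepsilon+\bar\kappa e_j .
\end{equation*}
By induction $e_j\le\varepsilon\sum_{q=0}^{j-1}\bar\kappa^q=\varepsilon\,\Gamma_j(\bar\kappa)$, which is the first bound.

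\textbf{Terminal feasibility.} At the last window, $j=N-1$, we have $H_j=1$ and $\Phi_{N-1}$ is the identity, since $X_{N,N}=\mathrm{id}$. Hence $g_{N-1}(\mathbf{x}_{N-1})=h(x_N)$ and the executed $x_N$ is exactly the endpoint of $\mathbf{x}_{N-1}^{(K)}$. The constraint-violation bound of Lemma~\ref{thm: subproblem convergence} with $L_{g,N-1}=L_h$ then gives $\|[h(x_N)]_+\|\le L_h\bar R/\sqrt{\underline\sigma\eta_\mu K}$. This needs no propagation, because that final subproblem is posed at the actual state $x_{N-1}$.
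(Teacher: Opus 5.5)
Your proof is correct and has the same architecture as the paper's. It uses the per-window bound from the local primal--dual lemma at $s=K$, a $\kappa_j$-Lipschitz bound on $\pi_j$, the unrolled recursion $e_{j+1}\le\varepsilon+\bar\kappa e_j$, and a last-window argument using $H_{N-1}=1$.

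The one real difference is the sensitivity step. You work with the KKT systems at two states, with different multipliers $\mu,\mu'$, and remove the cross terms via $-\mu^\top g_j(\mathbf{x}')-\mu'^\top g_j(\mathbf{x})\ge 0$. This is valid: the Slater condition guarantees multipliers exist at both states, and convexity plus complementary slackness give the sign. It yields the same constant $\|B_j^\top S_j\|/(\lambda\Delta t\,\sigma_j)$.

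The paper avoids multipliers altogether, using the observation you already made that $g_j$ does not depend on the current state. Both exact optimizers therefore lie in one common convex feasible set. Strong convexity then gives the quadratic-growth inequality $\varphi_j(\mathbf{y};\xi)\ge\varphi_j(\mathbf{x}_j^\star(\xi);\xi)+\tfrac{\sigma_j}{2}\|\mathbf{y}-\mathbf{x}_j^\star(\xi)\|^2$ for every feasible $\mathbf{y}$. Apply it at $\xi$ with $\mathbf{y}=\mathbf{x}_j^\star(\zeta)$ and at $\zeta$ with $\mathbf{y}=\mathbf{x}_j^\star(\xi)$, then add. The reward terms cancel, and only $\frac{1}{\lambda\Delta t}\langle B_j^\top S_j(\xi-\zeta),\mathbf{x}_j^\star(\xi)-\mathbf{x}_j^\star(\zeta)\rangle$ survives.

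So the ``main obstacle'' you flagged does not arise. The paper's route needs no constraint qualification or subdifferential calculus at this step. Yours is the standard KKT-stability argument and keeps the saddle-point structure explicit.

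Your terminal-feasibility step invokes the lemma's violation bound with $Q_{N-1}=I$. The paper instead uses feasibility of $\pi_{N-1}(x_{N-1})$ plus Lipschitzness of $h$. These are the same computation.
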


In Theorem~\ref{thm:finite-rh-error}, we prove that the trajectories generated by Algorithm~\ref{alg:rh-path-space} converge to the exact trajectory induced by the policy $\pi_j$, in terms of the primal gap $\|x_j-x_j^{\rm RH}\|$ and the constraint violation $\|[h(x_N)]_+\|$. This implies that a finite iterations of primal-dual optimization contribute
an $O(1/\sqrt{K})$ error, modulated by the sensitivity
$\Gamma_j(\bar\kappa)$ of the exact policy.

\begin{corollary}[Error decomposition]
\label{thm:error-decomposition}
Let the assumptions
of Theorem~\ref{thm:finite-rh-error} hold. Suppose $J$ is $L_J$-Lipschitz on the relevant set. Then, the generated trajectory $\textbf{x}^{(K)}$ by Algorithm~\ref{alg:rh-path-space} satisfies 
$$
J(\bx^{(K)})- J_C(u_C^\star)
\; \leq \;
\frac{C_{\mathrm{disc}}}{N}
+
C_{\rm hor}\frac{N-H}{N}
+
\frac{L_J\sqrt{N}\,\bar R\,\Gamma_N(\bar\kappa)}
{\sqrt{\underline\sigma\,\eta_\mu K}},
$$
where $C_{\rm hor} \DefinedAs C_H^{\rm C}$ for \eqref{eq:local-terminal-rh} and
$C_{\rm hor} \DefinedAs C_T^{\rm C}$ for \eqref{eq:terminal-rollout-rh}.
\end{corollary}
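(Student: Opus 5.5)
We will prove Corollary~\ref{thm:error-decomposition} with a three-term telescoping decomposition. Let $\bx^{\rm RH}$ be the exact receding-horizon trajectory generated by $\pi_j$, and let $\bx^{H}$ be the idealized horizon-truncated trajectory of Lemma~\ref{lem:c-rhh-affine} or Lemma~\ref{lem:c-rht-affine}, which is $\bx^{H,\star}$ or $\widehat\bx^{H,\star}$. Let $\bx^\star$ be the optimum of~\eqref{eq:discrete-problem}. We write
\begin{align*}
J(\bx^{(K)})-J_C(u_C^\star)
&=\bigl[J(\bx^{(K)})-J(\bx^{\rm RH})\bigr]
+\bigl[J(\bx^{\rm RH})-J(\bx^\star)\bigr]
+\bigl[J_D(u_D^\star(N))-J_C(u_C^\star)\bigr].
\end{align*}
Here $J(\bx^\star)=J_D(u_D^\star(N))$, because~\eqref{eq:path-space-problem} is a reparametrization of~\eqref{eq:discrete-problem}. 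We then bound each bracket separately.

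The third bracket is at most $C_{\rm disc}/N$ by Lemma~\ref{lem:discretization-error-main}, taking $N\geq N_1$. The second bracket is at most $C_{\rm hor}(N-H)/N$ by Lemma~\ref{lem:c-rhh-affine} or Lemma~\ref{lem:c-rht-affine}, once we identify the trajectory whose cost those lemmas bound with $\bx^{\rm RH}$. This identification is the step that needs care. The lemmas speak of ``the optimal trajectory'' of the receding-horizon formulation, meaning the closed-loop trajectory in which one applies the first step of each window and replans. That trajectory is exactly $x^{\rm RH}_{j+1}=\pi_j(x^{\rm RH}_j)$. I will read the lemmas this way and state it explicitly. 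If the lemmas were instead meant for a single open-loop window, we would need an extra argument at this point.

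The first bracket uses the Lipschitz assumption on $J$ together with Theorem~\ref{thm:finite-rh-error}. Treat the trajectory as a vector in $\mathbb{R}^{Nd}$. Then
\[
|J(\bx^{(K)})-J(\bx^{\rm RH})|\leq L_J\Bigl(\sum_{j=1}^N\|x_j-x_j^{\rm RH}\|^2\Bigr)^{1/2}\leq L_J\sqrt{N}\max_{j}\|x_j-x_j^{\rm RH}\|.
\]
Theorem~\ref{thm:finite-rh-error} bounds each term by $\bar R\,\Gamma_j(\bar\kappa)/\sqrt{\underline\sigma\eta_\mu K}$. Since $\Gamma_j$ is a sum of nonnegative terms, it is nondecreasing in $j$, so $\Gamma_j\leq\Gamma_N$. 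This gives the third term of the corollary with the stated $\sqrt N$ factor.

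Adding the three bounds gives the claim. The main obstacle is the consistency of the objects compared, since $J$ is evaluated on executed paths, and this has two parts. First, the Lipschitz constant $L_J$ must be valid on a set containing both $\bx^{(K)}$ and $\bx^{\rm RH}$; the hypothesis ``on the relevant set'' supplies this. Second, $\bx^{(K)}$ may be slightly infeasible, so the comparison with the constrained optimum is only one-sided. This is why the corollary is stated as an upper bound, and the infeasibility itself is controlled separately by Theorem~\ref{thm:finite-rh-error}.
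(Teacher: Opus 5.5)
Your proposal is correct and matches the paper's argument. The paper likewise adds and subtracts $J_D(u_D^\star(N))$ and $J(\bx^{\rm RH})$, then applies Lemma~\ref{lem:discretization-error-main}, Lemmas~\ref{lem:c-rhh-affine}--\ref{lem:c-rht-affine} and Theorem~\ref{thm:finite-rh-error}; the $\sqrt{N}$ and $\Gamma_N(\bar\kappa)$ factors come, as you describe, from the stacked-vector Lipschitz bound and the monotonicity of $\Gamma_j$. Your reading of $\bx^{H,\star}$ and $\widehat{\bx}^{H,\star}$ as closed-loop trajectories is the one used in the appendix proofs, so identifying them with $\bx^{\rm RH}$ is justified.
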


The proof of Corollary~\ref{thm:error-decomposition} is straightforward by adding and subtracting $J_D\bigl(u_D^\star(N)\bigr)$ and $J(\bx^{\rm RH})$, and applying Lemmas~\ref{lem:discretization-error-main}--\ref{lem:c-rht-affine} and
Theorem~\ref{thm:finite-rh-error}. The bound separates the discretization error $O(1/N)$ and horizon bias $O((N-H)/N)$ from the
$O(1/\sqrt{K})$ finite-solver error. 

Taking $H=N$ eliminates the horizon bias $O((N-H)/N)$, and letting $K\to\infty$ for fixed $N$ eliminates the finite-solver error, leaving only the discretization error bounded by $O(1/N)$, which vanishes as $N\to\infty$, yielding global convergence to the continuous-time optimum.

\section{Experiments}\label{sec:experiments}
We evaluate ConicFlow in two regimes: \emph{unconstrained} reward-guided generation, where the objective trades terminal reward against control energy, and \emph{constrained} generation, where the sample must additionally satisfy hard terminal constraints. We compare against the state-of-the-art approach in each regime: (i) FMRG~\citep{huang2026guide} and (ii) HardFlow~\citep{li2026hardflow}. FMRG is a training-free, single-trajectory method that uses soft reward guidance, and it is theoretically near-optimal only when small deviations from the pretrained flow suffice. HardFlow enforces constraints at terminal time, but with a different methodology compared to our approach, and it has been shown to outperform projection-based alternatives. See Appendix~\ref{app: related work} for more discussion on related approaches.

\textbf{Unconstrained low-dimensional setting.} We first use a flow map model trained on 2D samples from the synthetic two moons dataset and consider two rewards, both taking values in $[0,1]$. The first reward increases closer to the upper moon. As shown in Figure~\ref{fig: pareto-lowdim} (left), all methods saturate the reward at small $\lambda$ (i.e., large control-energy penalties). On average, small deviations from the pretrained flow suffice to land trajectories on the upper moon. Both ConicFlow variants: \eqref{eq:local-terminal-rh} and~\eqref{eq:terminal-rollout-rh} achieve the best trade-off between terminal reward and control energy, closely matched by FMRG, which is theoretically near-optimal in the small-$\lambda$ regime~\citep{huang2026guide}.

\begin{figure}[H]
    \centering
    \includegraphics[width=0.37\textwidth]{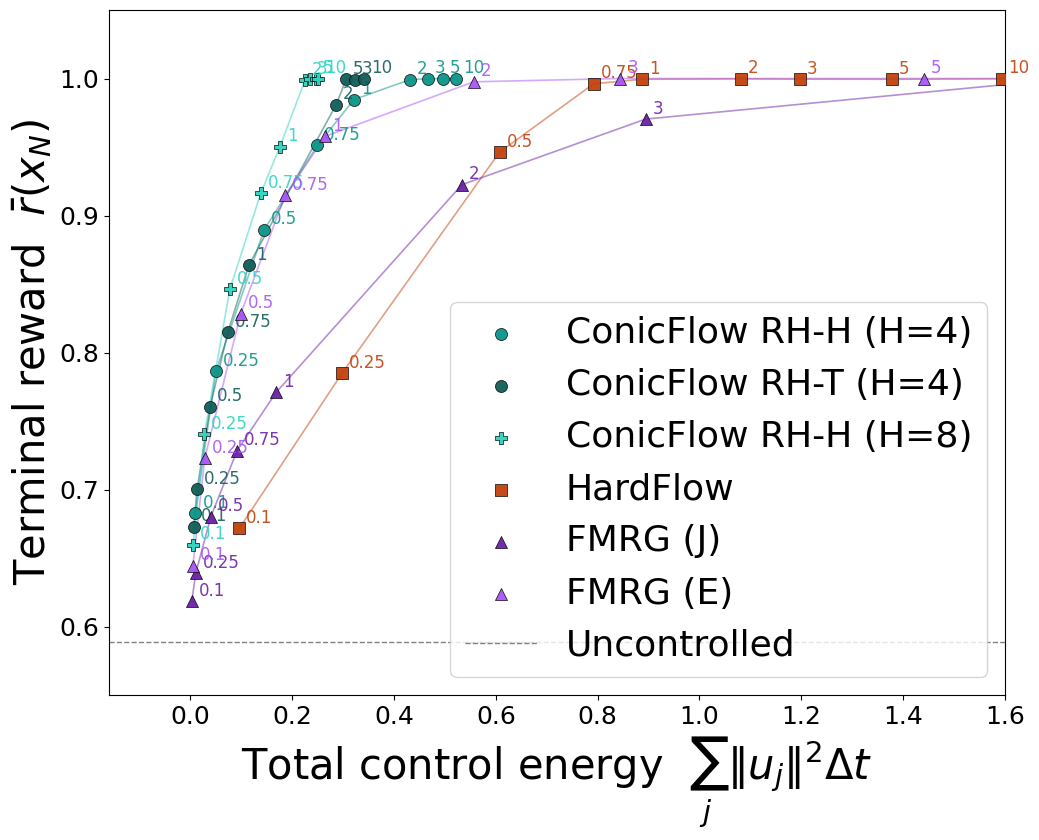}
    \includegraphics[width=0.37\textwidth]{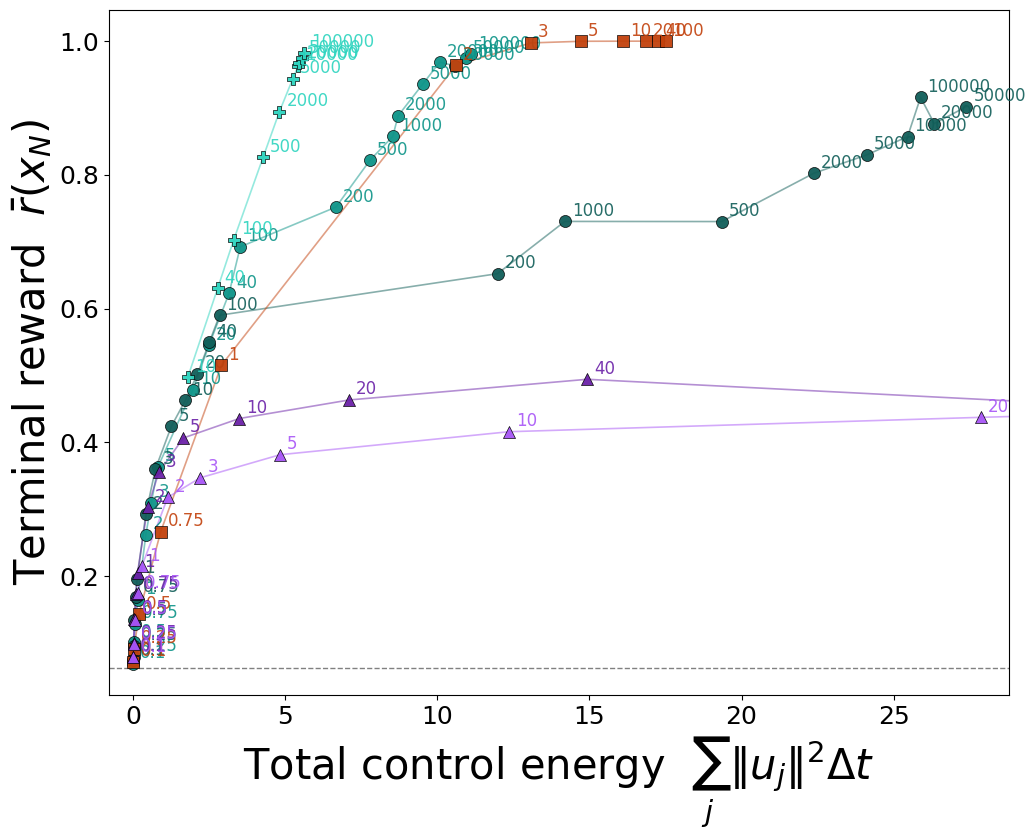}
    \raisebox{3.35mm}{\includegraphics[width=0.23\textwidth]{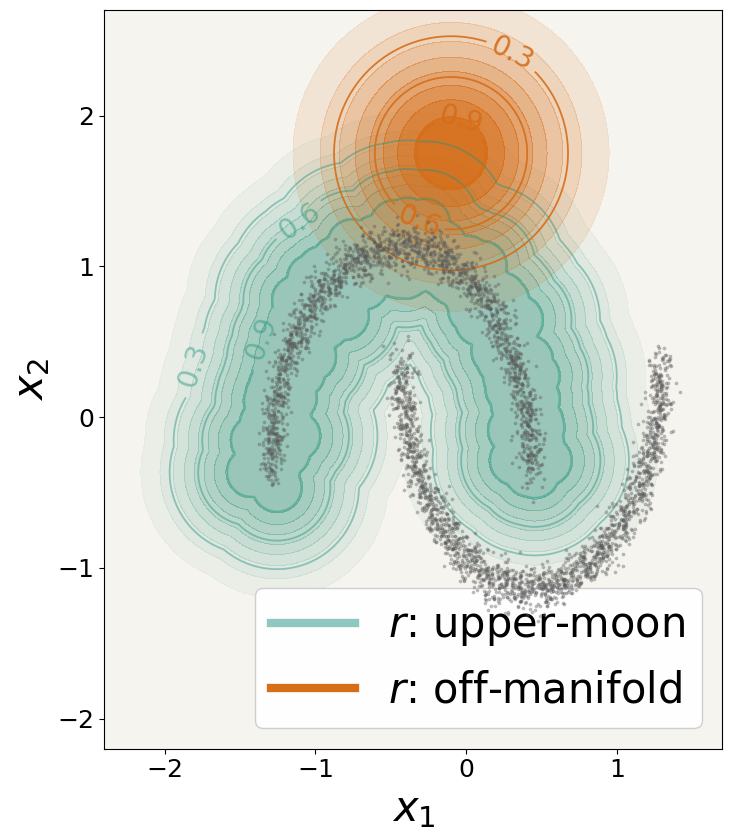}}
    \vspace{-0.5cm}
    \caption{Trade-off between terminal reward and total control cost, traced by varying the control-cost weight $\lambda$ (upper left is better). $\lambda$ value displayed on each point. \textbf{Left:} easy reward (upper moon). \textbf{Middle:} difficult reward (off-manifold). \textbf{Right:} uncontrolled sample distribution and both rewards.}
    \label{fig: pareto-lowdim}
\end{figure}

To study settings that need large deviations (thus large $\lambda$), the second reward peaks off the pretrained data manifold (middle in Figure~\ref{fig: pareto-lowdim}). FMRG fails outside the small-$\lambda$ regime as its control energy increases without noticeable reward gains. ConicFlow \eqref{eq:local-terminal-rh} with $H=4$ matches HardFlow (and improves on it for small deviations), while increasing the horizon to $H=8$ improves the trade-off across the board, outperforming all baselines and saturating the reward with the least control energy. Neither \eqref{eq:local-terminal-rh} nor \eqref{eq:terminal-rollout-rh} consistently outperforms the other.

In short, FMRG does well for small $\lambda$ and HardFlow for larger $\lambda$. \emph{ConicFlow with a short horizon combines the strengths of FMRG and HardFlow across all $\lambda$, and improves further as $H$ increases.}

\textbf{Unconstrained image setting.} In high-dimensional, prompt-conditioned image generation, we use FLUX.1-dev with the flow-map LoRA released by~\citep{huang2026guide}, distilled from the base model as our flow matching model~\citep{boffi2025consistency}. We consider a well-established task: reward-guided image editing~\citep{li2026hardflow,kim2022diffusionclip,patashnik2021styleclip}. A base prompt $c$ generates the sample, while the reward $r(x;c')$ is evaluated against a \emph{different} edit prompt $c'$. The control must introduce the attribute named by $c'$ while deviating as little as possible from the original image. We use three commonly used rewards CLIP~\citep{radford2021learning}, ImageReward~\citep{xu2023imagereward}, and PickScore~\citep{kirstain2023pickapic}. The RH-T variant of ConicFlow was used across all image experiments as it led to better performance.

\begin{figure}[h]
  \centering
    \includegraphics[width=0.45\linewidth]{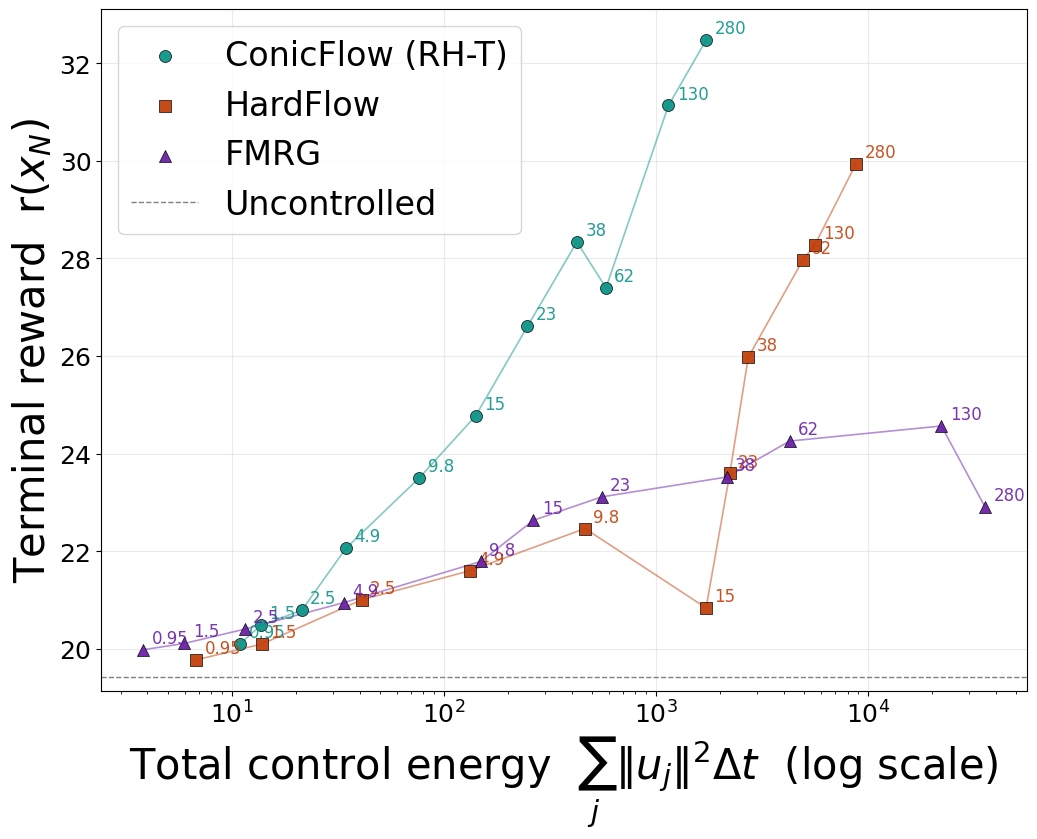}
    \qquad
    \raisebox{4.25mm}{\includegraphics[width=0.45\linewidth]{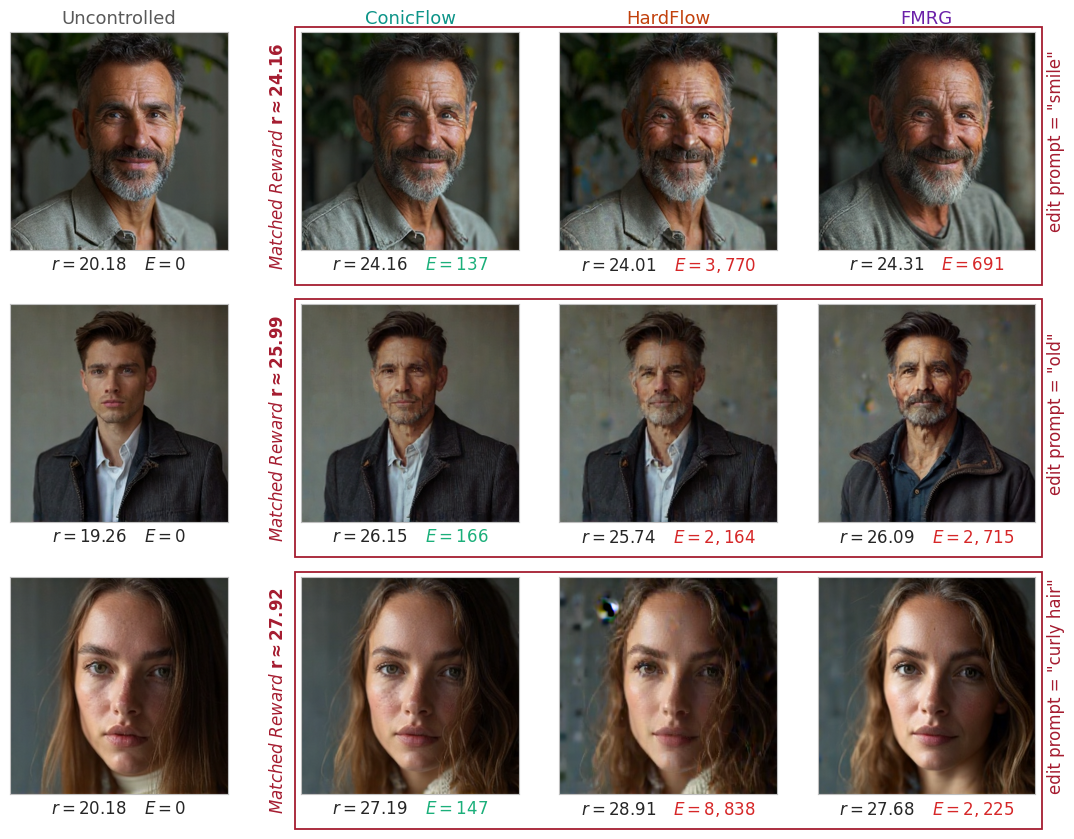}
    \label{fig:edit-rows-editing}}
    \vspace{-0.5cm}
  \caption{Trade-off between terminal reward and total control cost. \textbf{Left:} Reward vs.\ control energy on the editing task, traced by sweeping $\lambda$ (upper left is better). \textbf{Right:} Edits from each method at approximately matched reward; ConicFlow reaches the same reward with much lower control energy.
  }
  \label{fig:pareto-edit-pickscore}
\end{figure}

ConicFlow achieves the best trade-off between reward and control energy (Figure~\ref{fig:pareto-edit-pickscore}, left). At matched reward, its edits stay closer to the original image (Figure~\ref{fig:pareto-edit-pickscore}, right). Table~\ref{tab:objective-improvement} quantifies this across multiple rewards, seeds and prompts, each describing a new optimization problem.

\begin{table}[h]
\caption{{Mean reward gain over a shared energy budget.} Each method's
    reward--energy frontier, averaged over the control-energy range attained by
    all three methods and measured relative to the uncontrolled sample (units are those of rewards). Mean
    over $n{=}4$ (prompt, seed) pairs per reward.}
    \vspace{-0.2cm}
  \centering
  \small
  \begin{tabular}{lccc}
    \toprule
    Method & CLIP $(\uparrow)$ & ImageReward $(\uparrow)$ & PickScore $(\uparrow)$ \\
    \midrule
    ConicFlow & $\mathbf{0.130} \pm 0.018$ & $\mathbf{2.048} \pm 0.455$ & $\mathbf{6.158} \pm 0.728$ \\
    HardFlow  & $0.118 \pm 0.021$ & $1.279 \pm 0.374$ & $1.736 \pm 0.472$ \\
    FMRG      & $0.065 \pm 0.008$ & $1.278 \pm 0.504$ & $2.670 \pm 0.073$ \\
    \bottomrule
  \end{tabular}
  
  \label{tab:objective-improvement}
\end{table}

\paragraph{Constrained image setting.} We consider the same image editing task, but now enforce the desired attribute through a hard constraint $r(x;c') \geq \tau$, i.e., $h(x) = \tau - r(x;c') \leq 0$, using PickScore as $r$. In the multi-constraint setting, we impose one such constraint per edit prompt, $h_k(x) = \tau_k - r(x;c'_k) \leq 0$ for $k = 1,\dots,K$, requiring the sample to exhibit all $K$ attributes simultaneously. For each problem, the threshold is set to $\tau = m \cdot r(x^{\text{unc}};c')$, a multiple of the score achieved by the uncontrolled sample, with $m$ ranging from $1.1$ (easier) to $1.5$ (harder). We report the control energy and the feasibility rate, i.e., the fraction of (prompt, seed) problems whose final sample satisfies the constraint. Since FMRG has no mechanism for enforcing hard constraints, we do not include it in this comparison.

\begin{figure*}[h]
  \centering
  \begin{subfigure}[t]{0.3\textwidth}
    \vspace{0pt}
    \centering
    \includegraphics[width=\linewidth]{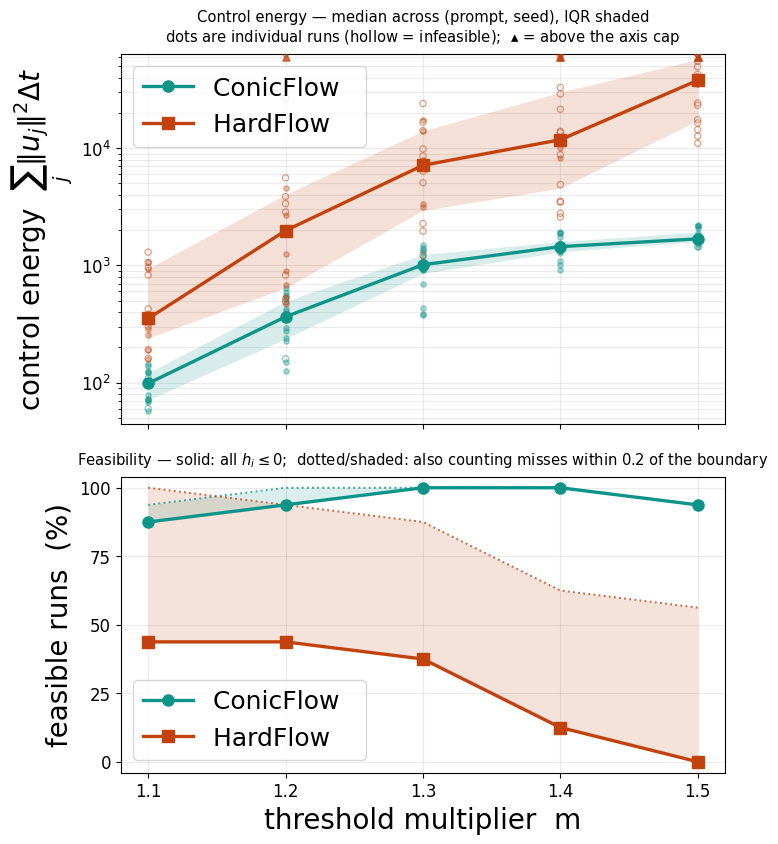}
    \phantomcaption\label{fig:edit-rows-left}
  \end{subfigure}\hfill
  \begin{subfigure}[t]{0.34\textwidth}
    \vspace{0pt}
    \centering
    \includegraphics[width=\linewidth]{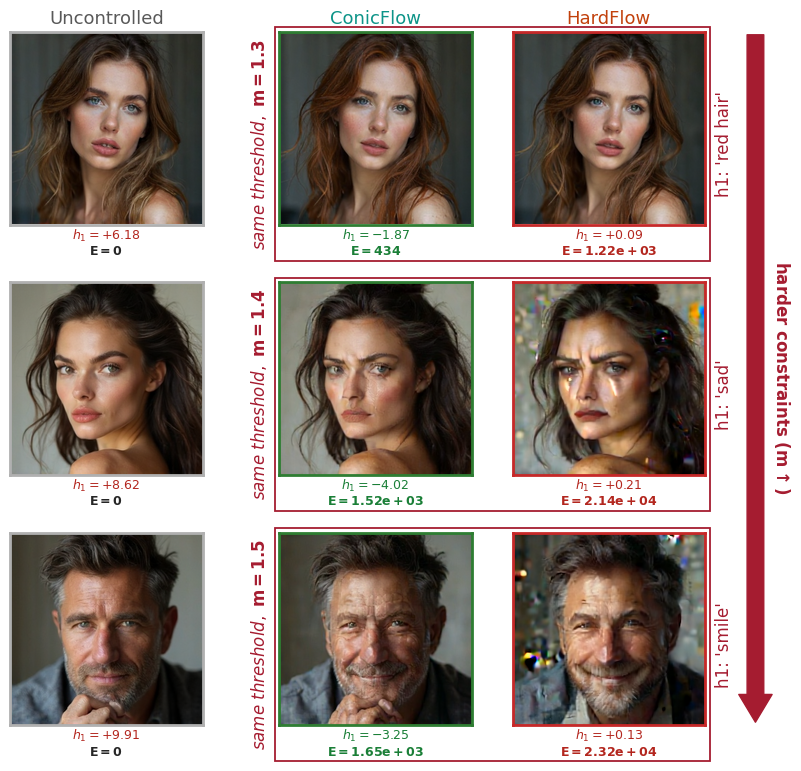}
    \phantomcaption\label{fig:edit-rows-middle}
  \end{subfigure}\hfill
  \begin{subfigure}[t]{0.34\textwidth}
    \vspace{0pt}
    \centering
    \includegraphics[width=\linewidth]{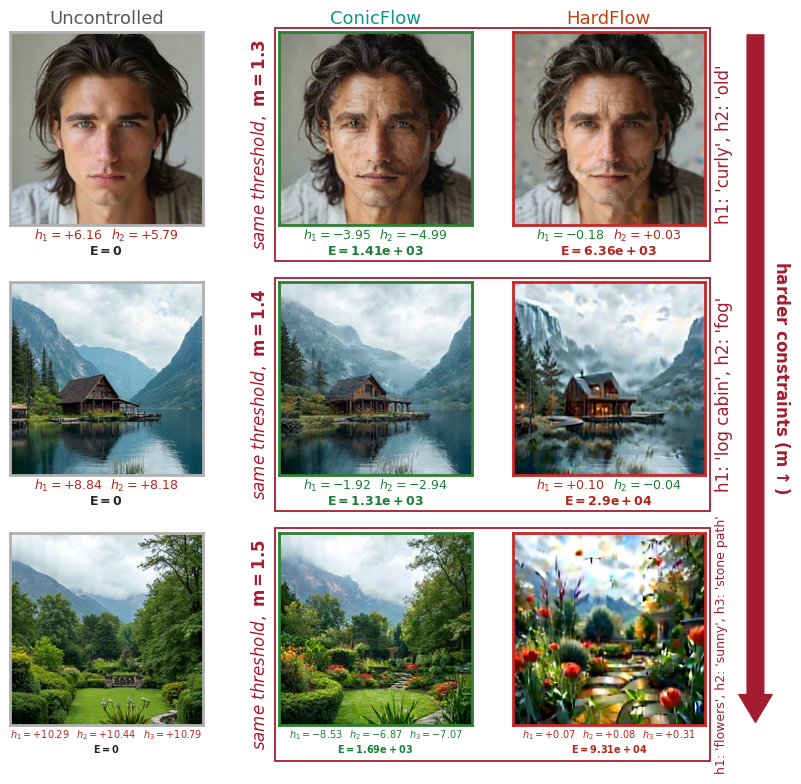}
    \phantomcaption\label{fig:edit-rows-right}
  \end{subfigure}
  \vspace{-0.7cm}
  \caption{ Constrained image editing comparison. \textbf{Top left:} control energy vs.\ threshold multiplier $m$, median over (prompt, seed) pairs.
  \textbf{Bottom left:} feasibility rate vs.\ $m$.
  \textbf{Middle:} single-constraint edits with increasing $m$.
  \textbf{Right:} multi-constraint edits with increasing $m$.}
  \label{fig:edit-rows}
\end{figure*}

As $m$ increases, both methods expend more control energy (Figure~\ref{fig:edit-rows}, top left), but ConicFlow satisfies the constraints with substantially less energy (up to $20\times$). ConicFlow also maintains a feasibility rate close to $100\%$, whereas HardFlow's drops as the threshold grows (bottom left). The examples demonstrate the same message: as $m$ increases, both methods express the edit concept more strongly, but HardFlow deviates increasingly from the uncontrolled image, eventually producing visible noise and artifacts. This is most pronounced in the multi-constraint setting (right). See Appendix~\ref{app: experiments} for additional implementation details and results.

\section{Conclusion}
We introduced ConicFlow, a receding-horizon primal-dual framework for constrained inference with pretraned flow matching models. This framework combines reward optimization, control-energy regularization, and explicit terminal constraints. We adapt receding-horizon control and propose a terminal-estimate variant that uses the pretrained flow map to evaluate the reward and constraints at the predicted terminal state rather than at the end of the truncated window. Our analysis quantifies discretization and horizon truncation  errors and establishes primal-dual convergence guarantees under convex-affine assumptions, including the propagation of local optimization errors across successive windows. Experiments on synthetic data and image editing demonstrate that this approach can improve the reward–control trade-off while maintaining high empirical feasibility.

\clearpage
\section*{Acknowledgment}
James Anderson is partially funded by NSF grants EECS 2144634 and CNS 2535097, and the Center of AI Technology (CAIT) in collaboration with Amazon.

\bibliography{dd-bib}
\bibliographystyle{iclr2027_conference}

\clearpage   
\appendix

\appendix

\startcontents[appendix]

\section*{Appendix Table of Contents}
\printcontents[appendix]{}{1}{
  \setcounter{tocdepth}{2}
}

\newpage
\section{Related Work}\label{app: related work}

Inference-time control of pretrained diffusion and flow matching models has developed along two largely separate lines: \emph{soft} guidance methods, which trade reward against fidelity to the pretrained dynamics but offer no feasibility guarantee, and \emph{hard}-constraint methods, which enforce feasibility, most often by projecting onto the constraint set during sampling. ConicFlow belongs to the second line. It is a hard-constrained sampling method whose terminal constraints are enforced through dual multipliers, and whose objective additionally accommodates a reward and a control-energy penalty so that feasible samples are ranked by preference and stay close to the pretrained dynamics. To evaluate it we compare against a recent state-of-the-art representative from each line: Flow Map Reward Guidance (FMRG) \citep{huang2026guide} for soft guidance, and HardFlow \citep{li2026hardflow} for hard-constrained sampling in the tradition of projection-based methods. We review both families below and then discuss the control-theoretic tools we build on.

\paragraph{Flow matching and flow maps.}
The pretrained models we steer are trained with flow matching \citep{lipman2022flow,liu2022flow} or, equivalently, stochastic interpolants \citep{albergo2025stochastic}, which learn a velocity field $b_t$ whose ODE transports a simple base distribution to the data distribution. Flow maps \citep{boffi2024flow,boffi2025consistency}, which directly predict the ODE solution between two arbitrary times, were introduced to accelerate sampling; we use a pretrained flow map both to integrate the dynamics and, in the RH-T variant, to predict the terminal sample from an intermediate state.

\paragraph{Soft guidance and reward-tilted sampling.}
Classifier guidance \citep{dhariwal2021diffusion} and classifier-free guidance \citep{ho2022classifier} steer diffusion models by adding a (learned or amortized) gradient of a conditional likelihood to the score. Training-free variants replace the classifier with the gradient of a reward evaluated at a one-step estimate of the clean sample, as in diffusion posterior sampling \citep{chung2023diffusion} and its flow-based counterparts. A second line differentiates through the sampling ODE itself: FlowGrad \citep{liu2023flowgrad} and D-Flow \citep{ben2024d} optimize the initial noise or the whole trajectory by backpropagating a terminal loss, and OC-Flow \citep{wang2025ocflow} recasts this as an optimal control problem solved via the Pontryagin maximum principle, with convergence guarantees in Euclidean space and on $\mathrm{SO}(3)$. OC-Flow is the closest precursor to our unconstrained formulation, and it shows that the KL divergence between the controlled and uncontrolled terminal distributions is bounded by the expected control cost, which justifies the quadratic energy term in (P-C); it does not, however, handle constraints, and solving the full-horizon adjoint problem is considerably more expensive than a receding-horizon scheme. Most recently, FMRG \citep{huang2026guide} reformulates guidance as \emph{deterministic} optimal control and shows that the flow map arises naturally in the optimal feedback, yielding a single-trajectory, training-free method that matches or surpasses multi-particle guidance schemes with as few as three function evaluations at text-to-image scale. FMRG is provably near-optimal in the small-$\lambda$ regime (small deviations from the pretrained flow), which is exactly the regime in which we observe it performing best (Section~\ref{sec:experiments}); it has no mechanism for enforcing hard terminal constraints, and its greedy, myopic control degrades when the reward lies far from the data manifold. We therefore use FMRG as the representative soft-guidance baseline, and, since it subsumes DPS-style methods as special cases, we do not compare against those separately.

\paragraph{Fine-tuning for reward alignment.}
An orthogonal approach adapts the model weights to a reward, via reinforcement learning \citep{black2024training,fan2023dpok} or stochastic optimal control \citep{domingo2025adjoint,uehara2024finetuning}. Fine-tuning amortizes the cost of steering over many samples but must be repeated for every new reward or constraint, can be prohibitively expensive for large models, and offers no per-sample feasibility guarantee. Our method is inference-time only and complementary to these approaches.

\paragraph{Hard-constrained sampling via projection.}
The dominant approach to hard constraints projects the iterate onto the feasible set at every (or every late) sampling step. Projected diffusion models \citep{christopher2024projected} interpret reverse diffusion as projected Langevin dynamics and require a projection operator for the constraint set at each step; \citet{zampini2025trainingfree} extend this to latent (Stable Diffusion) models and, notably, to copyright-style constraints on images, but still require a function measuring distance to the feasible set and a projection in latent space. Projected coupled diffusion \citep{luan2026pcd} applies the same per-step projection to the joint generation of correlated samples from several pretrained models. In robotics, diffusion predictive control with constraints \citep{romer2025dpcc} and DDAT \citep{bouvier2025ddat} project generated trajectories onto sets that are dynamically admissible for the robot, either at every step or only late in sampling; these methods are tailored to state--action constraints and system dynamics and are less directly comparable to the general terminal-constraint problem we consider. As several authors have noted \citep{li2026hardflow,kurtz2026ldf}, projecting the \emph{entire} sampling path is overly restrictive: intermediate iterates need not be feasible, exact projection onto a general nonlinear set is itself an optimization problem, and projecting inside an ODE produces a discontinuous, projected dynamical system that degrades sample quality.

\paragraph{Hard constraints via penalties and dual dynamics.}
Several recent works instead enforce terminal constraints through the sampling dynamics themselves. TOCFlow \citep{gao2026tocflow} formulates terminally constrained sampling as an optimal control problem, characterizes the value function through a Hamilton--Jacobi--Bellman equation, and shows that, as the control penalty vanishes, the terminal law converges to a generalized Wasserstein projection onto the constraint manifold $\{x:h(x)=0\}$. Algorithmically, however, constraints enter through a quadratic terminal penalty $\|h(x)\|^2$, so exact feasibility is recovered only in the vanishing-penalty limit; the theoretical analysis is formulated around a smooth zero-level-set constraint manifold. Lagrangian dual flows \citep{kurtz2026ldf} augment the generative ODE with a dual co-state (and a slack variable for inequalities) so that constraint violation is driven to zero along the flow without projections or inner optimization; the authors also show why quadratic penalties of the TOCFlow type are a poor substitute for genuine dual variables. This is conceptually close to our dynamic primal--dual scheme, but LDF flows the multiplier continuously alongside the sample with no notion of look-ahead, and has so far been demonstrated only on low-dimensional and MNIST-scale problems. Constrained diffusers \citep{zhang2025constrained} bring projected, primal--dual, and augmented-Lagrangian updates to reverse Langevin sampling for safe planning; they target a different, \emph{distribution-level} problem, $\max_{\lambda\ge 0}\min_q \mathrm{KL}(q\,\|\,p) + \lambda^\top \mathbb{E}_{x\sim q}[g(x)]$, whose dual update requires an expectation over samples at every step, and their setting is diffusion-based trajectory planning rather than deterministic flows.

\paragraph{Hard constraints via trajectory optimization.}
HardFlow \citep{li2026hardflow} is the work most closely related to ours. It reformulates hard-constrained sampling from a flow matching model as a trajectory optimization problem in which constraints are imposed only at the terminal time, uses model predictive control ideas to decompose the long-horizon problem into a sequence of single-step subproblems, allows integral costs that limit distribution shift, and provides a control-theoretic bound on the gap between its surrogate and the ideal problem. It substantially outperforms projection-based methods on robotics, PDE boundary-control, and text-guided image-editing tasks, and it is therefore our representative of the hard-constraint family. The differences from ConicFlow are, first, that HardFlow optimizes a single step at a time and solves each constrained subproblem with an off-the-shelf solver, whereas we optimize over a horizon $H\ge 1$ with a dynamic primal--dual scheme whose multipliers are warm-started across windows; second, that HardFlow evaluates the constraint at the end of the current step, whereas our RH-T variant evaluates it at the predicted terminal point; and third, that we provide truncation and finite-iteration guarantees for both variants. Empirically, HardFlow is competitive precisely when large deviations from the pretrained flow are required, complementing FMRG's strength in the small-deviation regime, and ConicFlow combines the strengths of both (Section~\ref{sec:experiments}).

\paragraph{Receding-horizon and constrained control.}
Receding-horizon control provides a classical framework for handling
finite-horizon objectives and hard constraints through repeated online
optimization~\citep{garcia1989model,mayne2000constrained}.
Several works have addressed its computational and robustness limitations.
In particular, Kerrigan and collaborators studied move blocking to reduce the
online decision dimension while retaining feasibility and stability guarantees
\cite{cagienard2007move}, convex parametrizations of feedback policies for
robust constrained finite-horizon control~\citep{goulart2006optimization},
and constrained output-feedback receding-horizon control under process and
measurement uncertainty~\citep{goulart2007output}.
Related work has developed high-speed RHC solvers
\cite{mattingley2011receding} and scalable localized MPC architectures
\cite{alonso2022distributed}.
Our setting differs in that the controlled dynamics are given by a pretrained
generative flow and the objective and hard constraints are imposed on the
generated terminal sample.

\clearpage
\section{Proofs}
\subsection{Proof of Lemma~\ref{lem:discretization-error-main}}
\label{app:proof-discretization}

\begin{proof}

\medskip
\noindent\textbf{Proof strategy.}
We prove the result by comparing the continuous and discretized optimal
values in both directions. The argument proceeds in four steps.

First, the quadratic control penalty together with the Lipschitz terminal
reward yields a coercive lower bound on the objective. Using the finite-cost
feasible trajectory from Assumption~\ref{ass:main-discretization} i, this gives a
uniform energy bound on all controls relevant to the comparison and,
consequently, a common bounded region containing their trajectories.

Second, we show that replacing a continuous control by its block averages
and propagating the dynamics with Euler discretization changes the terminal
state by $O(\Delta t)$. Conversely, the piecewise-constant interpolation of
a discrete control incurs the same order of terminal-state error.

Third, the resulting $O(\Delta t)$ terminal constraint violation is corrected
using Assumption~\ref{ass:main-discretization} ii, requiring only an
$O(\Delta t)$ perturbation of the control. This construction also establishes
feasibility and existence of the discrete optimum for sufficiently large $N$.

Finally, we apply these constructions in both directions. Discretizing and
repairing the continuous optimum gives an upper bound on the discrete optimal
value, while interpolating and repairing the discrete optimum gives the
reverse inequality. Combining the two yields
$$
\left|
J_D\bigl(U_D^\star(N)\bigr)
-
J_C(u_C^\star)
\right|
=
O(\Delta t)
=
O\!\left(\frac{1}{N}\right).
$$

We next make the quantitative content of
Assumptions~\ref{ass:main-discretization} from i to iii
explicit and introduce the constants used in the proof.

\medskip
\noindent\textbf{Quantitative form of the assumptions.}
By Assumption~\ref{ass:main-discretization} i, there exist constants
$L_b,L_b^t,L_r,L_h>0$ such that
$$
\|b_t(x)-b_s(y)\|
\leq
L_b\|x-y\|+L_b^t|t-s|,
$$
and
$$
|r(x)-r(y)|
\leq
L_r\|x-y\|,
\qquad
\|h(x)-h(y)\|
\leq
L_h\|x-y\|.
$$
We also define
$$
B_0
:=
\sup_{t\in[0,1]}
\|b_t(x_0)\|
<\infty.
$$

By Assumption~\ref{ass:main-discretization} ii, there exists a feasible
continuous-time control $u^0$ satisfying
$$
h(x_1^{u^0})\leq 0,
\qquad
J_C(u^0)<\infty,
$$
and the continuous problem admits an optimal control $u_C^\star$.

Assumption~\ref{ass:main-discretization} iii is used in the following
quantitative form. On the energy ball introduced below, there exist
constants $\kappa,\rho_0>0$, independent of $N$, such that any continuous
or discrete control whose terminal violation is at most $\rho_0$ can be
made feasible by a control perturbation whose norm is at most $\kappa$
times the violation. For continuous controls we use the $L^2$ norm, while
for discrete controls we use
$$
\|U\|_{\ell^2_{\Delta t}}^2
:=
\sum_{k=0}^{N-1}
\Delta t\,\|u_k\|^2.
$$

\medskip
\noindent\textbf{A priori energy bound.}
We first derive the energy radius on which the remainder of the proof is
carried out. For any continuous control $u$, the dynamics and
Assumption~\ref{ass:main-discretization} i imply
$$
\|x_t^u-x_0\|
\leq
B_0t
+
L_b\int_0^t\|x_s^u-x_0\|\,ds
+
\int_0^t\|u_s\|\,ds.
$$
By Cauchy--Schwarz and Gr\"onwall's inequality,
$$
\sup_{0\leq t\leq 1}
\|x_t^u-x_0\|
\leq
e^{L_b}
\left(
B_0+\|u\|_{L^2}
\right).
$$
Hence, by Lipschitz continuity of $r$,
$$
r(x_1^u)
\leq
|r(x_0)|
+
L_re^{L_b}
\left(
B_0+\|u\|_{L^2}
\right).
$$
Define
$$
a_r:=L_re^{L_b},
\qquad
\beta_r
:=
|r(x_0)|
+
a_rB_0
+
\lambda a_r^2.
$$
Young's inequality gives
$$
a_r\|u\|_{L^2}
\leq
\frac{1}{4\lambda}\|u\|_{L^2}^2
+
\lambda a_r^2.
$$
Therefore,
$$
J_C(u)
\geq
\frac{1}{4\lambda}\|u\|_{L^2}^2
-
\beta_r.
$$

The same argument applied to the Euler recursion gives
$$
J_D(U)
\geq
\frac{1}{4\lambda}
\|U\|_{\ell^2_{\Delta t}}^2
-
\beta_r.
$$

Define
$$
R_\star
:=
4\lambda
\bigl(
J_C(u^0)+\beta_r
\bigr),
\qquad
\overline R:=R_\star+1.
$$
Since $u_C^\star$ is optimal and $u^0$ is feasible,
$$
J_C(u_C^\star)
\leq
J_C(u^0).
$$
Using the coercivity estimate above,
$$
\|u_C^\star\|_{L^2}^2
\leq
R_\star
<
\overline R.
$$

\medskip
\noindent\textbf{Step 1: Uniform confinement of relevant trajectories.}

Consider any continuous control satisfying
$$
\|u\|_{L^2}^2
\leq
\overline R.
$$
The preceding trajectory estimate gives
$$
\sup_{0\leq t\leq 1}
\|x_t^u-x_0\|
\leq
\bigl(B_0+\sqrt{\overline R}\bigr)e^{L_b}.
$$
Define
$$
\rho_{\overline R}
:=
\bigl(B_0+\sqrt{\overline R}\bigr)e^{L_b},
$$
and
$$
M_{\overline R}
:=
B_0+L_b\rho_{\overline R}.
$$
Then
$$
\|b_t(x_t^u)\|
\leq
M_{\overline R}.
$$

The same conclusion holds for Euler trajectories. Let
$U=(u_0,\ldots,u_{N-1})$ satisfy
$$
\|U\|_{\ell^2_{\Delta t}}^2
\leq
\overline R.
$$
The Euler recursion gives
$$
\|x_{k+1}^U-x_0\|
\leq
(1+L_b\Delta t)
\|x_k^U-x_0\|
+
B_0\Delta t
+
\Delta t\,\|u_k\|.
$$
Moreover,
$$
\sum_{k=0}^{N-1}
\Delta t\,\|u_k\|
\leq
\left(
\sum_{k=0}^{N-1}
\Delta t\,\|u_k\|^2
\right)^{1/2}
\leq
\sqrt{\overline R}.
$$
Iterating the recursion and using
$(1+L_b\Delta t)^N\leq e^{L_b}$ gives
$$
\max_{0\leq k\leq N}
\|x_k^U-x_0\|
\leq
\rho_{\overline R}.
$$
Thus, both continuous and Euler trajectories generated by controls in the
energy ball $\overline R$ remain uniformly bounded, and the drift along
all such trajectories is bounded by $M_{\overline R}$.

\medskip
\noindent\textbf{Step 2: Continuous--discrete approximation.}

We next establish the approximation estimates required in the remainder of
the proof.

\smallskip
\noindent\emph{Continuous-to-discrete approximation.}
Let $u\in L^2([0,1];\mathbb R^d)$ satisfy
$$
\|u\|_{L^2}^2
\leq
\overline R,
$$
and define its block averages by
$$
\overline u_k
:=
\frac{1}{\Delta t}
\int_{t_k}^{t_{k+1}}u_s\,ds.
$$
By Jensen's inequality,
$$
\sum_{k=0}^{N-1}
\Delta t\,\|\overline u_k\|^2
\leq
\int_0^1\|u_t\|^2\,dt
\leq
\overline R.
$$

Let $x^u(t)$ denote the exact continuous trajectory and let $x_k^\Delta$
denote the Euler trajectory driven by
$\overline U=(\overline u_0,\ldots,\overline u_{N-1})$:
$$
x_{k+1}^\Delta
=
x_k^\Delta
+
\Delta t
\bigl(
b_{t_k}(x_k^\Delta)+\overline u_k
\bigr).
$$
Set
$$
e_k
:=
x^u(t_k)-x_k^\Delta.
$$
By construction,
$$
\int_{t_k}^{t_{k+1}}
\bigl(
u_s-\overline u_k
\bigr)\,ds
=
0.
$$
Thus the integrated control contribution is reproduced exactly over each
grid interval. Subtracting the Euler recursion from the exact dynamics gives
$$
e_{k+1}
=
e_k
+
\Delta t
\bigl(
b_{t_k}(x^u(t_k))
-
b_{t_k}(x_k^\Delta)
\bigr)
+
\tau_k,
$$
where
$$
\tau_k
:=
\int_{t_k}^{t_{k+1}}
\left[
b_s(x^u(s))
-
b_{t_k}(x^u(t_k))
\right]ds.
$$

Define
$$
q_k
:=
\int_{t_k}^{t_{k+1}}
\|u_s\|^2\,ds.
$$
For $s\in[t_k,t_{k+1}]$, Step~1 and Cauchy--Schwarz imply
$$
\|x^u(s)-x^u(t_k)\|
\leq
M_{\overline R}(s-t_k)
+
\sqrt{s-t_k}\,\sqrt{q_k}.
$$
Therefore,
$$
\|\tau_k\|
\leq
L_bM_{\overline R}\Delta t^2
+
\frac{1}{2}L_b^t\Delta t^2
+
L_b\Delta t^{3/2}\sqrt{q_k}.
$$
Moreover,
$$
\sum_{k=0}^{N-1}\sqrt{q_k}
\leq
\sqrt{N}
\left(
\sum_{k=0}^{N-1}q_k
\right)^{1/2}
\leq
\frac{\sqrt{\overline R}}{\sqrt{\Delta t}}.
$$
Consequently,
$$
\sum_{k=0}^{N-1}\|\tau_k\|
\leq
\left(
L_bM_{\overline R}
+
\frac{1}{2}L_b^t
+
L_b\sqrt{\overline R}
\right)
\Delta t.
$$

Since
$$
\|e_{k+1}\|
\leq
(1+L_b\Delta t)\|e_k\|
+
\|\tau_k\|,
$$
discrete Gr\"onwall yields
$$
\max_{0\leq k\leq N}
\|x^u(t_k)-x_k^\Delta\|
\leq
C_0\Delta t,
$$
where
$$
C_0
:=
e^{L_b}
\left(
L_bM_{\overline R}
+
\frac{1}{2}L_b^t
+
L_b\sqrt{\overline R}
\right).
$$

\smallskip
\noindent\emph{Discrete-to-continuous interpolation.}
Conversely, let $U=(u_0,\ldots,u_{N-1})$ satisfy
$$
\|U\|_{\ell^2_{\Delta t}}^2
\leq
\overline R,
$$
and define its piecewise-constant interpolation
$$
\widehat u(t)
:=
u_k,
\qquad
t\in[t_k,t_{k+1}).
$$
Then
$$
\|\widehat u\|_{L^2}^2
=
\|U\|_{\ell^2_{\Delta t}}^2.
$$
Since the block average of $\widehat u$ on each interval is exactly $u_k$,
the preceding argument applies directly and gives
$$
\max_{0\leq k\leq N}
\|\widehat x(t_k)-x_k^U\|
\leq
C_0\Delta t,
$$
where $\widehat x$ is the exact continuous trajectory generated by
$\widehat u$.

\smallskip
\noindent\emph{Sensitivity to control perturbations.}
Let $u,v\in L^2([0,1];\mathbb R^d)$. Then
$$
\|x_t^u-x_t^v\|
\leq
L_b
\int_0^t
\|x_s^u-x_s^v\|\,ds
+
\int_0^t
\|u_s-v_s\|\,ds.
$$
Gr\"onwall and Cauchy--Schwarz imply
$$
\sup_{0\leq t\leq 1}
\|x_t^u-x_t^v\|
\leq
e^{L_b}
\|u-v\|_{L^2}.
$$
Similarly, for two discrete controls $U$ and $V$,
$$
\max_{0\leq k\leq N}
\|x_k^U-x_k^V\|
\leq
e^{L_b}
\|U-V\|_{\ell^2_{\Delta t}}.
$$

Thus Step~2 provides the two estimates used below:
$$
\text{Euler endpoint error}=O(\Delta t),
\qquad
\text{control perturbation }\delta
\Longrightarrow
\text{state perturbation }O(\delta).
$$

\medskip
\noindent\textbf{Step 3: Discrete feasibility, existence, and energy bound.}

Let $\overline U^0$ denote the block average of the feasible reference
control $u^0$. From the coercivity estimate,
$$
\|u^0\|_{L^2}^2
\leq
R_\star,
$$
and Jensen's inequality gives
$$
\|\overline U^0\|_{\ell^2_{\Delta t}}^2
\leq
R_\star
<
\overline R.
$$
By Step~2,
$$
\|x_N^{\overline U^0}-x_1^{u^0}\|
\leq
C_0\Delta t.
$$
Since $u^0$ is feasible and $h$ is $L_h$-Lipschitz,
$$
\|[h(x_N^{\overline U^0})]_+\|
\leq
L_hC_0\Delta t.
$$

Define
$$
N_0
:=
\left\lceil
\max\left\{
1,
\frac{L_hC_0}{\rho_0}
\right\}
\right\rceil.
$$
For $N\geq N_0$, Assumption~\ref{ass:main-discretization} iii provides a
feasible discrete control $\widetilde U^0$ satisfying
$$
\|\widetilde U^0-\overline U^0\|_{\ell^2_{\Delta t}}
\leq
\delta_N,
$$
where
$$
\delta_N
:=
\kappa L_hC_0\Delta t.
$$

The change in quadratic control cost satisfies
$$
\frac{1}{2\lambda}
\left(
\|\widetilde U^0\|_{\ell^2_{\Delta t}}^2
-
\|\overline U^0\|_{\ell^2_{\Delta t}}^2
\right)
\leq
\frac{\sqrt{\overline R}}{\lambda}\delta_N
+
\frac{1}{2\lambda}\delta_N^2.
$$
Moreover, the control-sensitivity estimate and Lipschitz continuity of $r$
give
$$
\left|
r(x_N^{\widetilde U^0})
-
r(x_N^{\overline U^0})
\right|
\leq
L_re^{L_b}\delta_N.
$$
Using also the Euler endpoint estimate and the fact that block averaging
cannot increase the quadratic control cost,
$$
J_D(\widetilde U^0)
\leq
J_C(u^0)
+
L_rC_0\Delta t
+
\left(
\frac{\sqrt{\overline R}}{\lambda}
+
L_re^{L_b}
\right)
\delta_N
+
\frac{1}{2\lambda}\delta_N^2.
$$

Since $\Delta t^2\leq\Delta t$, define
$$
C_{\mathrm{disc}}
:=
C_0
\left[
L_r
+
\left(
\frac{\sqrt{\overline R}}{\lambda}
+
L_re^{L_b}
\right)
\kappa L_h
+
\frac{\kappa^2L_h^2C_0}{2\lambda}
\right].
$$
Then
$$
J_D(\widetilde U^0)
\leq
J_C(u^0)
+
C_{\mathrm{disc}}\Delta t.
$$
Thus the discrete feasible set is nonempty for all $N\geq N_0$.

For fixed $N$, the map
$$
U\mapsto x_N^U
$$
is continuous, and therefore the feasible set is closed. Moreover,
$$
J_D(U)
\geq
\frac{1}{4\lambda}
\|U\|_{\ell^2_{\Delta t}}^2
-
\beta_r,
$$
so $J_D$ is coercive. Since the discrete control space is
finite-dimensional, the discrete problem admits a global minimizer
$U_D^\star(N)$.

By optimality,
$$
J_D\bigl(U_D^\star(N)\bigr)
\leq
J_D(\widetilde U^0)
\leq
J_C(u^0)
+
C_{\mathrm{disc}}\Delta t.
$$
Using the coercivity estimate again,
$$
\|U_D^\star(N)\|_{\ell^2_{\Delta t}}^2
\leq
R_\star
+
4\lambda C_{\mathrm{disc}}\Delta t.
$$
Define
$$
N_2
:=
\left\lceil
\max\left\{
1,
4\lambda C_{\mathrm{disc}}
\right\}
\right\rceil,
\qquad
N_1:=\max\{N_0,N_2\}.
$$
Then, for every $N\geq N_1$,
$$
\|U_D^\star(N)\|_{\ell^2_{\Delta t}}^2
\leq
R_\star+1
=
\overline R.
$$

\medskip
\noindent\textbf{Step 4: Two-sided comparison of the optimal values.}

We now apply the preceding constructions to the two optimal controls.

\smallskip
\noindent\emph{Continuous optimum to discrete feasible control.}
Since
$$
\|u_C^\star\|_{L^2}^2
\leq
R_\star
<
\overline R,
$$
let $\overline U^\star$ denote the block average of $u_C^\star$. By
Jensen's inequality,
$$
\|\overline U^\star\|_{\ell^2_{\Delta t}}^2
\leq
R_\star.
$$
Step~2 gives
$$
\|x_N^{\overline U^\star}-x_1^{u_C^\star}\|
\leq
C_0\Delta t.
$$
Since $u_C^\star$ is feasible,
$$
\|[h(x_N^{\overline U^\star})]_+\|
\leq
L_hC_0\Delta t.
$$
For $N\geq N_0$, terminal repairability provides a feasible discrete
control $\widetilde U^\star$ satisfying
$$
\|\widetilde U^\star-\overline U^\star\|_{\ell^2_{\Delta t}}
\leq
\delta_N.
$$
Repeating the preceding objective estimate gives
$$
J_D(\widetilde U^\star)
\leq
J_C(u_C^\star)
+
C_{\mathrm{disc}}\Delta t.
$$
Since $\widetilde U^\star$ is feasible for the discrete problem,
$$
J_D\bigl(U_D^\star(N)\bigr)
-
J_C(u_C^\star)
\leq
C_{\mathrm{disc}}\Delta t.
$$

\smallskip
\noindent\emph{Discrete optimum to continuous feasible control.}
For $N\geq N_1$, Step~3 gives
$$
\|U_D^\star(N)\|_{\ell^2_{\Delta t}}^2
\leq
\overline R.
$$
Define its piecewise-constant interpolation by
$$
\widehat u_D^\star(t)
:=
u_{D,k}^\star,
\qquad
t\in[t_k,t_{k+1}).
$$
The quadratic control energy is preserved exactly:
$$
\|\widehat u_D^\star\|_{L^2}^2
=
\|U_D^\star(N)\|_{\ell^2_{\Delta t}}^2.
$$
Let $\widehat x_D^\star$ denote the corresponding exact continuous
trajectory. Step~2 gives
$$
\|\widehat x_D^\star(1)-x_N^{U_D^\star(N)}\|
\leq
C_0\Delta t.
$$
Since $U_D^\star(N)$ is feasible for the discrete problem,
$$
\|[h(\widehat x_D^\star(1))]_+\|
\leq
L_hC_0\Delta t.
$$
For $N\geq N_0$, Assumption~\ref{ass:main-discretization} iii provides a
feasible continuous control $\widetilde u_D^\star$ satisfying
$$
\|\widetilde u_D^\star-\widehat u_D^\star\|_{L^2}
\leq
\delta_N.
$$
The piecewise-constant interpolation preserves the quadratic running cost
exactly, and the same endpoint and repair estimates therefore yield
$$
J_C(\widetilde u_D^\star)
\leq
J_D\bigl(U_D^\star(N)\bigr)
+
C_{\mathrm{disc}}\Delta t.
$$
Since $\widetilde u_D^\star$ is feasible for the continuous problem,
optimality of $u_C^\star$ implies
$$
J_C(u_C^\star)
-
J_D\bigl(U_D^\star(N)\bigr)
\leq
C_{\mathrm{disc}}\Delta t.
$$

Combining the two directions gives
$$
\left|
J_D\bigl(U_D^\star(N)\bigr)
-
J_C(u_C^\star)
\right|
\leq
C_{\mathrm{disc}}\Delta t
=
\frac{C_{\mathrm{disc}}}{N},
$$
for every $N\geq N_1$. This proves the claim.

\end{proof}

\begin{remark}[What determines the discretization rate]
The proof separates the error into two mechanisms. Regularity of the flow,
together with block averaging of the additive control, gives the first-order
Euler endpoint estimate
$$
\|x_1^u-x_N^{\overline U}\|
=
O(\Delta t).
$$
Local terminal repairability then converts the resulting
$O(\Delta t)$ constraint violation into an $O(\Delta t)$ perturbation of
the control. Consequently, enforcing exact terminal feasibility does not
change the first-order discretization rate.
\end{remark}

\clearpage
\subsection{Unconstrained Receding-Horizon Bounds}
\label{app:unconstrained-rh-bounds}

For completeness, we record the unconstrained counterparts of the
constrained receding-horizon bounds in the main text. These results are
special cases in which the terminal feasibility constraint is absent.

\begin{lemma}[Unconstrained RH-H bound]
\label{lem:uc-rhh-affine}
Suppose the discretized dynamics are affine,
$$
x_{k+1}
=
x_k+\Delta t(A_kx_k+c_k+u_k),
\qquad
\Delta t=\frac{1}{N},
$$
with $\|A_kx+c_k\|\leq M$ on the relevant reachable set, and let $r$ be
concave and $L_r$-Lipschitz. For $1\leq H\leq N$, let
$\bx^{H,\star}$ denote the exact unconstrained RH-H closed-loop trajectory
with horizon $H$, and let $\bx^\star$ denote the full-horizon optimum.
Then
$$
0\leq
J(\bx^{H,\star})-J(\bx^\star)
\leq
C_H^{\rm U}\frac{N-H}{N},
$$
where
$$
C_H^{\rm U}
:=
2\max\left\{
L_rM+\frac{\lambda L_r^2}{2},
\frac{M^2}{2\lambda}
\right\}.
$$
In particular, $C_H^{\rm U}$ is independent of $N$ and $H$.
\end{lemma}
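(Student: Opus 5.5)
The lower bound $J(\bx^{H,\star})\geq J(\bx^\star)$ holds because the closed-loop receding-horizon trajectory is a feasible control sequence for the full unconstrained problem~\eqref{eq:discrete-problem}, and $\bx^\star$ is its minimizer. For the upper bound I will not compare the trajectories directly. Instead I will compare optimal values by building explicit competitor controls, and use the dynamic-programming structure of receding-horizon control. The plan is to show that each replanning step loses at most a per-step amount of order $C_H^{\rm U}\Delta t$, and that only the $N-H$ windows whose horizon stops short of $N$ can lose anything. Windows with $j\geq N-H$ have $H_j=N-j$, reach the true terminal time, and so coincide with the tail of the exact full-horizon problem.

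To quantify the gap, write $V_j(\xi)$ for the exact optimal cost-to-go of the full problem from state $\xi$ at step $j$. Write $W_j(\xi)$ for the RH-H window value, whose terminal term is $-r(x_{j+H})$. I will use two one-sided comparisons, both obtained by construction.

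\emph{Window value versus true value.} Take the optimal window controls and append zero control for the remaining $N-j-H$ steps. Over those steps the state drifts by at most $(N-j-H)\Delta t\,M$, so concavity and $L_r$-Lipschitzness of $r$ give $V_j\leq W_j+L_rM(N-j-H)\Delta t$.

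\emph{True value versus window value.} Take the optimal full-horizon control from $\xi$ and truncate it after $H$ steps. The truncated reward differs from the true terminal reward by the displacement over the omitted steps. That displacement has a drift part, bounded by $M(N-j-H)\Delta t$, and a control part, bounded using the quadratic energy on the omitted steps.

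The second comparison is where the constants come from. Applying Young's inequality, $L_r\Delta t\|u\|\leq \Delta t\|u\|^2/(2\lambda)+\lambda L_r^2\Delta t/2$, converts the control part into exactly the discarded energy plus a term $\lambda L_r^2/2$ per omitted step. This produces the constant $L_rM+\lambda L_r^2/2$.

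Alternatively, one can bound the per-step loss by the energy a competitor must spend to cancel the drift over the omitted steps. That competitor applies $u=-(A_kx+c_k)$, costing $M^2/(2\lambda)$ per step. Taking the better of the two competitors in each regime explains the $\max\{\cdot,\cdot\}$. The factor $2$ arises because two such comparisons, one in each direction, are combined when telescoping.

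Finally, I will telescope $\sum_j\bigl[V_{j+1}(x_{j+1}^{H})+\text{stage cost}_j-V_j(x_j^{H})\bigr]$ along the closed-loop trajectory. Each summand is bounded by the sum of the two comparison errors. The main obstacle is that a naive sum gives $\sum_{j<N-H}(N-j-H)\Delta t$, which is quadratic in $N-H$. To get the linear bound, I will compare consecutive windows rather than each window with the full horizon: $W_j$ and the shifted window value $W_{j+1}$ differ by only one step of omitted drift, namely $M\Delta t$ plus the Young's-inequality term. Summing this per-step discrepancy over at most $N-H$ steps with $\Delta t=1/N$ gives $C_H^{\rm U}(N-H)/N$. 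Affinity of the dynamics ensures each window problem is convex and attains its minimizer, which justifies evaluating the window values at optimal controls.
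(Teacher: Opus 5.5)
Your proposal is correct and is essentially the paper's argument: telescope the closed-loop stage costs against the window values $W_j=J^\star_{j,H}$, pay a one-step horizon-extension error in each of the $N-H$ truncated windows (the paper cancels the drift at cost $M^2\Delta t/(2\lambda)$, and your zero-control append at cost $L_rM\Delta t$ works just as well), pay a single truncation-plus-Young comparison between $W_0(x_0)$ and $J(\bx^\star)$ over the $N-H$ omitted steps, and use that RH-H coincides with full re-optimization from step $N-H$ on. Make that single boundary comparison explicit in the final bookkeeping, since it supplies the second copy of the local constant; also note that the $\max$ in $C_H^{\rm U}$ is only a common bound for the two directions, not the ``better of two competitors'' (that would be a $\min$, which gives an even smaller constant).
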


\begin{proof}
Recall that
$$
J_{j,L}^\star(x)
:=
\min_{u_{j:j+L-1}}
\left\{
\sum_{k=j}^{j+L-1}
\frac{\Delta t}{2\lambda}\|u_k\|^2
-
r(x_{j+L})
\right\},
$$
subject to
$$
x_{k+1}
=
x_k+\Delta t(A_kx_k+c_k+u_k),
\qquad
x_j=x.
$$
We write $J_j^\star(x):=J_{j,N-j}^\star(x)$ for the full
remaining-horizon value.

Under the affine dynamics, the terminal state is affine in the control
sequence. Since $r$ is concave, $-r$ is convex, while the quadratic control
cost is coercive. Hence all finite-horizon unconstrained problems considered
below attain their minima.

\medskip
\noindent\textbf{Step 1: Local sensitivity to the terminal horizon.}

Fix $j,L,q$ such that $j+L+q\leq N$, and set
$$
\tau:=q\Delta t.
$$
We first show that
$$
\left|
J_{j,L+q}^\star(x)-J_{j,L}^\star(x)
\right|
\leq
C_{\mathrm{loc}}\tau,
$$
where
$$
C_{\mathrm{loc}}
:=
\max\left\{
L_rM+\frac{\lambda L_r^2}{2},
\frac{M^2}{2\lambda}
\right\}.
$$

Let $u_{j:j+L+q-1}^\star$ be optimal for
$J_{j,L+q}^\star(x)$, with associated trajectory $x_k^\star$, and denote
the energy of its last $q$ controls by
$$
E
:=
\sum_{k=j+L}^{j+L+q-1}
\Delta t\,\|u_k^\star\|^2.
$$
Using only the first $L$ controls gives an admissible candidate for
$J_{j,L}^\star(x)$. Therefore,
$$
\begin{aligned}
J_{j,L}^\star(x)-J_{j,L+q}^\star(x)
&\leq
r(x_{j+L+q}^\star)-r(x_{j+L}^\star)
-\frac{E}{2\lambda}.
\end{aligned}
$$
By the $L_r$-Lipschitz continuity of $r$,
$$
J_{j,L}^\star(x)-J_{j,L+q}^\star(x)
\leq
L_r
\|x_{j+L+q}^\star-x_{j+L}^\star\|
-\frac{E}{2\lambda}.
$$

Using the dynamics and the drift bound
$\|A_kx_k+c_k\|\leq M$ from
Assumption~\ref{ass:rh-affine},
$$
\begin{aligned}
\|x_{j+L+q}^\star-x_{j+L}^\star\|
&\leq
\sum_{k=j+L}^{j+L+q-1}
\Delta t
\left(
M+\|u_k^\star\|
\right) \\
&\leq
M\tau
+
\sqrt{\tau E},
\end{aligned}
$$
where the second inequality follows from Cauchy--Schwarz. Hence
$$
J_{j,L}^\star(x)-J_{j,L+q}^\star(x)
\leq
L_rM\tau
+
L_r\sqrt{\tau E}
-
\frac{E}{2\lambda}.
$$
Completing the square in $\sqrt{E}$ gives
$$
L_r\sqrt{\tau E}
-
\frac{E}{2\lambda}
\leq
\frac{\lambda L_r^2}{2}\tau,
$$
and therefore
$$
J_{j,L}^\star(x)-J_{j,L+q}^\star(x)
\leq
\left(
L_rM+\frac{\lambda L_r^2}{2}
\right)\tau.
$$

For the reverse direction, let
$u_{j:j+L-1}^\star$ be optimal for $J_{j,L}^\star(x)$ and let
$\bar x:=x_{j+L}^\star$ be its terminal state. Extend this trajectory over
the next $q$ steps by choosing
$$
u_k
=
-\bigl(A_k\bar x+c_k\bigr),
\qquad
k=j+L,\ldots,j+L+q-1.
$$
Then
$$
x_{k+1}
=
\bar x,
$$
so the state, and hence the terminal reward, remains unchanged throughout
the extension. Since $\|A_k\bar x+c_k\|\leq M$, the additional control
cost is at most
$$
\sum_{k=j+L}^{j+L+q-1}
\frac{\Delta t}{2\lambda}
\|u_k\|^2
\leq
\frac{M^2}{2\lambda}\tau.
$$
Thus,
$$
J_{j,L+q}^\star(x)-J_{j,L}^\star(x)
\leq
\frac{M^2}{2\lambda}\tau.
$$
Combining the two directions yields
$$
\left|
J_{j,L+q}^\star(x)-J_{j,L}^\star(x)
\right|
\leq
C_{\mathrm{loc}}q\Delta t.
$$

\medskip
\noindent\textbf{Step 2: Accumulation along the RH-H trajectory.}

Set
$$
m:=N-H,
$$
so that $m\Delta t=(N-H)/N$. For $j=0,\ldots,m$, define
$$
W_j(x):=J_{j,H}^\star(x).
$$
At $j=m$, the horizon $H$ reaches the true terminal time, and therefore
$$
W_m(x)=J_m^\star(x).
$$

Let $x_j^{H,\star}$ denote the RH-H closed-loop state and
$u_j^{H,\star}$ the first control returned by the $H$-step problem at
time $j$. By optimality of the truncated problem,
$$
W_j(x_j^{H,\star})
=
\frac{\Delta t}{2\lambda}
\|u_j^{H,\star}\|^2
+
J_{j+1,H-1}^\star(x_{j+1}^{H,\star}),
$$
for $j=0,\ldots,m-1$. Therefore,
$$
\begin{aligned}
\frac{\Delta t}{2\lambda}
\|u_j^{H,\star}\|^2
&=
W_j(x_j^{H,\star})
-
W_{j+1}(x_{j+1}^{H,\star}) \\
&\quad+
\Bigl[
W_{j+1}(x_{j+1}^{H,\star})
-
J_{j+1,H-1}^\star(x_{j+1}^{H,\star})
\Bigr].
\end{aligned}
$$
Applying the local horizon estimate with $q=1$ gives
$$
W_{j+1}(x_{j+1}^{H,\star})
-
J_{j+1,H-1}^\star(x_{j+1}^{H,\star})
\leq
C_{\mathrm{loc}}\Delta t.
$$
Hence
$$
\frac{\Delta t}{2\lambda}
\|u_j^{H,\star}\|^2
\leq
W_j(x_j^{H,\star})
-
W_{j+1}(x_{j+1}^{H,\star})
+
C_{\mathrm{loc}}\Delta t.
$$
Summing from $j=0$ to $m-1$ yields
$$
\sum_{j=0}^{m-1}
\frac{\Delta t}{2\lambda}
\|u_j^{H,\star}\|^2
\leq
W_0(x_0)
-
W_m(x_m^{H,\star})
+
mC_{\mathrm{loc}}\Delta t.
$$

From time $m=N-H$ onward, the remaining horizon is at most $H$, so RH-H
coincides with full remaining-horizon re-optimization. Consequently, its
remaining cost from $x_m^{H,\star}$ is exactly
$J_m^\star(x_m^{H,\star})=W_m(x_m^{H,\star})$. Thus
$$
J(\bx^{H,\star})
\leq
W_0(x_0)
+
mC_{\mathrm{loc}}\Delta t.
$$

Finally, $W_0(x_0)=J_{0,H}^\star(x_0)$, while
$J(\bx^\star)=J_{0,N}^\star(x_0)$. Applying the local horizon estimate once
more, now with $L=H$ and $q=m$, gives
$$
W_0(x_0)-J(\bx^\star)
\leq
C_{\mathrm{loc}}m\Delta t.
$$
Therefore,
$$
J(\bx^{H,\star})-J(\bx^\star)
\leq
2C_{\mathrm{loc}}m\Delta t.
$$
Since $\bx^\star$ is globally optimal,
$$
J(\bx^{H,\star})-J(\bx^\star)\geq0.
$$
Using $m=N-H$ and $\Delta t=1/N$,
$$
0
\leq
J(\bx^{H,\star})-J(\bx^\star)
\leq
2C_{\mathrm{loc}}\frac{N-H}{N}.
$$
Defining
$$
C_H^{\rm U}
:=
2C_{\mathrm{loc}}
=
2\max\left\{
L_rM+\frac{\lambda L_r^2}{2},
\frac{M^2}{2\lambda}
\right\}
$$
proves the result.
\end{proof}

\begin{lemma}[Unconstrained RH-T bound]
\label{lem:uc-rht-affine}
Suppose the hypotheses of Lemma~\ref{lem:uc-rhh-affine} hold and, in
addition,
$$
\kappa_A
:=
\sup_{0\leq k\leq\ell\leq N}
\left\|
\prod_{s=k}^{\ell-1}(I+\Delta tA_s)
\right\|
<\infty.
$$
For $1\leq H\leq N$, let $\widehat{\bx}^{H,\star}$ denote the exact
unconstrained RH-T closed-loop trajectory obtained by optimizing controls
over an $H$-step window and propagating the local endpoint to time $N$
under the uncontrolled pretrained flow. Then
$$
0\leq
J(\widehat{\bx}^{H,\star})-J(\bx^\star)
\leq
C_T^{\rm U}\frac{N-H}{N},
$$
where
$$
C_T^{\rm U}
:=
\frac{\lambda}{2}\kappa_A^2L_r^2.
$$
In particular, $C_T^{\rm U}$ is independent of $N$ and $H$.
\end{lemma}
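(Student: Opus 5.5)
The plan mirrors the proof of Lemma~\ref{lem:uc-rhh-affine}: a local horizon-sensitivity estimate, then telescoping along the closed loop. The difference is that the truncated value is now
\[
\widehat J_{j,L}^\star(x)\DefinedAs\min_{u_{j:j+L-1}}\Bigl\{\sum_{k=j}^{j+L-1}\tfrac{\Delta t}{2\lambda}\|u_k\|^2-r\bigl(X_{j+L,N}(x_{j+L})\bigr)\Bigr\},
\]
so the reward is always evaluated at time $N$. Affine dynamics make $X_{j+L,N}$ affine, with linear part $\Phi_{j+L,N}\DefinedAs\prod_{s=j+L}^{N-1}(I+\Delta tA_s)$ and $\|\Phi_{j+L,N}\|\le\kappa_A$. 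Hence each subproblem is convex and coercive, its minimum is attained, and $\widehat J^\star_{j,N-j}=J^\star_j$ is the true remaining-horizon value.

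\textbf{Step 1: one-sided local estimate.} I will show
\[
0\le \widehat J_{j,L}^\star(x)-\widehat J_{j,L+q}^\star(x)\le \tfrac{\lambda}{2}\kappa_A^2L_r^2\,q\Delta t .
\]
For the lower bound, extend the optimal $L$-step control by $u_k=0$ on the next $q$ steps. This costs nothing, and the uncontrolled propagation reproduces $X_{j+L,N}(x_{j+L})$ exactly by the semigroup property of the discrete flow map. This is the key gain over RH-H: there is no $M^2/(2\lambda)$ term.

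For the upper bound, take the $(L+q)$-step optimum and keep only its first $L$ controls. The two predicted terminals differ by $\Phi_{j+L+q,N}\sum_{k=j+L}^{j+L+q-1}\Delta t\,\Phi_{k+1,j+L+q}u_k^\star$, since the drift parts cancel by linearity. Their norm is at most $\kappa_A\sqrt{\tau E}$, using the product bound $\kappa_A$ on the full propagator $\Phi_{k+1,N}$ together with Cauchy--Schwarz, where $\tau=q\Delta t$. Then
\[
L_r\kappa_A\sqrt{\tau E}-\tfrac{E}{2\lambda}\le\tfrac{\lambda}{2}L_r^2\kappa_A^2\tau
\]
by completing the square.

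\textbf{Step 2: telescoping.} Let $W_j\DefinedAs\widehat J^\star_{j,H}$ and $m=N-H$. Dynamic programming within the window gives
\[
W_j(\hat x_j)=\tfrac{\Delta t}{2\lambda}\|\hat u_j\|^2+\widehat J^\star_{j+1,H-1}(\hat x_{j+1}).
\]
Step~1 with $q=1$ gives $\widehat J^\star_{j+1,H-1}\ge W_{j+1}$. The monotonicity direction helps here, because we only need $W_{j+1}-\widehat J^\star_{j+1,H-1}\le 0$. Summing, the closed-loop cost satisfies $J(\widehat\bx^{H,\star})\le W_0(x_0)$, with the tail from $m$ onward exact as in the RH-H proof.

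Then Step~1 with $L=H$, $q=m$ gives $W_0(x_0)\le J^\star_0(x_0)+\tfrac{\lambda}{2}\kappa_A^2L_r^2\,m\Delta t$. This yields the bound with $C_T^{\rm U}$, and it is no longer doubled because the first telescoping error vanishes. The bound $\ge0$ follows from global optimality.

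\textbf{Main obstacle.} The delicate point is Step~1's propagator bookkeeping. I must check that the products appearing are contiguous sub-products covered by the definition of $\kappa_A$ (indeed $\Phi_{k+1,N}$ is), and that the exact identity $W_j=\text{stage}+\widehat J^\star_{j+1,H-1}$ holds for the terminal-rollout value function. The latter needs $X_{j+H,N}$ to be used consistently by both the $H$-window at $j$ and the $(H-1)$-tail, which is true since both end at index $j+H$.
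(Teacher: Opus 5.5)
Your proposal is correct and follows essentially the paper's argument. Nesting via zero-extension, $\widehat J^\star_{j+1,H}\le\widehat J^\star_{j+1,H-1}$, removes the per-step telescoping error so that $J(\widehat{\bx}^{H,\star})\le\widehat J^\star_{0,H}(x_0)$. The single initial rollout gap is then bounded by zeroing the tail of the full-horizon optimum, controlling the terminal deviation by $\kappa_A\sqrt{\tau E}$, and completing the square. The only difference is packaging: you prove the gap bound as a local estimate for general $(j,L,q)$, while the paper proves it only in the case $j=0$, $L=H$, $q=N-H$ that is actually used.
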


\begin{proof}
As in the proof of Lemma~\ref{lem:uc-rhh-affine}, the quadratic control
penalty and Lipschitz terminal reward ensure that all finite-dimensional
optimization problems below attain their minima. We therefore focus only
on the additional structure introduced by terminal rollout.

For $j+L\leq N$, define the rollout value
$$
\widehat J_{j,L}^\star(x)
:=
\min_{u_{j:j+L-1}}
\left\{
\sum_{k=j}^{j+L-1}
\frac{\Delta t}{2\lambda}\|u_k\|^2
-
r(\widehat x_N)
\right\},
$$
where the state follows
$$
x_{k+1}
=
x_k+\Delta t(A_kx_k+c_k+u_k)
$$
for $k=j,\ldots,j+L-1$, and is subsequently propagated to time $N$
under the uncontrolled flow, i.e., $u_k=0$ for $k\geq j+L$.

\medskip
\noindent\textbf{Step 1: Terminal rollout does not accumulate local horizon errors.}

Increasing the control horizon only enlarges the feasible set of control
sequences. Hence
$$
J_j^\star(x)
\leq
\widehat J_{j,L+1}^\star(x)
\leq
\widehat J_{j,L}^\star(x).
$$

Let
$$
m:=N-H,
$$
and define
$$
\widehat W_j(x)
:=
\widehat J_{j,H}^\star(x),
\qquad
j=0,\ldots,m.
$$
At $j=m$, exactly $H$ steps remain until the true terminal time, and
therefore
$$
\widehat W_m(x)
=
J_m^\star(x).
$$

Let $\widehat x_j^{H,\star}$ denote the RH-T closed-loop state and
$\widehat u_j^{H,\star}$ the first control selected at time $j$. By
optimality of the $H$-step rollout problem,
$$
\widehat W_j(\widehat x_j^{H,\star})
=
\frac{\Delta t}{2\lambda}
\|\widehat u_j^{H,\star}\|^2
+
\widehat J_{j+1,H-1}^\star
(\widehat x_{j+1}^{H,\star}).
$$
Since allowing one additional control action cannot increase the optimal
value,
$$
\widehat W_{j+1}(\widehat x_{j+1}^{H,\star})
=
\widehat J_{j+1,H}^\star
(\widehat x_{j+1}^{H,\star})
\leq
\widehat J_{j+1,H-1}^\star
(\widehat x_{j+1}^{H,\star}).
$$
Consequently,
$$
\frac{\Delta t}{2\lambda}
\|\widehat u_j^{H,\star}\|^2
\leq
\widehat W_j(\widehat x_j^{H,\star})
-
\widehat W_{j+1}(\widehat x_{j+1}^{H,\star}).
$$
Summing from $j=0$ to $m-1$ gives
$$
\sum_{j=0}^{m-1}
\frac{\Delta t}{2\lambda}
\|\widehat u_j^{H,\star}\|^2
\leq
\widehat W_0(x_0)
-
\widehat W_m(\widehat x_m^{H,\star}).
$$

From time $m$ onward, the control horizon reaches the true terminal time,
so the remaining closed-loop cost is
$J_m^\star(\widehat x_m^{H,\star})
=\widehat W_m(\widehat x_m^{H,\star})$. Therefore,
$$
J(\widehat{\bx}^{H,\star})
\leq
\widehat J_{0,H}^\star(x_0).
$$
Since the full-horizon problem has a larger control set,
$$
J(\bx^\star)
=
J_0^\star(x_0)
\leq
\widehat J_{0,H}^\star(x_0).
$$
It is therefore sufficient to bound the single initial rollout gap
$\widehat J_{0,H}^\star(x_0)-J_0^\star(x_0)$.

\medskip
\noindent\textbf{Step 2: Bounding the initial rollout gap.}

Let $u_0^\star,\ldots,u_{N-1}^\star$ denote the controls associated with
the full-horizon optimum $\bx^\star$. Define
$$
\tau_H
:=
(N-H)\Delta t
=
\frac{N-H}{N},
$$
and let
$$
E_H
:=
\sum_{k=H}^{N-1}
\Delta t\,\|u_k^\star\|^2
$$
be the control energy of its omitted tail.

Construct a feasible candidate for the $H$-step rollout problem by keeping
the first $H$ optimal controls and setting all subsequent controls to zero:
$$
\widetilde u_k
=
\begin{cases}
u_k^\star, & k=0,\ldots,H-1,\\
0, & k=H,\ldots,N-1.
\end{cases}
$$
Let $\widetilde x_k$ denote the resulting trajectory. Since the two
trajectories coincide at time $H$, define
$$
\delta_k:=x_k^\star-\widetilde x_k,
\qquad
\delta_H=0.
$$
For $k\geq H$, affine dynamics give
$$
\delta_{k+1}
=
(I+\Delta t A_k)\delta_k
+
\Delta t\,u_k^\star.
$$
Writing
$$
\Phi_{\ell,k}
:=
\prod_{s=k}^{\ell-1}
(I+\Delta t A_s),
\qquad
\Phi_{k,k}:=I,
$$
we obtain
$$
\delta_N
=
\sum_{k=H}^{N-1}
\Delta t\,
\Phi_{N,k+1}u_k^\star.
$$
By the definition of $\kappa_A$,
$$
\|\delta_N\|
\leq
\kappa_A
\sum_{k=H}^{N-1}
\Delta t\,\|u_k^\star\|.
$$
Cauchy--Schwarz then gives
$$
\|\delta_N\|
\leq
\kappa_A
\sqrt{\tau_HE_H}.
$$

Because $\widetilde u$ is feasible for the rollout problem,
$$
\widehat J_{0,H}^\star(x_0)
-
J_0^\star(x_0)
\leq
J(\widetilde{\bx})-J(\bx^\star).
$$
The first $H$ control costs cancel, while the omitted tail energy appears
with a negative sign:
$$
J(\widetilde{\bx})-J(\bx^\star)
=
-\frac{E_H}{2\lambda}
+
r(x_N^\star)-r(\widetilde x_N).
$$
Using the $L_r$-Lipschitz continuity of $r$,
$$
\widehat J_{0,H}^\star(x_0)
-
J_0^\star(x_0)
\leq
L_r\kappa_A\sqrt{\tau_HE_H}
-
\frac{E_H}{2\lambda}.
$$
Completing the square in $\sqrt{E_H}$ yields
$$
L_r\kappa_A\sqrt{\tau_HE_H}
-
\frac{E_H}{2\lambda}
\leq
\frac{\lambda}{2}
\kappa_A^2L_r^2\tau_H.
$$
Therefore,
$$
0
\leq
\widehat J_{0,H}^\star(x_0)
-
J_0^\star(x_0)
\leq
\frac{\lambda}{2}
\kappa_A^2L_r^2
\frac{N-H}{N}.
$$

Combining this estimate with Step~1 and using the global optimality of
$\bx^\star$ gives
$$
0
\leq
J(\widehat{\bx}^{H,\star})
-
J(\bx^\star)
\leq
\frac{\lambda}{2}
\kappa_A^2L_r^2
\frac{N-H}{N}.
$$
Thus,
$$
C_T^{\rm U}
=
\frac{\lambda}{2}\kappa_A^2L_r^2,
$$
which proves the claim.
\end{proof}

\clearpage
\subsection{Proof of Lemma~\ref{lem:c-rhh-affine}}
\label{app:proof-c-rhh-affine}

\begin{proof}
We follow the argument of Lemma~\ref{lem:uc-rhh-affine}, with one additional
step: truncating a feasible long-horizon trajectory need not leave a feasible
endpoint. The uniform Slater condition allows us to restore feasibility by
convexly mixing the truncated trajectory with a strictly feasible one.

For $j+L\leq N$, let
$$
J_{j,L}^{C,\star}(x)
:=
\min_{u_{j:j+L-1}}
\left\{
\sum_{k=j}^{j+L-1}
\frac{\Delta t}{2\lambda}\|u_k\|^2
-r(x_{j+L})
\right\}
$$
subject to the affine dynamics,
$x_j=x$, and the terminal constraint $h(x_{j+L})\leq0$.
We write
$$
J_j^{C,\star}(x)
:=
J_{j,N-j}^{C,\star}(x).
$$

Recall the constants
$$
\chi_h
:=
\frac{L_h(\overline J-\underline J)}{\gamma},
\qquad
G_h:=L_r+\chi_h,
$$
where $\gamma$ is the uniform Slater margin, $\overline J$ uniformly bounds
the objective of the strictly feasible Slater trajectories, and
$\underline J$ is a uniform lower bound on the relevant finite-horizon
objectives. The latter exists by the quadratic control penalty and the
Lipschitz terminal reward, as in the coercivity argument used previously.

\medskip
\noindent\textbf{Step 1: Local constrained horizon sensitivity.}

Fix $j,L,q$ with $j+L+q\leq N$, and set
$$
\tau:=q\Delta t.
$$
We show that
$$
\left|
J_{j,L+q}^{C,\star}(x)
-
J_{j,L}^{C,\star}(x)
\right|
\leq
C_{\mathrm{loc}}^{\rm C}\tau,
$$
where
$$
C_{\mathrm{loc}}^{\rm C}
:=
\max\left\{
G_hM+\frac{\lambda G_h^2}{2},
\frac{M^2}{2\lambda}
\right\}.
$$

Let $u_{j:j+L+q-1}^\star$ be optimal for
$J_{j,L+q}^{C,\star}(x)$, with trajectory $x_k^\star$, and let
$$
E
:=
\sum_{k=j+L}^{j+L+q-1}
\Delta t\,\|u_k^\star\|^2
$$
denote the energy in the omitted tail. Define
$$
d
:=
M\tau+\sqrt{\tau E}.
$$
As in the proof of Lemma~\ref{lem:uc-rhh-affine},
$$
\|x_{j+L+q}^\star-x_{j+L}^\star\|
\leq d.
$$

Although $x_{j+L+q}^\star$ is feasible, the truncated endpoint
$x_{j+L}^\star$ need not be. Since
$h(x_{j+L+q}^\star)\leq0$ and $h$ is $L_h$-Lipschitz,
each component satisfies
$$
h_i(x_{j+L}^\star)
\leq
L_h d.
$$
Set
$$
\delta:=L_hd.
$$

By Assumption~\ref{ass:rh-constrained}, the $L$-step problem admits a
strictly feasible Slater trajectory $(x_k^{\rm s},u_k^{\rm s})$ from the
same initial state $x$, satisfying
$$
h(x_{j+L}^{\rm s})
\leq
-\gamma\mathbf 1,
\qquad
J_{j,L}(x^{\rm s},u^{\rm s})
\leq
\overline J.
$$
Choose
$$
\alpha
:=
\frac{\delta}{\gamma+\delta}.
$$
In particular,
$$
0\leq\alpha\leq\frac{\delta}{\gamma}.
$$

Consider the convexly mixed controls
$$
\widetilde u_k
:=
(1-\alpha)u_k^\star+\alpha u_k^{\rm s},
\qquad
k=j,\ldots,j+L-1.
$$
Because the dynamics are affine and both trajectories start from the same
state, their corresponding states satisfy
$$
\widetilde x_k
=
(1-\alpha)x_k^\star+\alpha x_k^{\rm s}.
$$
By componentwise convexity of $h$,
$$
\begin{aligned}
h_i(\widetilde x_{j+L})
&\leq
(1-\alpha)h_i(x_{j+L}^\star)
+
\alpha h_i(x_{j+L}^{\rm s}) \\
&\leq
(1-\alpha)\delta-\alpha\gamma
=
0.
\end{aligned}
$$
Thus the mixed trajectory is feasible for the $L$-step problem.

Since the dynamics are affine, the quadratic running cost is convex in the
controls, and $-r$ is convex because $r$ is concave. Hence the truncated
objective is convex, so
$$
J_{j,L}(\widetilde x,\widetilde u)
\leq
(1-\alpha)
J_{j,L}(x^\star,u^\star)
+
\alpha
J_{j,L}(x^{\rm s},u^{\rm s}).
$$
Using
$J_{j,L}(x^\star,u^\star)\geq\underline J$ and
$J_{j,L}(x^{\rm s},u^{\rm s})\leq\overline J$ gives
$$
J_{j,L}(\widetilde x,\widetilde u)
\leq
J_{j,L}(x^\star,u^\star)
+
\alpha(\overline J-\underline J).
$$
Therefore,
$$
J_{j,L}^{C,\star}(x)
\leq
J_{j,L}(x^\star,u^\star)
+
\alpha(\overline J-\underline J).
$$

The objective of the truncated prefix and that of the original
$(L+q)$-step optimum differ by
$$
\begin{aligned}
J_{j,L}(x^\star,u^\star)
-
J_{j,L+q}^{C,\star}(x)
&=
r(x_{j+L+q}^\star)
-
r(x_{j+L}^\star)
-
\frac{E}{2\lambda} \\
&\leq
L_r d-\frac{E}{2\lambda}.
\end{aligned}
$$
Moreover,
$$
\alpha(\overline J-\underline J)
\leq
\frac{L_h(\overline J-\underline J)}{\gamma}d
=
\chi_h d.
$$
Combining the preceding inequalities,
$$
J_{j,L}^{C,\star}(x)
-
J_{j,L+q}^{C,\star}(x)
\leq
G_h d-\frac{E}{2\lambda}.
$$
Substituting $d=M\tau+\sqrt{\tau E}$,
$$
J_{j,L}^{C,\star}(x)
-
J_{j,L+q}^{C,\star}(x)
\leq
G_hM\tau
+
G_h\sqrt{\tau E}
-
\frac{E}{2\lambda}.
$$
Completing the square gives
$$
G_h\sqrt{\tau E}
-
\frac{E}{2\lambda}
\leq
\frac{\lambda G_h^2}{2}\tau,
$$
and hence
$$
J_{j,L}^{C,\star}(x)
-
J_{j,L+q}^{C,\star}(x)
\leq
\left(
G_hM+\frac{\lambda G_h^2}{2}
\right)\tau.
$$

For the reverse direction, let an optimal trajectory for
$J_{j,L}^{C,\star}(x)$ terminate at $\bar x$. Since
$h(\bar x)\leq0$, extend it for the next $q$ steps using
$$
u_k
=
-\bigl(A_k\bar x+c_k\bigr),
\qquad
k=j+L,\ldots,j+L+q-1.
$$
This keeps the state fixed at $\bar x$, so feasibility and the terminal
reward are preserved. By the drift bound,
$$
\|u_k\|
\leq M,
$$
and therefore the extension costs at most
$$
\frac{M^2}{2\lambda}\tau.
$$
Thus
$$
J_{j,L+q}^{C,\star}(x)
-
J_{j,L}^{C,\star}(x)
\leq
\frac{M^2}{2\lambda}\tau.
$$
Combining the two directions,
$$
\left|
J_{j,L+q}^{C,\star}(x)
-
J_{j,L}^{C,\star}(x)
\right|
\leq
C_{\mathrm{loc}}^{\rm C}q\Delta t.
$$

\medskip
\noindent\textbf{Step 2: Accumulation along the constrained RH-H trajectory.}

The remainder follows the telescoping argument in
Lemma~\ref{lem:uc-rhh-affine}. We include the main steps for completeness.

Let
$$
m:=N-H,
\qquad
W_j^{\rm C}(x):=J_{j,H}^{C,\star}(x),
\qquad
j=0,\ldots,m.
$$
At $j=m$, the horizon reaches the true terminal time, so
$$
W_m^{\rm C}(x)=J_m^{C,\star}(x).
$$

Let $x_j^{H,\star}$ and $u_j^{H,\star}$ denote the constrained RH-H
closed-loop state and applied control. Bellman's principle for the
$H$-step constrained problem gives
$$
W_j^{\rm C}(x_j^{H,\star})
=
\frac{\Delta t}{2\lambda}
\|u_j^{H,\star}\|^2
+
J_{j+1,H-1}^{C,\star}(x_{j+1}^{H,\star}).
$$
Applying the local horizon estimate with $q=1$,
$$
W_{j+1}^{\rm C}(x_{j+1}^{H,\star})
-
J_{j+1,H-1}^{C,\star}(x_{j+1}^{H,\star})
\leq
C_{\mathrm{loc}}^{\rm C}\Delta t.
$$
Consequently,
$$
\frac{\Delta t}{2\lambda}
\|u_j^{H,\star}\|^2
\leq
W_j^{\rm C}(x_j^{H,\star})
-
W_{j+1}^{\rm C}(x_{j+1}^{H,\star})
+
C_{\mathrm{loc}}^{\rm C}\Delta t.
$$
Summing from $j=0$ to $m-1$ yields
$$
\sum_{j=0}^{m-1}
\frac{\Delta t}{2\lambda}
\|u_j^{H,\star}\|^2
\leq
W_0^{\rm C}(x_0)
-
W_m^{\rm C}(x_m^{H,\star})
+
mC_{\mathrm{loc}}^{\rm C}\Delta t.
$$

From time $m=N-H$ onward, the remaining horizon is at most $H$, so RH-H
coincides with full remaining-horizon re-optimization. Its remaining cost
is therefore
$$
J_m^{C,\star}(x_m^{H,\star})
=
W_m^{\rm C}(x_m^{H,\star}).
$$
Hence
$$
J(\bx^{H,\star})
\leq
W_0^{\rm C}(x_0)
+
mC_{\mathrm{loc}}^{\rm C}\Delta t.
$$

Finally,
$$
W_0^{\rm C}(x_0)
=
J_{0,H}^{C,\star}(x_0),
\qquad
J(\bx^\star)
=
J_{0,N}^{C,\star}(x_0).
$$
Applying the local constrained horizon estimate once more with
$L=H$ and $q=m$ gives
$$
W_0^{\rm C}(x_0)-J(\bx^\star)
\leq
C_{\mathrm{loc}}^{\rm C}m\Delta t.
$$
Therefore,
$$
J(\bx^{H,\star})-J(\bx^\star)
\leq
2C_{\mathrm{loc}}^{\rm C}m\Delta t.
$$
Since $\bx^\star$ is globally optimal among feasible trajectories,
$$
J(\bx^{H,\star})-J(\bx^\star)\geq0.
$$
Using $m=N-H$ and $\Delta t=1/N$,
$$
0
\leq
J(\bx^{H,\star})-J(\bx^\star)
\leq
C_H^{\rm C}\frac{N-H}{N},
$$
where
$$
C_H^{\rm C}
:=
2C_{\mathrm{loc}}^{\rm C}
=
2\max\left\{
G_hM+\frac{\lambda G_h^2}{2},
\frac{M^2}{2\lambda}
\right\}.
$$
This proves the claim.
\end{proof}
\subsection{Proof of Lemma~\ref{lem:c-rht-affine}}
\label{app:proof-c-rht-affine}

\begin{proof}
The proof combines the no-accumulation argument of
Lemma~\ref{lem:uc-rht-affine} with the Slater-mixing argument used in
Lemma~\ref{lem:c-rhh-affine}. The only new issue is that truncating the
control horizon and rolling forward with zero control may violate the
terminal constraint; the uniform Slater condition is used to repair this
candidate without changing the order of the bound.

For $j+L\leq N$, define the constrained rollout value
$$
\widehat J_{j,L}^{C,\star}(x)
:=
\min_{u_{j:j+L-1}}
\left\{
\sum_{k=j}^{j+L-1}
\frac{\Delta t}{2\lambda}\|u_k\|^2
-r(\widehat x_N)
\right\},
$$
where the state follows the controlled affine dynamics for
$k=j,\ldots,j+L-1$, is subsequently propagated with $u_k=0$ until time
$N$, and satisfies
$$
h(\widehat x_N)\leq0.
$$
As before, $J_j^{C,\star}(x)$ denotes the constrained full
remaining-horizon value.

Recall
$$
\chi_h
:=
\frac{L_h(\overline J-\underline J)}{\gamma},
\qquad
G_h:=L_r+\chi_h,
$$
where $\underline J$ is a uniform lower bound on the relevant rollout
objectives and $\overline J$ bounds the objective of the corresponding
strictly feasible Slater trajectories.

\medskip
\noindent\textbf{Step 1: Terminal rollout does not accumulate horizon errors.}

The nesting argument from Lemma~\ref{lem:uc-rht-affine} remains valid in
the constrained case. Indeed, any feasible $L$-step rollout control remains
feasible for the $(L+1)$-step problem by setting the additional control to
zero; the resulting terminal state is unchanged. Hence
$$
J_j^{C,\star}(x)
\leq
\widehat J_{j,L+1}^{C,\star}(x)
\leq
\widehat J_{j,L}^{C,\star}(x).
$$

Let
$$
m:=N-H,
\qquad
\widehat W_j^{\rm C}(x)
:=
\widehat J_{j,H}^{C,\star}(x),
\qquad
j=0,\ldots,m.
$$
At $j=m$, the control horizon reaches the true terminal time, so
$$
\widehat W_m^{\rm C}(x)
=
J_m^{C,\star}(x).
$$

Let $\widehat x_j^{H,\star}$ and $\widehat u_j^{H,\star}$ denote the
constrained RH-T closed-loop state and applied control. Optimality gives
$$
\widehat W_j^{\rm C}(\widehat x_j^{H,\star})
=
\frac{\Delta t}{2\lambda}
\|\widehat u_j^{H,\star}\|^2
+
\widehat J_{j+1,H-1}^{C,\star}
(\widehat x_{j+1}^{H,\star}).
$$
By nestedness,
$$
\widehat W_{j+1}^{\rm C}
(\widehat x_{j+1}^{H,\star})
\leq
\widehat J_{j+1,H-1}^{C,\star}
(\widehat x_{j+1}^{H,\star}),
$$
and therefore
$$
\frac{\Delta t}{2\lambda}
\|\widehat u_j^{H,\star}\|^2
\leq
\widehat W_j^{\rm C}(\widehat x_j^{H,\star})
-
\widehat W_{j+1}^{\rm C}(\widehat x_{j+1}^{H,\star}).
$$
Summing over $j=0,\ldots,m-1$ and using
$\widehat W_m^{\rm C}=J_m^{C,\star}$ yields
$$
J(\widehat{\bx}^{H,\star})
\leq
\widehat J_{0,H}^{C,\star}(x_0).
$$
Consequently,
$$
0
\leq
J(\widehat{\bx}^{H,\star})-J(\bx^\star)
\leq
\widehat J_{0,H}^{C,\star}(x_0)
-
J_0^{C,\star}(x_0).
$$
It therefore remains only to bound this initial constrained rollout gap.

\medskip
\noindent\textbf{Step 2: Bounding the constrained rollout gap.}

Let $u_0^\star,\ldots,u_{N-1}^\star$ be the controls associated with the
full constrained optimum $\bx^\star$, and define
$$
\tau_H
:=
(N-H)\Delta t
=
\frac{N-H}{N},
$$
together with its omitted-tail energy
$$
E_H
:=
\sum_{k=H}^{N-1}
\Delta t\,\|u_k^\star\|^2.
$$

As in Lemma~\ref{lem:uc-rht-affine}, retain the first $H$ optimal controls
and set the remaining controls to zero:
$$
\widetilde u_k
=
\begin{cases}
u_k^\star, & k=0,\ldots,H-1,\\
0, & k=H,\ldots,N-1.
\end{cases}
$$
Let $\widetilde x_N$ denote the resulting terminal state. The affine
perturbation argument of Lemma~\ref{lem:uc-rht-affine} gives
$$
\|x_N^\star-\widetilde x_N\|
\leq
\kappa_A\sqrt{\tau_HE_H}.
$$
Since the full-horizon optimum is feasible,
$h(x_N^\star)\leq0$, and $h$ is $L_h$-Lipschitz. Thus each terminal
constraint violation of the truncated rollout is bounded by
$$
h_i(\widetilde x_N)
\leq
\delta_H,
\qquad
\delta_H
:=
L_h\kappa_A\sqrt{\tau_HE_H}.
$$

The zero-tail rollout need not itself be feasible. By
Assumption~\ref{ass:rh-constrained}, however, the $H$-step rollout problem
admits a strictly feasible trajectory $(x^{\rm s},u^{\rm s})$ from the
same initial state satisfying
$$
h(x_N^{\rm s})
\leq
-\gamma\mathbf 1,
\qquad
\widehat J_{0,H}(x^{\rm s},u^{\rm s})
\leq
\overline J.
$$
Set
$$
\alpha
:=
\frac{\delta_H}{\gamma+\delta_H}
\leq
\frac{\delta_H}{\gamma},
$$
and form the mixed rollout controls
$$
u_k^\alpha
:=
(1-\alpha)\widetilde u_k+\alpha u_k^{\rm s},
\qquad
k=0,\ldots,H-1,
$$
with zero control thereafter.

Because the dynamics are affine, the resulting terminal state satisfies
$$
x_N^\alpha
=
(1-\alpha)\widetilde x_N
+
\alpha x_N^{\rm s}.
$$
By componentwise convexity of $h$,
$$
\begin{aligned}
h_i(x_N^\alpha)
&\leq
(1-\alpha)h_i(\widetilde x_N)
+
\alpha h_i(x_N^{\rm s}) \\
&\leq
(1-\alpha)\delta_H-\alpha\gamma
=
0.
\end{aligned}
$$
Hence the mixed rollout is feasible.

The rollout objective is convex in the active controls, since the dynamics
are affine, the running cost is quadratic, and $-r$ is convex. Therefore,
$$
\widehat J_{0,H}(u^\alpha)
\leq
(1-\alpha)\widehat J_{0,H}(\widetilde u)
+
\alpha\widehat J_{0,H}(u^{\rm s}).
$$
Using the uniform lower and upper bounds
$\widehat J_{0,H}(\widetilde u)\geq\underline J$ and
$\widehat J_{0,H}(u^{\rm s})\leq\overline J$,
$$
\widehat J_{0,H}(u^\alpha)
\leq
\widehat J_{0,H}(\widetilde u)
+
\alpha(\overline J-\underline J).
$$
Moreover,
$$
\alpha(\overline J-\underline J)
\leq
\frac{L_h(\overline J-\underline J)}{\gamma}
\kappa_A\sqrt{\tau_HE_H}
=
\chi_h\kappa_A\sqrt{\tau_HE_H}.
$$

Since $u^\alpha$ is feasible for the constrained rollout problem,
$$
\widehat J_{0,H}^{C,\star}(x_0)
\leq
\widehat J_{0,H}(u^\alpha).
$$
On the other hand, comparing the zero-tail trajectory with the full
optimum gives
$$
\widehat J_{0,H}(\widetilde u)
-
J_0^{C,\star}(x_0)
=
-\frac{E_H}{2\lambda}
+
r(x_N^\star)-r(\widetilde x_N).
$$
Using the $L_r$-Lipschitz continuity of $r$,
$$
\widehat J_{0,H}(\widetilde u)
-
J_0^{C,\star}(x_0)
\leq
L_r\kappa_A\sqrt{\tau_HE_H}
-
\frac{E_H}{2\lambda}.
$$
Combining the reward and feasibility-repair contributions,
$$
\widehat J_{0,H}^{C,\star}(x_0)
-
J_0^{C,\star}(x_0)
\leq
G_h\kappa_A\sqrt{\tau_HE_H}
-
\frac{E_H}{2\lambda}.
$$
Completing the square in $\sqrt{E_H}$ gives
$$
G_h\kappa_A\sqrt{\tau_HE_H}
-
\frac{E_H}{2\lambda}
\leq
\frac{\lambda}{2}\kappa_A^2G_h^2\tau_H.
$$
Therefore,
$$
0
\leq
\widehat J_{0,H}^{C,\star}(x_0)
-
J_0^{C,\star}(x_0)
\leq
\frac{\lambda}{2}\kappa_A^2G_h^2
\frac{N-H}{N}.
$$

Combining this estimate with Step~1 yields
$$
0
\leq
J(\widehat{\bx}^{H,\star})
-
J(\bx^\star)
\leq
\frac{\lambda}{2}\kappa_A^2G_h^2
\frac{N-H}{N}.
$$
Thus
$$
C_T^{\rm C}
=
\frac{\lambda}{2}\kappa_A^2G_h^2,
$$
which proves the claim.
\end{proof}
\subsection{Proof of Lemma~\ref{thm: subproblem convergence}}
\label{app:proof-subproblem-convergence}

We first collect two properties of each frozen RH problem that are used in
the proof. Fix a physical index $j$ and current state $\bar x_j$. Let
$H_j=\min\{H,N-j\}$ and write
$\mathbf{x}_j=(x_{j+1},\ldots,x_{j+H_j})$. Under
Assumption~\ref{ass:rh-affine}, the stacked residuals satisfy
$$
\operatorname{col}(e_j,\ldots,e_{j+H_j-1})
=
B_j\mathbf{x}_j-a_j-S_j\bar x_j,
$$
where
$$
B_j=
\begin{bmatrix}
I_d & 0 & \cdots & 0\\
-(I_d+\Delta t A_{j+1}) & I_d & \ddots & \vdots\\
0 & \ddots & \ddots & 0\\
0 & \cdots & -(I_d+\Delta t A_{j+H_j-1}) & I_d
\end{bmatrix},
\qquad
S_j=
\begin{bmatrix}
I_d+\Delta t A_j\\
0\\
\vdots\\
0
\end{bmatrix},
$$
and
$a_j=\Delta t\,\operatorname{col}(c_j,\ldots,c_{j+H_j-1})$.
Recall
$$
\sigma_j
:=
\frac{\sigma_{\min}^2(B_j)}{\lambda\Delta t}>0.
$$

For RH-T, let $Q_j$ denote the linear part of the terminal map $\Phi_j$;
for RH-H, $Q_j=I$. Thus
$$
L_{g,j}:=L_h\|Q_j\|,
\qquad
L_{d,j}:=\frac{L_{g,j}^2}{\sigma_j}.
$$

\begin{lemma}[Strong convexity and well-posedness]
\label{lem:app-pd-strong-convexity}
Under Assumptions~\ref{ass:rh-affine}--\ref{ass:rh-constrained},
$\varphi_j(\cdot;\bar x_j)$ and
$\mathcal L_j(\cdot,\mu;\bar x_j)$ are $\sigma_j$-strongly convex for every
$\mu\geq0$. Moreover, the primal response
$$
\mathbf{x}_j(\mu)
:=
\argmin_{\mathbf{x}_j}
\mathcal L_j(\mathbf{x}_j,\mu;\bar x_j)
$$
exists and is unique, and the constrained local problem has a unique
optimizer $\mathbf{x}_j^\star$.
\end{lemma}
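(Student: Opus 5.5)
The plan is to write the frozen objective explicitly in the stacked variable $\mathbf{x}_j$ and read off strong convexity from its quadratic part. Under Assumption~\ref{ass:rh-affine}, the residual identity displayed just above gives
\[
\mathcal{E}_j(\mathbf{x}_j)=\frac{1}{2\lambda\Delta t}\bigl\|B_j\mathbf{x}_j-a_j-S_j\bar x_j\bigr\|^2 .
\]
Its Hessian is $B_j^\top B_j/(\lambda\Delta t)$. Since $B_j$ is block lower triangular with identity diagonal blocks, $\det B_j=1$, so $\sigma_{\min}(B_j)>0$. Hence $B_j^\top B_j/(\lambda\Delta t)\succeq\sigma_j I$, and $\mathcal{E}_j$ is $\sigma_j$-strongly convex.

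The terminal terms add only convexity. The map $\Phi_j$ is either the identity (RH-H) or the uncontrolled affine rollout $X_{j+H_j,N}$ (RH-T). In both cases $x_{j+H_j}\mapsto\Phi_j(x_{j+H_j})=Q_jx_{j+H_j}+q_j$ is affine. Composing with the selection $\mathbf{x}_j\mapsto x_{j+H_j}$, we get:
\begin{itemize}
\item $-r(\Phi_j(\cdot))$ is convex, because $r$ is concave (Assumption~\ref{ass:rh-constrained});
\item each $g_{j,i}=h_i\circ\Phi_j$ is convex, because $h$ is componentwise convex.
\end{itemize}
Therefore $\varphi_j$ is $\sigma_j$-strongly convex. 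For any $\mu\ge0$, $\mathcal L_j=\varphi_j+\mu^\top g_j$ is $\varphi_j$ plus a nonnegative combination of convex functions, so it is also $\sigma_j$-strongly convex.

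Existence and uniqueness follow from standard arguments.
\begin{itemize}
\item \emph{Primal response.} $\mathcal L_j(\cdot,\mu;\bar x_j)$ is finite-valued and convex on $\mathbb{R}^{dH_j}$, hence continuous. A strongly convex continuous function is coercive: it grows at least like $\frac{\sigma_j}{2}\|\mathbf{x}_j\|^2$ minus an affine term, using a subgradient at any point. So a minimizer exists, and strong convexity makes it unique.
\item \emph{Constrained problem.} The feasible set $\{\mathbf{x}_j: g_j(\mathbf{x}_j)\le0\}$ is closed, by continuity of the Lipschitz $h$, and convex, as a sublevel set of convex functions. It is nonempty by the Slater trajectory of Assumption~\ref{ass:rh-constrained}. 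Minimizing a coercive, strictly convex continuous function over a nonempty closed convex set gives exactly one minimizer $\mathbf{x}_j^\star$.
\end{itemize}

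The main point needing care is the claim that $\Phi_j$ is affine for RH-T. This requires the uncontrolled continuation to follow the affine discretized dynamics of Assumption~\ref{ass:rh-affine}. Then $X_{j+H_j,N}(x)=\prod_{s}(I+\Delta tA_s)\,x+\text{const}$, so $Q_j$ is that product, with $\|Q_j\|\le\kappa_A$. This is what makes $h\circ\Phi_j$ and $r\circ\Phi_j$ inherit convexity and concavity.

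If the flow map were a nonlinear learned map, this step would fail. In that case I would instead restrict to the affine setting explicitly, as the lemma's hypotheses allow.

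A minor check is that $\Delta t A_k$ plays no role in the diagonal blocks of $B_j$: those blocks are exactly $I_d$. So invertibility of $B_j$ needs no smallness condition on $\Delta t$.
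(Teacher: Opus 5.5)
Your proposal is correct and matches the paper's proof essentially step for step. It gets strong convexity from the residual form with Hessian $B_j^\top B_j/(\lambda\Delta t)\succeq\sigma_j I$, where $B_j$ is nonsingular by its unit block-triangular structure, and convexity of $-r\circ\Phi_j$ and $h_i\circ\Phi_j$ from the affineness of $\Phi_j$; it then gets coercivity via a subgradient affine lower bound, and existence and uniqueness of $\mathbf{x}_j^\star$ from the nonempty convex feasible set given by Slater. The differences are cosmetic: you derive coercivity directly from strong convexity of the whole Lagrangian rather than bounding the quadratic part and the convex remainder separately, and you make explicit two points the paper leaves implicit, namely closedness of the feasible set and the affine form of $X_{j+H_j,N}$ under the affine-dynamics assumption.
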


\begin{proof}
Since $B_j$ is block lower triangular with identity diagonal blocks, it is
nonsingular. The Hessian of the quadratic path cost is
$$
\frac{1}{\lambda\Delta t}B_j^\top B_j
\succeq
\sigma_j I.
$$
Moreover, $\Phi_j$ is affine, $r$ is concave, and each component of $h$ is
convex. Hence $-r\circ\Phi_j$ and $g_{j,i}=h_i\circ\Phi_j$ are convex.
Therefore $\varphi_j$ is $\sigma_j$-strongly convex, and so is
$\mathcal L_j=\varphi_j+\mu^\top g_j$ for every $\mu\geq0$.

It remains to show attainment. Let
$X=\mathbf{x}_j$, $\zeta_j=a_j+S_j\bar x_j$, and write
$$
q_j(X)
:=
\frac{1}{2\lambda\Delta t}\|B_jX-\zeta_j\|^2,
\qquad
\psi_\mu(X)
:=
-r(\Phi_j(x_{j+H_j}))+\mu^\top g_j(X).
$$
The finite-valued convex function $\psi_\mu$ admits an affine lower bound:
for any fixed $\widehat X$ and
$s_\mu\in\partial\psi_\mu(\widehat X)$,
$$
\psi_\mu(X)
\geq
\psi_\mu(\widehat X)+s_\mu^\top(X-\widehat X).
$$
Also,
$$
\begin{aligned}
q_j(X)
&\geq
\frac{1}{4\lambda\Delta t}\|B_jX\|^2
-\frac{1}{2\lambda\Delta t}\|\zeta_j\|^2\\
&\geq
\frac{\sigma_{\min}^2(B_j)}
{4\lambda\Delta t}\|X\|^2
-\frac{1}{2\lambda\Delta t}\|\zeta_j\|^2.
\end{aligned}
$$
Thus
$$
\mathcal L_j(X,\mu;\bar x_j)
\geq
\frac{\sigma_{\min}^2(B_j)}
{4\lambda\Delta t}\|X\|^2
-\|s_\mu\|\,\|X\|-C
$$
for a finite constant $C$, and hence the Lagrangian is coercive.
Therefore its minimum is attained; strong convexity makes the minimizer
unique.

Finally, Assumption~\ref{ass:rh-constrained} gives a nonempty convex
feasible set for the frozen RH problem. Since its objective is strongly
convex and coercive, the constrained minimizer exists and is unique.
\end{proof}

\begin{lemma}[Dual regularity]
\label{lem:app-pd-dual-regularity}
Under Assumptions~\ref{ass:rh-affine}--\ref{ass:rh-constrained},
strong duality holds, $\mathcal M_j^\star$ is nonempty and compact, and
$d_j$ is differentiable with
$$
\nabla d_j(\mu)=g_j(\mathbf{x}_j(\mu)).
$$
Furthermore,
$$
\|\mathbf{x}_j(\mu)-\mathbf{x}_j(\nu)\|
\leq
\frac{L_{g,j}}{\sigma_j}\|\mu-\nu\|,
$$
and
$$
\|\nabla d_j(\mu)-\nabla d_j(\nu)\|
\leq
L_{d,j}\|\mu-\nu\|.
$$
\end{lemma}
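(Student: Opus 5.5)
We prove the four claims of Lemma~\ref{lem:app-pd-dual-regularity} in order, using Lemma~\ref{lem:app-pd-strong-convexity} throughout.

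\textbf{Strong duality and dual solution set.} The frozen problem~\eqref{eq_RH_j} is convex: the objective $\varphi_j$ is strongly convex, and each $g_{j,i}=h_i\circ\Phi_j$ is convex because $\Phi_j$ is affine. Assumption~\ref{ass:rh-constrained} provides a Slater point $\mathbf{x}^{\rm s}$ with $g_j(\mathbf{x}^{\rm s})\leq-\gamma\mathbf 1$. Standard convex duality then gives strong duality and a nonempty set $\mathcal M_j^\star$.

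For boundedness, take any $\mu^\star\in\mathcal M_j^\star$ and evaluate the Lagrangian at the Slater point:
$$
p_j^\star=d_j(\mu^\star)\leq\varphi_j(\mathbf{x}^{\rm s})-\gamma\mathbf 1^\top\mu^\star .
$$
This yields $\|\mu^\star\|_1\leq(\overline J_D-p_j^\star)/\gamma$. The set $\mathcal M_j^\star$ is closed because $d_j$ is concave and upper semicontinuous, being an infimum of affine functions of $\mu$. Closed and bounded, it is compact.

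\textbf{Differentiability of $d_j$.} By Lemma~\ref{lem:app-pd-strong-convexity}, $\mathcal L_j(\cdot,\mu)$ has a unique minimizer $\mathbf{x}_j(\mu)$. Danskin's theorem then applies to $d_j(\mu)=\min_{\mathbf x}\mathcal L_j(\mathbf x,\mu)$, whose integrand is affine in $\mu$. It gives $\nabla d_j(\mu)=g_j(\mathbf{x}_j(\mu))$.

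The compactness hypothesis in Danskin's theorem is met locally. Coercivity of $\mathcal L_j$, uniform for $\mu$ in a bounded set, confines the minimizers to a compact set.

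\textbf{Lipschitz primal response.} This is the main step. Write $x=\mathbf{x}_j(\mu)$ and $y=\mathbf{x}_j(\nu)$, and let $F_\mu(\mathbf x)=\varphi_j(\mathbf x)+\mu^\top g_j(\mathbf x)$.

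Because $F_\mu$ is $\sigma_j$-strongly convex and minimized at $x$, and $F_\nu$ is minimized at $y$, we have
$$
F_\mu(y)-F_\mu(x)\geq\tfrac{\sigma_j}{2}\|y-x\|^2,
\qquad
F_\nu(x)-F_\nu(y)\geq\tfrac{\sigma_j}{2}\|x-y\|^2 .
$$
Adding the two inequalities, the $\varphi_j$ terms cancel:
$$
\sigma_j\|x-y\|^2\leq(\mu-\nu)^\top\bigl(g_j(y)-g_j(x)\bigr).
$$
Next, $g_j$ is Lipschitz with constant $L_{g,j}=L_h\|Q_j\|$. It depends only on the last block $x_{j+H_j}$, through the affine map with linear part $Q_j$. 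Cauchy--Schwarz then gives
$$
\sigma_j\|x-y\|^2\leq\|\mu-\nu\|\,L_{g,j}\|x-y\| ,
$$
so $\|x-y\|\leq(L_{g,j}/\sigma_j)\|\mu-\nu\|$.

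The point needing care is norm bookkeeping. For $m>1$ constraints, $\|g_j(y)-g_j(x)\|_2\leq L_h\|Q_j\|\,\|x-y\|$ requires reading $L_h$ as the Lipschitz constant of the vector map $h$ in the Euclidean norm. This matches Assumption~\ref{ass:main-discretization}(i).

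\textbf{Gradient Lipschitz constant.} Compose the two Lipschitz bounds:
$$
\|\nabla d_j(\mu)-\nabla d_j(\nu)\|\leq L_{g,j}\|\mathbf{x}_j(\mu)-\mathbf{x}_j(\nu)\|\leq \frac{L_{g,j}^2}{\sigma_j}\|\mu-\nu\| .
$$
This is the claimed bound with $L_{d,j}=L_{g,j}^2/\sigma_j$.
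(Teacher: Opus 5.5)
Your proof is correct. For strong duality, the Slater bound on $\|\mu^\star\|_1$, the Lipschitz primal response (adding the two strong-convexity inequalities so the $\varphi_j$ terms cancel), and the composition giving $L_{d,j}=L_{g,j}^2/\sigma_j$, it matches the paper step for step. You differ only in how you obtain differentiability of $d_j$ and closedness of $\mathcal M_j^\star$.

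The paper proves the Lipschitz primal response first. It then uses the sandwich $d_j(\nu)\le\mathcal L_j(\mathbf{x}_j(\mu),\nu)=d_j(\mu)+(\nu-\mu)^\top g_j(\mathbf{x}_j(\mu))$, together with the same inequality with $\mu,\nu$ swapped, to get $\bigl|d_j(\nu)-d_j(\mu)-(\nu-\mu)^\top g_j(\mathbf{x}_j(\mu))\bigr|\le (L_{g,j}^2/\sigma_j)\|\nu-\mu\|^2$ for $\mu,\nu\ge0$. This route is elementary and needs no compactness. It gives an explicit quadratic remainder and supplies the continuity of $d_j$, which the paper uses (by forward reference) for closedness.

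You instead invoke Danskin's theorem. This decouples differentiability from the Lipschitz estimate and needs uniqueness of the minimizer only at the point itself. It therefore even gives differentiability of $d_j$, viewed on all of $\mathbb{R}^m$, at boundary points of the orthant; the paper's estimate is relative to $\mathbb{R}^m_+$, which is all projected dual ascent needs.

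The price is the localization step, which you should make explicit. Lipschitz lower bounds on $-r\circ\Phi_j$ and on $\mu^\top g_j$ (with constants linear in $\|\mu\|$), combined with the quadratic growth of the path cost, confine all minimizers for $\mu$ near a given point to one fixed compact ball. Restricting the minimization to that ball then leaves $d_j$ unchanged locally, which is what Danskin's fixed-compact-set hypothesis requires.

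Your closedness argument, via upper semicontinuity of $d_j$ as an infimum of affine functions, is slightly cleaner than the paper's because it does not depend on the differentiability result.
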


\begin{proof}
By Assumption~\ref{ass:rh-constrained}, the local convex problem satisfies
Slater's condition. Standard convex duality therefore gives strong duality
and existence of an optimal multiplier.

To see boundedness, let $\widetilde{\mathbf{x}}_j$ be a Slater point with
$g_j(\widetilde{\mathbf{x}}_j)\leq-\gamma\mathbf 1$. For any
$\mu^\star\in\mathcal M_j^\star$,
$$
\begin{aligned}
p_j^\star
=
d_j(\mu^\star)
&\leq
\mathcal L_j(\widetilde{\mathbf{x}}_j,\mu^\star;\bar x_j)\\
&\leq
\varphi_j(\widetilde{\mathbf{x}}_j;\bar x_j)
-\gamma\|\mu^\star\|_1.
\end{aligned}
$$
Hence
$$
\|\mu^\star\|_1
\leq
\frac{\varphi_j(\widetilde{\mathbf{x}}_j;\bar x_j)-p_j^\star}{\gamma}.
$$
Thus $\mathcal M_j^\star$ is bounded; closedness follows from continuity of
$d_j$, proved below, so it is compact.

We next establish Lipschitzness of the primal response. Strong convexity
and optimality of $\mathbf{x}_j(\mu)$ and $\mathbf{x}_j(\nu)$ give
$$
\mathcal L_j(\mathbf{x}_j(\nu),\mu)
\geq
\mathcal L_j(\mathbf{x}_j(\mu),\mu)
+
\frac{\sigma_j}{2}
\|\mathbf{x}_j(\nu)-\mathbf{x}_j(\mu)\|^2
$$
and
$$
\mathcal L_j(\mathbf{x}_j(\mu),\nu)
\geq
\mathcal L_j(\mathbf{x}_j(\nu),\nu)
+
\frac{\sigma_j}{2}
\|\mathbf{x}_j(\mu)-\mathbf{x}_j(\nu)\|^2.
$$
Adding these inequalities and cancelling the common objective terms yields
$$
\sigma_j
\|\mathbf{x}_j(\mu)-\mathbf{x}_j(\nu)\|^2
\leq
(\mu-\nu)^\top
\bigl(
g_j(\mathbf{x}_j(\nu))
-
g_j(\mathbf{x}_j(\mu))
\bigr).
$$
Using Cauchy--Schwarz and the $L_{g,j}$-Lipschitzness of $g_j$,
$$
\sigma_j
\|\mathbf{x}_j(\mu)-\mathbf{x}_j(\nu)\|^2
\leq
L_{g,j}\|\mu-\nu\|
\|\mathbf{x}_j(\mu)-\mathbf{x}_j(\nu)\|,
$$
which proves
$$
\|\mathbf{x}_j(\mu)-\mathbf{x}_j(\nu)\|
\leq
\frac{L_{g,j}}{\sigma_j}\|\mu-\nu\|.
$$

To identify the dual gradient, optimality of the two primal responses gives
$$
d_j(\nu)
\leq
d_j(\mu)+(\nu-\mu)^\top g_j(\mathbf{x}_j(\mu))
$$
and
$$
d_j(\mu)
\leq
d_j(\nu)+(\mu-\nu)^\top g_j(\mathbf{x}_j(\nu)).
$$
Therefore
$$
\begin{aligned}
&
\left|
d_j(\nu)-d_j(\mu)
-(\nu-\mu)^\top g_j(\mathbf{x}_j(\mu))
\right|\\
&\qquad\leq
\|\nu-\mu\|\,
\|g_j(\mathbf{x}_j(\nu))-g_j(\mathbf{x}_j(\mu))\|\\
&\qquad\leq
\frac{L_{g,j}^2}{\sigma_j}\|\nu-\mu\|^2.
\end{aligned}
$$
Hence
$\nabla d_j(\mu)=g_j(\mathbf{x}_j(\mu))$. Finally,
$$
\begin{aligned}
\|\nabla d_j(\mu)-\nabla d_j(\nu)\|
&\leq
L_{g,j}
\|\mathbf{x}_j(\mu)-\mathbf{x}_j(\nu)\|\\
&\leq
\frac{L_{g,j}^2}{\sigma_j}\|\mu-\nu\|
=
L_{d,j}\|\mu-\nu\|.
\end{aligned}
$$
\end{proof}

\begin{proof}[Proof of Lemma~\ref{thm: subproblem convergence}]
Fix $j$ and suppress the dependence on $\bar x_j$. From
Lemma~\ref{lem:app-pd-dual-regularity},
$$
\begin{aligned}
\langle
\nabla d_j(\mu)-\nabla d_j(\nu),\mu-\nu
\rangle
&=
(\mu-\nu)^\top
\bigl(
g_j(\mathbf{x}_j(\mu))
-
g_j(\mathbf{x}_j(\nu))
\bigr)\\
&\leq
-\sigma_j
\|\mathbf{x}_j(\mu)-\mathbf{x}_j(\nu)\|^2\\
&\leq
-\frac{1}{L_{d,j}}
\|\nabla d_j(\mu)-\nabla d_j(\nu)\|^2.
\end{aligned}
$$

Let
$$
T(\mu)
:=
\operatorname{Proj}_{\mathbb R_+^m}
\bigl(\mu+\eta_\mu\nabla d_j(\mu)\bigr).
$$
A point $z$ maximizes $d_j$ over $\mathbb R_+^m$ if and only if
$T(z)=z$. For $z\in\mathcal M_j^\star$, nonexpansiveness of the projection
and the preceding inequality imply
$$
\begin{aligned}
\|\mu_j^{(s+1)}-z\|^2
&\leq
\|
\mu_j^{(s)}-z
+
\eta_\mu(
\nabla d_j(\mu_j^{(s)})-\nabla d_j(z))
\|^2\\
&\leq
\|\mu_j^{(s)}-z\|^2
-
\eta_\mu
\left(
\frac{2}{L_{d,j}}-\eta_\mu
\right)
\|
\nabla d_j(\mu_j^{(s)})-\nabla d_j(z)
\|^2.
\end{aligned}
$$
Thus, whenever
$0<\eta_\mu<2/L_{d,j}$,
the sequence is Fej\'er monotone with respect to
$\mathcal M_j^\star$ and therefore bounded.

Using firm nonexpansiveness of the projection gives more precisely
$$
\begin{aligned}
&
\|\mu_j^{(s+1)}-z\|^2
+
\|
\mu_j^{(s)}-\mu_j^{(s+1)}
+
\eta_\mu(
\nabla d_j(\mu_j^{(s)})-\nabla d_j(z))
\|^2\\
&\qquad\leq
\|\mu_j^{(s)}-z\|^2
-
\eta_\mu
\left(
\frac{2}{L_{d,j}}-\eta_\mu
\right)
\|
\nabla d_j(\mu_j^{(s)})-\nabla d_j(z)
\|^2.
\end{aligned}
$$
Summing over $s$ shows both squared terms on the left are summable.
Consequently,
$$
\|\mu_j^{(s+1)}-\mu_j^{(s)}\|\to0.
$$
Since the sequence is bounded, it admits a convergent subsequence
$\mu_j^{(s_k)}\to\bar\mu_j$. Passing to the limit in
$\mu_j^{(s+1)}=T(\mu_j^{(s)})$ gives $T(\bar\mu_j)=\bar\mu_j$; hence
$\bar\mu_j\in\mathcal M_j^\star$. Fej\'er monotonicity with respect to
$\bar\mu_j$ then implies convergence of the full sequence:
$$
\mu_j^{(s)}\to\bar\mu_j.
$$
Lemma~\ref{lem:app-pd-dual-regularity} gives continuity of the primal
response, and Lemma~\ref{lem:app-pd-strong-convexity} gives uniqueness.
Therefore
$$
\mathbf{x}_j^{(s)}
=
\mathbf{x}_j(\mu_j^{(s)})
\longrightarrow
\mathbf{x}_j(\bar\mu_j)
=
\mathbf{x}_j^\star.
$$

\medskip
\noindent\textbf{Finite-iteration bounds.}
Assume now $0<\eta_\mu\leq1/L_{d,j}$. The projection optimality condition
gives, for any $z\geq0$,
$$
\left\langle
\nabla d_j(\mu_j^{(k)}),
z-\mu_j^{(k+1)}
\right\rangle
\leq
\frac{
\|z-\mu_j^{(k)}\|^2
-
\|z-\mu_j^{(k+1)}\|^2
-
\|\mu_j^{(k+1)}-\mu_j^{(k)}\|^2
}
{2\eta_\mu}.
$$
By concavity and $L_{d,j}$-smoothness,
$$
\begin{aligned}
d_j(z)-d_j(\mu_j^{(k+1)})
&\leq
\left\langle
\nabla d_j(\mu_j^{(k)}),
z-\mu_j^{(k+1)}
\right\rangle\\
&\quad+
\frac{L_{d,j}}{2}
\|\mu_j^{(k+1)}-\mu_j^{(k)}\|^2.
\end{aligned}
$$
Hence
$$
\begin{aligned}
d_j(z)-d_j(\mu_j^{(k+1)})
&\leq
\frac{
\|z-\mu_j^{(k)}\|^2
-
\|z-\mu_j^{(k+1)}\|^2
}
{2\eta_\mu}\\
&\quad-
\left(
\frac{1}{2\eta_\mu}
-
\frac{L_{d,j}}{2}
\right)
\|\mu_j^{(k+1)}-\mu_j^{(k)}\|^2.
\end{aligned}
$$
Taking $z=\mu_j^{(k)}$ shows that
$d_j(\mu_j^{(k)})$ is nondecreasing. Let
$\mu_j^\dagger\in\mathcal M_j^\star$ satisfy
$$
\|\mu_j^{(0)}-\mu_j^\dagger\|
=
R_j
:=
\operatorname{dist}
(\mu_j^{(0)},\mathcal M_j^\star).
$$
Summing the preceding inequality with $z=\mu_j^\dagger$ gives
$$
\sum_{k=0}^{s-1}
\left(
p_j^\star-d_j(\mu_j^{(k+1)})
\right)
\leq
\frac{R_j^2}{2\eta_\mu}.
$$
Since these gaps are nonincreasing,
$$
0
\leq
p_j^\star-d_j(\mu_j^{(s)})
\leq
\frac{R_j^2}{2\eta_\mu s}.
$$

Finally, strong convexity of
$\mathcal L_j(\cdot,\mu)$ and feasibility of $\mathbf{x}_j^\star$ imply
$$
\begin{aligned}
\frac{\sigma_j}{2}
\|\mathbf{x}_j(\mu)-\mathbf{x}_j^\star\|^2
&\leq
\mathcal L_j(\mathbf{x}_j^\star,\mu)
-
d_j(\mu)\\
&\leq
p_j^\star-d_j(\mu).
\end{aligned}
$$
Taking $\mu=\mu_j^{(s)}$ yields
$$
\|\mathbf{x}_j(\mu_j^{(s)})-\mathbf{x}_j^\star\|
\leq
\frac{R_j}{\sqrt{\sigma_j\eta_\mu s}}.
$$
Since $[\,\cdot\,]_+$ is nonexpansive and
$g_j(\mathbf{x}_j^\star)\leq0$,
$$
\begin{aligned}
\|[g_j(\mathbf{x}_j(\mu_j^{(s)}))]_+\|
&\leq
\|
g_j(\mathbf{x}_j(\mu_j^{(s)}))
-
g_j(\mathbf{x}_j^\star)
\|\\
&\leq
L_{g,j}
\|\mathbf{x}_j(\mu_j^{(s)})-\mathbf{x}_j^\star\|\\
&\leq
\frac{L_{g,j}R_j}
{\sqrt{\sigma_j\eta_\mu s}}.
\end{aligned}
$$
This proves the lemma.
\end{proof}


\subsection{Proof of Theorem~\ref{thm:finite-rh-error}}
\label{app:proof-finite-rh-error}

We next propagate the local optimization error through the receding-horizon
closed loop. For a current state $\xi$, let
$\mathbf{x}_j^\star(\xi)$ denote the exact optimizer of the $j$th RH
problem and define
$$
\pi_j(\xi)
:=
[\mathbf{x}_j^\star(\xi)]_1.
$$
Thus, the exact RH trajectory satisfies
$x_0^{\rm RH}=x_0$ and
$x_{j+1}^{\rm RH}=\pi_j(x_j^{\rm RH})$.

\begin{lemma}[Lipschitz exact RH policy]
\label{lem:app-rh-policy-lipschitz}
Under Assumptions~\ref{ass:rh-affine}--\ref{ass:rh-constrained},
$$
\|\mathbf{x}_j^\star(\xi)-\mathbf{x}_j^\star(\zeta)\|
\leq
\kappa_j\|\xi-\zeta\|,
\qquad
\kappa_j
:=
\frac{\|B_j^\top S_j\|}
{\lambda\Delta t\,\sigma_j}.
$$
Consequently,
$$
\|\pi_j(\xi)-\pi_j(\zeta)\|
\leq
\kappa_j\|\xi-\zeta\|.
$$
\end{lemma}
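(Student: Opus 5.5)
The plan is to treat the $j$th frozen problem~\eqref{eq_RH_j} as a parametric strongly convex program. The current state enters only linearly, inside the quadratic path cost, and the feasible set does not depend on it. With the notation of Lemma~\ref{lem:app-pd-strong-convexity}, write $w_j:=a_j$ and, for a state $\xi$,
$$
F_\xi(X)\;:=\;\frac{1}{2\lambda\Delta t}\|B_jX-w_j-S_j\xi\|^2+\psi(X)+\iota_{\mathcal F_j}(X),
$$
where $\psi(X):=-r(\Phi_j(x_{j+H_j}))$ and $\iota_{\mathcal F_j}$ is the indicator of $\mathcal F_j:=\{X:\,g_j(X)\leq0\}$. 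The first thing I would check is that $\Phi_j$, and hence $g_j$ and $\psi$, depend only on the window endpoint $x_{j+H_j}$ and not on $\xi$. For RH-T, $\Phi_j=X_{j+H_j,N}$ is the uncontrolled affine rollout, so this holds. Consequently $\mathcal F_j$ is a fixed convex set and $\psi+\iota_{\mathcal F_j}$ is a fixed convex function. By Lemma~\ref{lem:app-pd-strong-convexity}, $F_\xi$ is $\sigma_j$-strongly convex for every $\xi$ and has the unique minimizer $\mathbf{x}_j^\star(\xi)$; feasibility of the set is guaranteed by Assumption~\ref{ass:rh-constrained}.

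To avoid subdifferential sum rules for the nonsmooth $-r$ and for the indicator, I would use the same function-value trick as in the proof of Lemma~\ref{lem:app-pd-dual-regularity}. Let $X=\mathbf{x}_j^\star(\xi)$ and $Z=\mathbf{x}_j^\star(\zeta)$. Strong convexity and optimality give
$$
F_\xi(Z)\geq F_\xi(X)+\tfrac{\sigma_j}{2}\|Z-X\|^2,
\qquad
F_\zeta(X)\geq F_\zeta(Z)+\tfrac{\sigma_j}{2}\|Z-X\|^2 .
$$
Adding the two inequalities, the $\psi$ terms and the indicator terms cancel, since both are finite at the feasible points $X$ and $Z$ and do not depend on the state. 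What remains is
$$
\sigma_j\|Z-X\|^2\;\leq\;\bigl[q_\xi(Z)-q_\zeta(Z)\bigr]-\bigl[q_\xi(X)-q_\zeta(X)\bigr],
$$
where $q_\xi$ denotes the quadratic part of $F_\xi$.

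Next I would expand the quadratic. In $q_\xi(Y)-q_\zeta(Y)$, the only term depending on $Y$ is the cross term $-\frac{1}{\lambda\Delta t}(B_jY-w_j)^\top S_j(\xi-\zeta)$. All other terms are constant in $Y$ and drop out of the bracketed difference. Hence
$$
\sigma_j\|Z-X\|^2\leq\frac{1}{\lambda\Delta t}(X-Z)^\top B_j^\top S_j(\xi-\zeta)\leq\frac{\|B_j^\top S_j\|}{\lambda\Delta t}\|X-Z\|\,\|\xi-\zeta\|.
$$
Dividing by $\|X-Z\|$ (the case $X=Z$ is trivial) gives $\|X-Z\|\leq\kappa_j\|\xi-\zeta\|$. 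For the policy bound, $\pi_j$ extracts the first block $x_{j+1}$ of $\mathbf{x}_j^\star$, and the norm of one block is at most the norm of the stacked vector.

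The main obstacle I expect is the claim that the feasible set and the terminal terms are independent of $\xi$. If, for some variant, $\Phi_j$ were written in terms of the initial state, the cancellation in the summed inequalities would fail. I would therefore verify explicitly from the definitions of \eqref{eq:local-terminal-rh} and \eqref{eq:terminal-rollout-rh} that $g_j$ and the reward term are functions of $\mathbf{x}_j$ alone. A secondary point is that this Lipschitz constant needs the strong convexity modulus $\sigma_j$ from Lemma~\ref{lem:app-pd-strong-convexity}, which holds uniformly in $\xi$ because $B_j$ does not depend on the state.
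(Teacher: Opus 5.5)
Your proposal is correct and follows essentially the same argument as the paper. You both use that the feasible set and terminal terms are independent of the current state, add the two strong-convexity/optimality inequalities at $\mathbf{x}_j^\star(\xi)$ and $\mathbf{x}_j^\star(\zeta)$, reduce to the cross term $\frac{1}{\lambda\Delta t}\langle B_j^\top S_j(\xi-\zeta),\mathbf{x}_j^\star(\xi)-\mathbf{x}_j^\star(\zeta)\rangle$, and finish with Cauchy--Schwarz and the norm-one first-block projection; your indicator-function packaging is only a cosmetic restatement of the paper's minimization over the common feasible set.
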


\begin{proof}
The constraint
$g_j(\mathbf{x}_j)=h(\Phi_j(x_{j+H_j}))$ does not depend explicitly on
the current state. Hence the feasible set is the same for initial states
$\xi$ and $\zeta$.

Since $\varphi_j(\cdot;\xi)$ is $\sigma_j$-strongly convex and
$\mathbf{x}_j^\star(\xi)$ minimizes it over this common convex feasible
set,
$$
\varphi_j(\mathbf{x}_j^\star(\zeta);\xi)
\geq
\varphi_j(\mathbf{x}_j^\star(\xi);\xi)
+
\frac{\sigma_j}{2}
\|\mathbf{x}_j^\star(\xi)-\mathbf{x}_j^\star(\zeta)\|^2.
$$
Exchanging $\xi$ and $\zeta$ gives
$$
\varphi_j(\mathbf{x}_j^\star(\xi);\zeta)
\geq
\varphi_j(\mathbf{x}_j^\star(\zeta);\zeta)
+
\frac{\sigma_j}{2}
\|\mathbf{x}_j^\star(\xi)-\mathbf{x}_j^\star(\zeta)\|^2.
$$
Adding the two inequalities,
$$
\begin{aligned}
\sigma_j
\|\mathbf{x}_j^\star(\xi)-\mathbf{x}_j^\star(\zeta)\|^2
&\leq
\varphi_j(\mathbf{x}_j^\star(\xi);\zeta)
-
\varphi_j(\mathbf{x}_j^\star(\xi);\xi)\\
&\quad+
\varphi_j(\mathbf{x}_j^\star(\zeta);\xi)
-
\varphi_j(\mathbf{x}_j^\star(\zeta);\zeta).
\end{aligned}
$$
The terminal-reward terms cancel, and expanding the two quadratic residual
costs gives
$$
\begin{aligned}
\sigma_j
\|\mathbf{x}_j^\star(\xi)-\mathbf{x}_j^\star(\zeta)\|^2
&\leq
\frac{1}{\lambda\Delta t}
\left\langle
B_j^\top S_j(\xi-\zeta),
\mathbf{x}_j^\star(\xi)-\mathbf{x}_j^\star(\zeta)
\right\rangle\\
&\leq
\frac{\|B_j^\top S_j\|}
{\lambda\Delta t}
\|\xi-\zeta\|
\|\mathbf{x}_j^\star(\xi)-\mathbf{x}_j^\star(\zeta)\|.
\end{aligned}
$$
Cancelling the final factor proves
$$
\|\mathbf{x}_j^\star(\xi)-\mathbf{x}_j^\star(\zeta)\|
\leq
\kappa_j\|\xi-\zeta\|.
$$
Since $\pi_j$ extracts the first block of the optimized path and this
projection has operator norm one, the same bound holds for $\pi_j$.
\end{proof}

\begin{lemma}[Propagation of local solve errors]
\label{lem:app-rh-error-propagation}
Suppose the implemented trajectory satisfies
$$
\|x_{j+1}-\pi_j(x_j)\|
\leq
\epsilon_j,
\qquad j=0,\ldots,N-1.
$$
Then, for every $p=1,\ldots,N$,
$$
\|x_p-x_p^{\rm RH}\|
\leq
\sum_{j=0}^{p-1}
\left(
\prod_{\ell=j+1}^{p-1}\kappa_\ell
\right)
\epsilon_j,
$$
where an empty product equals one. Moreover, if
$h(\pi_{N-1}(x_{N-1}))\leq0$, then
$$
\|[h(x_N)]_+\|
\leq
L_h\epsilon_{N-1}.
$$
\end{lemma}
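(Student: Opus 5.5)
The plan is to prove the trajectory bound by induction on $p$, comparing the implemented state with the exact receding-horizon state through the exact policy. Write $\Delta_p := \|x_p - x_p^{\rm RH}\|$ with $\Delta_0 = 0$, since both trajectories start at $x_0$. For the step $p \to p+1$, insert $\pi_p(x_p)$ and apply the triangle inequality:
$$
\Delta_{p+1}
\leq
\|x_{p+1}-\pi_p(x_p)\|
+
\|\pi_p(x_p)-\pi_p(x_p^{\rm RH})\|
\leq
\epsilon_p+\kappa_p\Delta_p .
$$
The first term is bounded by hypothesis. The second is bounded using the Lipschitz estimate of Lemma~\ref{lem:app-rh-policy-lipschitz}, together with $x_{p+1}^{\rm RH}=\pi_p(x_p^{\rm RH})$.

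Unrolling this linear recursion gives
$$
\Delta_p \leq \sum_{j=0}^{p-1}\Bigl(\prod_{\ell=j+1}^{p-1}\kappa_\ell\Bigr)\epsilon_j .
$$
The base case $p=1$ gives $\Delta_1\le\epsilon_0$, which is the empty-product term. For the inductive step, substitute the induction hypothesis for $\Delta_p$ into $\epsilon_p+\kappa_p\Delta_p$. Multiplying by $\kappa_p$ extends each product $\prod_{\ell=j+1}^{p-1}$ to $\prod_{\ell=j+1}^{p}$, and the new term $\epsilon_p$ carries the empty product. This is routine reindexing, and all $\kappa_\ell\ge 0$ keep the inequalities monotone.

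For the constraint violation, the componentwise positive part is $1$-Lipschitz in each coordinate, so $\|[a]_+-[b]_+\|\le\|a-b\|$. Take $b=h(\pi_{N-1}(x_{N-1}))$, which is $\le 0$ by hypothesis, so $[b]_+=0$. Then
$$
\|[h(x_N)]_+\|
\le
\|h(x_N)-h(\pi_{N-1}(x_{N-1}))\|
\le
L_h\|x_N-\pi_{N-1}(x_{N-1})\|
\le
L_h\epsilon_{N-1}.
$$
This uses the $L_h$-Lipschitz continuity of $h$ from Assumption~\ref{ass:rh-constrained}.

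There is no real obstacle here. The only point needing care is that Lemma~\ref{lem:app-rh-policy-lipschitz} is applied at the two states $x_p$ and $x_p^{\rm RH}$, which may lie anywhere. That lemma's feasible set does not depend on the current state, so its Lipschitz bound holds for arbitrary pairs of initial states, and no reachable-set restriction is needed.
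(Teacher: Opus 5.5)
Your proposal is correct and follows essentially the same route as the paper: induction on $p$, inserting $\pi_p(x_p)$ and applying the triangle inequality together with the $\kappa_p$-Lipschitz bound of Lemma~\ref{lem:app-rh-policy-lipschitz}, then using nonexpansiveness of $[\cdot]_+$ and the $L_h$-Lipschitz continuity of $h$ for the terminal violation. Your remark that the Lipschitz bound holds for arbitrary pairs of states, because the local feasible set does not depend on the current state, is the same justification the paper gives in its proof of that lemma.
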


\begin{proof}
Since $x_0=x_0^{\rm RH}$, the claim is immediate for $p=1$:
$$
\|x_1-x_1^{\rm RH}\|
=
\|x_1-\pi_0(x_0)\|
\leq
\epsilon_0.
$$
Assume the bound holds at time $p$. Then
$$
\begin{aligned}
\|x_{p+1}-x_{p+1}^{\rm RH}\|
&\leq
\|x_{p+1}-\pi_p(x_p)\|
+
\|\pi_p(x_p)-\pi_p(x_p^{\rm RH})\|\\
&\leq
\epsilon_p
+
\kappa_p\|x_p-x_p^{\rm RH}\|\\
&\leq
\epsilon_p
+
\sum_{j=0}^{p-1}
\left(
\prod_{\ell=j+1}^{p}\kappa_\ell
\right)\epsilon_j\\
&=
\sum_{j=0}^{p}
\left(
\prod_{\ell=j+1}^{p}\kappa_\ell
\right)\epsilon_j,
\end{aligned}
$$
which proves the result by induction.

For terminal feasibility, since
$h(\pi_{N-1}(x_{N-1}))\leq0$ and $h$ is $L_h$-Lipschitz,
$$
\begin{aligned}
\|[h(x_N)]_+\|
&=
\|[h(x_N)]_+-[h(\pi_{N-1}(x_{N-1}))]_+\|\\
&\leq
\|h(x_N)-h(\pi_{N-1}(x_{N-1}))\|\\
&\leq
L_h
\|x_N-\pi_{N-1}(x_{N-1})\|\\
&\leq
L_h\epsilon_{N-1}.
\end{aligned}
$$
\end{proof}

\begin{proof}[Proof of Theorem~\ref{thm:finite-rh-error}]
At physical window $j$, Lemma~\ref{thm: subproblem convergence} with
$s=K$ gives
$$
\|\mathbf{x}_j^{(K)}-\mathbf{x}_j^\star(x_j)\|
\leq
\frac{R_j}
{\sqrt{\sigma_j\eta_\mu K}},
$$
where
$$
R_j
=
\operatorname{dist}
\bigl(
\mu_j^{(0)},\mathcal M_j^\star(x_j)
\bigr).
$$
Since the implemented next state is the first block of
$\mathbf{x}_j^{(K)}$, while
$\pi_j(x_j)$ is the first block of
$\mathbf{x}_j^\star(x_j)$,
$$
\|x_{j+1}-\pi_j(x_j)\|
\leq
\frac{R_j}
{\sqrt{\sigma_j\eta_\mu K}}
=:\epsilon_j.
$$

Define
$$
\bar R:=\max_jR_j,
\qquad
\underline\sigma:=\min_j\sigma_j,
\qquad
\bar\kappa:=\max_j\kappa_j.
$$
Lemma~\ref{lem:app-rh-error-propagation} therefore gives
$$
\begin{aligned}
\|x_p-x_p^{\rm RH}\|
&\leq
\sum_{j=0}^{p-1}
\left(
\prod_{\ell=j+1}^{p-1}\kappa_\ell
\right)
\frac{R_j}{\sqrt{\sigma_j\eta_\mu K}}\\
&\leq
\frac{\bar R}
{\sqrt{\underline\sigma\,\eta_\mu K}}
\sum_{j=0}^{p-1}
\bar\kappa^{\,p-1-j}\\
&=
\frac{\bar R\,\Gamma_p(\bar\kappa)}
{\sqrt{\underline\sigma\,\eta_\mu K}},
\end{aligned}
$$
where
$$
\Gamma_p(\bar\kappa)
:=
\sum_{q=0}^{p-1}\bar\kappa^q.
$$

Finally, at the last physical window $j=N-1$, we have $H_{N-1}=1$.
Hence $\pi_{N-1}(x_{N-1})$ is the terminal state of the exact feasible
local solution, so
$$
h(\pi_{N-1}(x_{N-1}))\leq0.
$$
Applying Lemma~\ref{lem:app-rh-error-propagation},
$$
\begin{aligned}
\|[h(x_N)]_+\|
&\leq
L_h\epsilon_{N-1}\\
&\leq
\frac{L_h\bar R}
{\sqrt{\underline\sigma\,\eta_\mu K}}.
\end{aligned}
$$
This proves the theorem.
\end{proof}

\clearpage
\section{Additional Experimental Details and Results}\label{app: experiments}

\subsection{Implementation Details}

\newcommand{\ours}{ConicFlow}

\subsubsection{Low-dimensional experiment implementation details}
\label{app:impl-lowdim}

\paragraph{Model and sampler.}
The pretrained model is a flow map on the two-moons density, parameterised as
$X_\theta(s,t,x) = x + (t-s)\,\sigma_d\,\phi_\theta\!\big(s,\,t,\,x/\sigma_d\big)$
with $\phi_\theta$ an MLP of three hidden layers of $256$ GELU units, input
$(s,t,x/\sigma_d)\in\mathbb{R}^4$ and output in $\mathbb{R}^2$. We use the EMA
weights at $15$k training steps. The data are \texttt{sklearn}'s
\texttt{make\_moons} with noise $0.05$, standardised per coordinate and then
scaled by $0.75$; $\sigma_d = 0.750$ is the standard deviation of that training
set, and the base law is $p_0 = \mathcal{N}(0,\sigma_d^2 I)$. Unlike the image
experiment, time runs \emph{forward}, $t_j = j/N$ with $N=16$ uniform Euler
steps and $\Delta t = 1/16$; Every run uses the same batch of $512$
particles drawn from the same seed, shared across all methods and all $\lambda$, so
each frontier is computed on identical initial conditions.

\paragraph{Rewards.}
Two terminal rewards are used, both bounded in $(0,1]$ so that reward gains are
directly comparable across methods.
The \emph{upper-moon} reward is an RBF envelope over one mode,
$r(x) = \exp\!\big(-\min_j \lVert x - x^{(j)}\rVert^2 / 2\sigma^2\big)$ with
$\sigma = 0.35$ and $400$ reference points $x^{(j)}$ drawn from the upper moon;
it is $\approx 1$ on that mode and decays off it. The uncontrolled sampler
already attains $\bar r = 0.589$, so the attainable headroom is small.
The \emph{off-manifold} reward is a single Gaussian bump
$r(x) = \exp\!\big(-\lVert x - p\rVert^2 / 2\sigma^2\big)$ with $p = (-0.1,\,1.75)$
and $\sigma = 0.5$. Its optimum lies off the data manifold: the upper moon's
apex $(-0.36,\,1.41)$ scores only $0.69$, and the uncontrolled sampler attains
$\bar r = 0.062$. Reaching high reward therefore requires leaving the support of
the learned density, which is what separates the methods.

\paragraph{Control methods.}
All three methods share the sampler, the initial batch and the reward, and no
constraints are active, so \ours{} again reduces to $K=1$ in
Algorithm~\ref{alg:rh-path-space}. Two departures from the image configuration
are specific to this setting. First, the window subproblem is \emph{not}
warm-started across windows: each window is initialised from a fresh
uncontrolled rollout of its own $H$ steps, which isolates the effect of the
horizon from that of solver warm-starting. Second, we report two choices of the
terminal map $\Phi_j$ in~\eqref{eq:terminal-rollout-rh}:
\emph{endpoint}, $\Phi_j = X_{j+H,N}$, which predicts $t=1$ from the window's
terminal state through one flow-map call (the image-experiment choice), and
\emph{horizon}, $\Phi_j = \mathrm{id}$, which evaluates the reward directly at
the window's terminal state $x_{j+H}$ and makes no network call beyond the
rollout. 

HardFlow is run in its unconstrained form, so its terminal subproblem reduces to
proximal reward steering,
$\hat x_N^\star = \arg\min_x -r(x) + \tfrac{\alpha_t^2}{2\Delta t\,\lambda}\lVert x - \bar x_N\rVert^2$,
solved by the augmented-Lagrangian routine with $\rho_0 = 1$ and $\rho \times 2$
per outer round, warm-started across sampling steps. Its released code
parameterises the proximal weight as $w = \lambda_{\mathrm{oc}}\alpha_t^2/(2\Delta t)$,
i.e.\ with the opposite sense: larger $\lambda_{\mathrm{oc}}$ holds the sample
\emph{closer} to the pretrained path. We therefore set
$\lambda_{\mathrm{oc}} = 1/\lambda$ so that a single $\lambda$ axis means the same
thing for all three methods --- larger $\lambda$ is a weaker control penalty and
more reward --- which is what makes the frontiers superimposable. FMRG is run in
both variants, J (differentiating through $X_{j,N}$) and E (stop-gradient at the
map), with the step $z \leftarrow z + (\Delta t/n_{\mathrm{inner}})\lambda\,\nabla r$.

\paragraph{$\lambda$ grids and per-$\lambda$ selection.}
Because the state space is two-dimensional, the inner hyperparameters can be
swept exhaustively rather than range-found. For every (method, variant,
$\lambda$) we run the full inner grid of Table~\ref{tab:impl-lowdim} and report
the configuration minimising the control objective itself,
\begin{equation}
  J \;=\; \frac{1}{2\lambda}\sum_{j}\lVert \Delta x_j\rVert^2 \;-\; \bar r(x_N),
  \label{eq:lowdim-selection}
\end{equation}
where $\Delta x_j = x_{j+1} - \bar x_{j+1}$ is the applied deviation from the
uncontrolled Euler step. Each plotted marker is that single best run, so the
curves are attainment frontiers under a fixed selection rule rather than
non-dominated envelopes. The horizontal axis of both figures is the mean
integrated control effort $\sum_j\lVert \Delta x_j\rVert^2$ averaged over the
batch, which equals $\Delta t\,E$ for the energy $E$ of~\eqref{eq:auc}, i.e.\
$E/16$ at $N=16$.

For the upper-moon reward all three methods share
$\lambda \in \{0.1,\,0.25,\,0.5,\,0.75,\,1,\,2,\,3\}$; beyond $\lambda \approx 3$
every method has saturated, since the reward is already near $1$ on the mode the
sampler visits. For the off-manifold reward HardFlow and FMRG saturate by
$\lambda \approx 5$ and are swept over the $13$ values
$\{0.1,\ldots,100\}$, whereas \ours{} is still improving at $\lambda = 100$ and
is extended geometrically to $\lambda = 10^5$ ($22$ values in total) until its
reward plateaus.

\begin{table}[h]
\caption{Hyperparameters for the low-dimensional experiments.}
  \centering
  \small
  \begin{tabular}{lll}
    \toprule
    \multicolumn{3}{l}{\emph{Shared across methods}} \\
    \midrule
    \multicolumn{2}{l}{Steps $N$ }      & $16$ \\
    \multicolumn{2}{l}{Batch / seed}                    & $512$ particles / $42$, shared across all runs \\
    \multicolumn{2}{l}{Base law $p_0$}                  & $\mathcal{N}(0,\sigma_d^2 I)$, $\sigma_d = 0.750$ \\
    \multicolumn{2}{l}{Backbone}                        & MLP, $3\times256$ GELU; EMA weights at $15$k steps \\
    \midrule
    \multicolumn{3}{l}{\emph{Per method}} \\
    \midrule
    \ours{}  & horizon $H$              & $\{1,2,4,8,16\}$ \\
             & terminal map $\Phi_j$    & \emph{endpoint} $X_{j+H,N}$ or \emph{horizon} $\mathrm{id}$ \\
             & inner optimizer          & Adam \\
             & inner iterations         & $\{25,50\}$, extended to $\{100,200\}$ at large $\lambda$ \\
             & learning rate            & $\{0.01,0.05\}$, extended with $\{0.001,0.005,0.2\}$ at large $\lambda$ \\
             & window initialisation    & fresh uncontrolled rollout (no cross-window warm start) \\
    HardFlow & inner optimizer          & L-BFGS (strong Wolfe) or Adam \\
             & outer $\times$ inner     & $\{3,5,10\} \times \{10,20\}$ \\
             & learning rate            & $\{0.1,0.5\}$ (L-BFGS), $\{0.01,0.05\}$ (Adam) \\
             & penalty $\rho_0$ / growth & $1$ / $\times 2$ per outer round \\
             & proximal weight          & $\alpha_t^2/(2\Delta t\,\lambda)$, i.e.\ $\lambda_{\mathrm{oc}} = 1/\lambda$ \\
             & warm start               & across sampling steps \\
    FMRG     & variant                  & J (through $X_{j,N}$) and E (stop-gradient) \\
             & guidance steps           & $\{1,3,5,10\}$ per sampling step \\
    \bottomrule
  \end{tabular}
  
  \label{tab:impl-lowdim}
\end{table}

\subsubsection{ Unconstrained image experiment implementation details}
\label{app:impl-unconstrained}

\paragraph{Model and sampler.}
We use FLUX.1-dev with the FlowMap LoRA adapter, so one forward pass evaluates the flow map $b_t(z,\,t\!\to\!t')$: sampling advances with $t' = t - \Delta t$, while $t'=0$ estimates the terminal latent in a single call. The $N$ steps are uniform, $t_j = 1 - j/N$, so $\Delta t = 1/N$ for the sampler. Each run draws one latent from a seed, and the uncontrolled sample from that same seed is the reference throughout.

\paragraph{Reward.}

CLIP~\citep{radford2021learning} is the cosine similarity between normalised CLIP-ViT-H-14 image and text features. PickScore~\citep{kirstain2023pickapic} is $\exp(\tau)$ times the same cosine under its own finetuned encoder, $\tau$ being the checkpoint's learned logit scale. ImageReward~\citep{xu2023imagereward} is the score its checkpoint already standardises to zero mean and unit variance. All three are evaluated on the decoded image, so every reward gradient is taken through the VAE decoder as well as the scorer. The three scales are unrelated, so a reward gain is only ever compared within a reward. No constraints are active in this experiment.

\paragraph{Evaluation suite.}
Table~\ref{tab:objective-improvement} and the frontier figures are computed on twelve problems: three reward models $\times$ four (prompt, seed) instances. The instances are \emph{``a portrait of a man''} $\to$ \emph{``a smiling face''} at seeds $42$ and $7$, \emph{``a photo of a sleeping cat''} $\to$ \emph{``a photo of a cat with wide open eyes''} at seed $42$, and \emph{``a field of green grass''} $\to$ \emph{``a field of red poppies''} at seed $42$. Within a problem all three methods share the initial latent and the $\lambda$ grid, so each problem contributes one matched triple of frontiers, and the $\pm$ in Table~\ref{tab:objective-improvement} is the standard error over the four instances of that reward. Figure~\ref{fig:pareto-edit-pickscore} (left) is the PickScore man\,$\to$\,smiling instance at seed $42$; Figure~\ref{fig:pareto-appendix} shows two of the others.

The image rows in Figure~\ref{fig:pareto-edit-pickscore} (right) are selected at matched \emph{achieved reward} rather than at matched $\lambda$, and use three editing pairs: \emph{``a portrait of a man''} $\to$ \emph{``a smiling face''} (seed $7$), \emph{``a portrait of a young man''} $\to$ \emph{``a portrait of an old man''} (seed $317$), and \emph{``a portrait of a woman with straight hair''} $\to$ \emph{``a portrait of a woman with curly hair''} (seed $1289$). The latter two come from a separate editing sweep and do not enter Table~\ref{tab:objective-improvement}.

\paragraph{Control methods.}
All three methods share the sampler, the initial latent and the reward. We run
ConicFlow in its~\eqref{eq:terminal-rollout-rh} form with $H=2$; since this experiment is unconstrained,
Algorithm~\ref{alg:rh-path-space} reduces to $K=1$ and its primal response is approximated by $n_{inner}$ Adam steps at learning rate $10^{-2}$, warm-started from the previous window's solution rather than from the pretrained path. We evaluate $\Phi_j = X_{j+H,N}$ through the recursion, i.e.\ $N-j-H$ pretrained steps. Within the window the drift is evaluated without gradient for $\ell > 0$, dropping the $J_b^\top$ term from the inner gradient.

HardFlow, ported here to FLUX latents, instead optimizes the predicted terminal latent $\hat x = X_{j,N}(x_j)$ obtained from a single call and never differentiates the transformer. FMRG-J takes $n_{\mathrm{inner}}$ Jacobian-guidance steps per sampling step, $z \leftarrow z + (\Delta t/n_{\mathrm{inner}})\,\lambda\,
\nabla_z r\big(X_{j,N}(z)\big)$.

\paragraph{Choosing $\lambda$.}
The useful range of the control-cost weight $\lambda$ moves with the reward's
scale and with the problem, so we range-find it per (reward, prompt) pair at a cheap $N=4$ proxy of the $N=8$ sweep rather than fixing one grid. We step $\times 10$ down from a $\|\nabla_z r\|$-scaled starting value until $\Delta r$ vanishes into the noise and $\times 10$ up until it saturates, then place six $\lambda$ per method across the logistic rise fitted to $\Delta r(\lambda)$, densified with log-midpoints where a finer frontier was wanted. The same grid is reused across seeds. The three per-method grids are merged into one, so every arm runs at identical $\lambda$, at which all three apply the same first-order step $\lambda\,\Delta t\,\nabla r$ per sampling step. The resulting grids differ by orders of magnitude across the suite; the PickScore man\,$\to$\,smiling cell behind Figure~\ref{fig:pareto-edit-pickscore} (left) runs at $\lambda \in \{0.95, 1.5, 2.5, 4.9, 9.8, 15, 23, 38, 62, 130, 280\}$.

\paragraph{Mean reward gain over a shared energy budget.}
Comparing methods at a common $\lambda$ is fragile, since $\lambda$ enters each
method differently and the same value buys different amounts of control. We
therefore summarise each method by its whole frontier instead. 
Define $E = \sum_{k=0}^{N-1} \Delta t\,\lVert u_k \rVert^2$ to denote a run's control energy.
Fix a problem
(reward, prompt, seed). For method $m$ we first discard every run that another
run of the \emph{same} method beats on both axes (i.e., no less energy and no more
reward), leaving a set of non-dominated runs $\{(E_i, r_i)\}$ along which $r$
increases with $E$; write $\hat r_m$ for its piecewise-linear interpolant in
$\log_{10} E$. Because the methods span different energies, we evaluate all of
them on the interval attained by all three,
$E_1 = \max_m \min_i E_i$ and $E_2 = \min_m \max_i E_i$, and report
\begin{equation}
  G_m \;=\; \frac{1}{\log_{10}(E_2/E_1)}
  \int_{\log_{10} E_1}^{\log_{10} E_2}
  \big(\hat r_m(E) - r_{\mathrm{unc}}\big)\; \mathrm{d}\log_{10} E ,
  \label{eq:auc}
\end{equation}
the mean height of the frontier over that budget range. $G_m$ carries the units
of the reward, and $G_m = 0$ means the method does no better than the
uncontrolled sample anywhere in the range. Both endpoints lie inside every
method's frontier by construction, so no extrapolation is involved, and
\eqref{eq:auc} is invariant to any uniform rescaling of the energy axis.
Integrating in $\log_{10} E$ rather than in $E$ weights each decade of budget
equally; on a linear axis the top decade would carry almost all the weight.
ConicFlow is taken at $24$ inner iterations, as plotted in
Fig.~\ref{fig:pareto-edit-pickscore}.

Two properties of \eqref{eq:auc} are worth stating. It is conservative for the
method that reaches furthest: the interval stops at $E_2$, the lowest of the
three maximum energies, so any reward attained beyond it --- typically
ConicFlow's --- is discarded. And it scores the attainment frontier rather than
the sweep itself, so a method whose reward is non-monotone in $\lambda$ is
represented by its upper envelope.

\paragraph{Hardware and cost.}
One run occupies a single NVIDIA RTX A5000 (24\,GB), one process per card,
peaking at $16.3$--$17.7$\,GB. Median wall clock over the 11 values of
$\lambda$: $317$\,s and $626$\,s for \ours{} at $12$ and $24$ inner iterations,
$46$\,s for HardFlow, $43$\,s for FMRG-J, against $3$--$4$\,s for the
uncontrolled sample.

\begin{table}[h]
\caption{Hyperparameters for the unconstrained image experiment (Fig~\ref{fig:pareto-edit-pickscore})}
  \centering
  \small
  \begin{tabular}{lll}
    \toprule
    \multicolumn{3}{l}{\emph{Shared across methods}} \\
    \midrule
    \multicolumn{2}{l}{Steps $N$ / horizon $H$}   & $8$ / $2$ \\
    \multicolumn{2}{l}{Resolution / latent}       & $256\times256$ / $16\times32\times32$ \\
    \multicolumn{2}{l}{Guidance scale}            & $3.5$ \\
    \multicolumn{2}{l}{Precision}                 & bf16 sampler, int8 transformer, fp32 reward \\
    \multicolumn{2}{l}{Reward models}             & CLIP, PickScore (both CLIP-ViT-H-14), ImageReward \\
    \multicolumn{2}{l}{Seeds / batch}             & $42$ and $7$ / $1$ \\
    \multicolumn{2}{l}{$\lambda$ grid}            & range-found per (reward, prompt) \\
    \midrule
    \multicolumn{3}{l}{\emph{Per method}} \\
    \midrule
    \ours{}  & inner iterations  & $12$ and $24$ (reported as two configurations) \\
             & Inner optimizer    & Adam \\
             & Learning rate     & $10^{-2}$ \\
             & Terminal estimate & rollout of the remaining $N-j-H$ steps \\
             & drift term        & detached for $\ell>0$ \\
    HardFlow & inner iterations & $40$ \\
             & Learning rate     & $2.5$ (replaced by $0.5/\mathrm{reg}$ where $\mathrm{lr}\cdot\mathrm{reg}\geq1$)\\
             & terminal estimate & single flow-map call to $t=0$ \\
             & effort penalty    & $\mathrm{reg} = N/(2\lambda)$, active from step $0$ \\
    FMRG-J   & guidance steps    & $3$ per sampling step \\
    \bottomrule
  \end{tabular}
  \label{tab:impl-unconstrained}
\end{table}

\subsubsection{Constrained image experiments implementation details}
\label{app:impl-constrained}

The sampler, resolution, precision, guidance scale, PickScore scorer,
$N=8$, and $H=2$ are those of
Appendix~\ref{app:impl-unconstrained}; only what differs is described here.

\paragraph{Problem.}
There is no reward: $r \equiv 0$, so each method's objective is its control
energy alone and the two methods are compared on what it costs them to reach
the same feasible set. An instance is a base prompt $c$, one to three
\emph{attribute prompts} $c_1,\dots,c_C$, and a seed. Constraint $i$ is
\begin{equation}
  h_i(z) \;=\; \tau_i \;-\; s\big(\mathrm{dec}(z),\,c_i\big) \;\le\; 0,
  \label{eq:pickscore-constraint}
\end{equation}
where $s(\cdot, c_i)$ is the PickScore of the decoded image against
$c_i$, exactly the unnormalised score of
Appendix~\ref{app:impl-unconstrained} with $c_i$ in place of the edit prompt.
Every constraint is evaluated at the predicted terminal latent, and a run is \emph{feasible}
when $\max_i h_i \le 0$ on the final decoded image. Multiple constraints are
independent PickScore heads that share nothing but the image. The generation prompt stays $c$ throughout.

\paragraph{Thresholds.}
Absolute PickScore values are not comparable across prompts or seeds, so
thresholds are set relative to each instance's own uncontrolled sample. For
each (prompt, seed) pair we first draw the uncontrolled sample from that seed and
score it with every constraint to obtain baselines $s_i^0$
($19.2$--$22.5$ across the instances here). A \emph{tightness multiplier} $m$
then sets every threshold of the instance at once,
\begin{equation}
  \tau_i(m) \;=\; s_i^0 + (m-1)\,|s_i^0| \;=\; m\,s_i^0
  \quad (s_i^0>0),
  \label{eq:threshold}
\end{equation}
so $m=1$ places the boundary exactly on the uncontrolled image and
$m=1.5$ demands a $50\%$ higher score on every attribute simultaneously. (The
absolute-value form is what the code implements; it keeps the same grid
meaningful for scorers with negative baselines, which does not arise for
PickScore.) One (prompt, seed, $m$) triple is one comparison: both methods
receive the same $\lambda$, the same thresholds and the same initial latent,
and the two uncontrolled images are checked to be byte-identical before the
figure is drawn.

\paragraph{\ours{}.}
The window problem is~\eqref{eq:terminal-rollout-rh} with the reward term replaced by
the Lagrangian term $\sum_i \mu_i\, h_i\big(\hat e(x_H)\big)$, $\mu_i\ge0$.
After each window solve one projected dual-ascent step is taken on the
constraint value the optimiser saw at its horizon endpoint,
\begin{equation}
  \mu_i \;\leftarrow\; \max\!\big(0,\; \mu_i + \eta\, h_i(\hat e(x_H))\big),
  \label{eq:dual}
\end{equation}
and the multipliers persist across sampling steps. All multipliers start at
$\mu_{\mathrm{init}} = 0.5$ with $\eta = 0.5$; both were chosen on the
pilot instance (``a portrait of a man'' $\to$ ``a smiling face'') and then
frozen for every run reported. The remaining difference from the
unconstrained runs is the terminal estimate \emph{inside} the inner
objective: rather than rolling the remaining $N-j-H$ steps out, $\hat e$ is
a single flow-map call from $x_H$ to $t=0$, which makes the cost of one
sampling step linear rather than quadratic in $N$. Only the surrogate the
optimiser differentiates changes; the applied trajectory, the final image
and everything scored from it are computed as before.

\paragraph{HardFlow.}
Constraints enter HardFlow's inner problem as an augmented Lagrangian on the
predicted terminal latent $\hat x$, as in the released code: the ascent
objective is
$-\sum_i \mu_i h_i(\hat x) - \tfrac{\rho}{2}\sum_i \max(0,h_i(\hat x))^2
 - \mathrm{reg}\,\alpha^2\|\hat x - \bar x\|^2$,
climbed with $40$ explicit gradient steps; every $8$ inner steps the
multipliers take the projected step~\eqref{eq:dual} with $\eta=0.2$, and they
are reset to $0$ at the start of every sampling step (the released code's
semantics, retained). Two settings depart from that code and both were forced
by stability. First, the released code doubles $\rho$ whenever a violation
exceeds $0.02$ at a dual update; over $40$ inner steps that lets $\rho$ grow
from $5$ to $160$ within a single sampling step, and at the tighter
thresholds it produced control energies orders of magnitude above
\ours{}'s on the same instance. We hold $\rho = 5$ fixed. Second, the
inner ascent is an explicit fixed-step scheme whose error contracts by
$1 - 2\,\mathrm{lr}\cdot\mathrm{reg}\cdot\alpha^2$ per step, with
$\alpha\to1$ at the final sampling step; with the authors' $\mathrm{lr}=2.5$
and $\mathrm{reg}=N/(2\lambda)=0.2$ that factor reaches $0$ and the pilot runs
diverged, so the inner learning rate is $0.5$
($\mathrm{lr}\cdot\mathrm{reg}=0.1$). The constraint margin and activation
fraction are $0$, as in Appendix~\ref{app:impl-unconstrained}.

\paragraph{Choosing $\lambda$.}
$\lambda = 20$ for both methods throughout. With no reward, $\lambda$ does not trade reward against effort; it only weights energy against the constraint term and therefore selects \emph{which} feasible point each method settles on, not whether it reaches one.

\paragraph{Instances and sweep.}
Table~\ref{tab:constrained-instances} lists the instances. Nine prompt pairs
carry one or two or three constraints and are swept over a five-point grid of
tightness multipliers, from $m = 1.1$ to $m = 1.5$ in steps of $0.1$. Every instance is run at both seeds,
giving $86$ comparisons, i.e.\ $172$ controlled runs plus one uncontrolled
probe per (prompt, seed). The attribute prompts were written so that the base
prompt does not already imply the attribute (e.g., ``a young man with straight
hair'' $\to$ ``curly hair'').

\begin{table}[h]
 \caption{Hyperparameters for the constrained image experiments. Everything
           not listed (sampler, resolution, precision, guidance scale) is as in
           Table~\ref{tab:impl-unconstrained}.}
  \centering
  \small
  \begin{tabular}{lll}
    \toprule
    \multicolumn{3}{l}{\emph{Shared across methods}} \\
    \midrule
    \multicolumn{2}{l}{Reward}                     & none ($r\equiv0$; objective is control energy) \\
    \multicolumn{2}{l}{Constraint scorer}          & PickScore, one head per attribute prompt \\
    \multicolumn{2}{l}{Constraint form}            & $h_i = \tau_i - s_i \le 0$, evaluated at the endpoint \\
    \multicolumn{2}{l}{Thresholds}                 & $\tau_i = m\,s_i^0$, Eq.~\eqref{eq:threshold} \\
    \multicolumn{2}{l}{Steps $N$ / horizon $H$}    & $8$ / $2$ \\
    \multicolumn{2}{l}{$\lambda$}                  & $20$ (fixed) \\
    \multicolumn{2}{l}{Seeds}                      & $310$, $698$ \\
    \midrule
    \multicolumn{3}{l}{\emph{Per method}} \\
    \midrule
    \ours{}  & inner iterations        & $24$ \\
             & inner optimiser         & Adam, lr $10^{-2}$, warm-started \\
             & terminal estimate       & one flow-map call $x_H \to t=0$ (``jump'') \\
             & drift term              & detached for $\ell>0$ \\
             & multipliers             & $\mu_{\mathrm{init}} = 0.5$, persist across steps \\
             & dual step $\eta$        & $0.5$, once per sampling step, Eq.~\eqref{eq:dual} \\
    HardFlow & inner iterations        & $40$ \\
             & inner lr                & $0.5$ ($\mathrm{lr}\cdot\mathrm{reg} = 0.1$) \\
             & terminal estimate       & single flow-map call to $t=0$ \\
             & effort penalty          & $\mathrm{reg} = N/(2\lambda) = 0.2$, active from step $0$ \\
             & multipliers             & $\mu_{\mathrm{init}} = 0$, reset every sampling step \\
             & dual step $\eta$        & $0.2$, every $8$ inner steps \\
             & penalty weight $\rho$   & $5$, held fixed (released code doubles it) \\
             & margin / activation     & $0$ / $0$ \\
    \bottomrule
  \end{tabular}
 
  \label{tab:impl-constrained}
\end{table}

\begin{table}[h]
\caption{Constrained instances. Each attribute prompt is one PickScore
           constraint~\eqref{eq:pickscore-constraint}; semicolons separate the
           constraints of an instance.}
  \centering
  \small
  \setlength{\tabcolsep}{4pt}
  \begin{tabular}{p{4.6cm}p{6.8cm}}
    \toprule
    base prompt $c$ & attribute prompts $c_1,\dots,c_C$ \\
    \midrule
    a portrait of a man                        & a smiling face \\
    a photo of a golden retriever              & a dog wearing sunglasses \\
    a city street at noon                      & a city street at night \\
    a portrait of a woman                      & a woman with bright red hair \\
    a portrait of a woman                      & sad frowning woman \\
    portrait of a young man with straight hair & man with curly hair; photo of very old man \\
    a portrait of a woman                      & a woman wearing glasses; a woman with short blonde hair \\
    a house by a lake                          & a wooden log cabin; a house in heavy fog \\
    a green garden on an overcast day          & a garden with red flowers; a garden in bright sunshine; a garden with a stone path \\
    \bottomrule
  \end{tabular}
  \label{tab:constrained-instances}
\end{table}

\subsection{Additional Results}

\paragraph{Unconstrained.} We present additional visualizations of the unconstrained image editing comparisons in Figure~\ref{fig:unconstrained-grid}, similar to Figure~\ref{fig:pareto-edit-pickscore} in the main paper. We also present two additional pareto plots of instances in Figure~\ref{fig:pareto-appendix} similar to the one in Figure~\ref{fig:pareto-edit-pickscore} but with different reward/prompts which define a different problem.

\begin{table}[h]
\caption{Per-problem breakdown of Table~\ref{tab:objective-improvement}.}
  \centering
  \footnotesize
  \setlength{\tabcolsep}{2.5pt}
  \begin{tabular}{@{}lrrccc@{}}
    \toprule
    Instance (prompt $\to$ edit, seed) & $r_{\mathrm{unc}}$ & Budget $[E_1,E_2]$ & ConicFlow & HardFlow & FMRG \\
    \midrule
    \multicolumn{6}{l}{\textbf{CLIP}} \\
    \quad man $\to$ smiling, 42 & $0.149$ & $[8.8,\,217]$ & $\mathbf{0.151}$ & $0.135$ & $0.083$ \\
    \quad man $\to$ smiling, 7 & $0.210$ & $[8.2,\,136]$ & $\mathbf{0.094}$ & $0.070$ & $0.046$ \\
    \quad sleeping cat $\to$ open eyes, 42 & $0.247$ & $[8.8,\,127]$ & $\mathbf{0.106}$ & $0.101$ & $0.065$ \\
    \quad green grass $\to$ red poppies, 42 & $0.127$ & $[7.6,\,122]$ & $\mathbf{0.171}$ & $0.166$ & $0.066$ \\
    \cmidrule(l){1-6}
    \quad \textit{mean} $\pm$ s.e. & & & $\mathbf{0.130} \pm 0.018$ & $0.118 \pm 0.021$ & $0.065 \pm 0.008$ \\
    \quad \textit{ConicFlow wins} & & & -- & $4/4$ & $4/4$ \\
    \midrule
    \multicolumn{6}{l}{\textbf{ImageReward}} \\
    \quad man $\to$ smiling, 42 & $-1.611$ & $[9.3,\,25.0]$ & $\mathbf{2.648}$ & $2.203$ & $2.546$ \\
    \quad man $\to$ smiling, 7 & $0.791$ & $[8.0,\,14.1]$ & $0.920$ & $0.738$ & $\mathbf{0.938}$ \\
    \quad sleeping cat $\to$ open eyes, 42 & $-1.285$ & $[9.9,\,254]$ & $\mathbf{1.716}$ & $0.610$ & $0.147$ \\
    \quad green grass $\to$ red poppies, 42 & $-2.282$ & $[7.4,\,1974]$ & $\mathbf{2.908}$ & $1.565$ & $1.483$ \\
    \cmidrule(l){1-6}
    \quad \textit{mean} $\pm$ s.e. & & & $\mathbf{2.048} \pm 0.455$ & $1.279 \pm 0.374$ & $1.278 \pm 0.504$ \\
    \quad \textit{ConicFlow wins} & & & -- & $4/4$ & $3/4$ \\
    \midrule
    \multicolumn{6}{l}{\textbf{PickScore}} \\
    \quad man $\to$ smiling, 42 & $19.426$ & $[11.0,\,1723]$ & $\mathbf{6.011}$ & $2.264$ & $2.532$ \\
    \quad man $\to$ smiling, 7 & $20.178$ & $[9.9,\,1746]$ & $\mathbf{4.911}$ & $1.830$ & $2.561$ \\
    \quad sleeping cat $\to$ open eyes, 42 & $20.304$ & $[9.1,\,2934]$ & $\mathbf{5.475}$ & $0.377$ & $2.822$ \\
    \quad green grass $\to$ red poppies, 42 & $18.646$ & $[7.4,\,2180]$ & $\mathbf{8.236}$ & $2.472$ & $2.766$ \\
    \cmidrule(l){1-6}
    \quad \textit{mean} $\pm$ s.e. & & & $\mathbf{6.158} \pm 0.728$ & $1.736 \pm 0.472$ & $2.670 \pm 0.073$ \\
    \quad \textit{ConicFlow wins} & & & -- & $4/4$ & $4/4$ \\
    \midrule
    \textit{ConicFlow wins, all 12 problems} & & & -- & $12/12$ & $11/12$ \\
    \bottomrule
  \end{tabular}
  \label{tab:objective-improvement-full}
\end{table}

\begin{figure*}[h]
  \centering
  \begin{subfigure}[h]{0.49\textwidth}
    \centering
    \includegraphics[width=\linewidth]{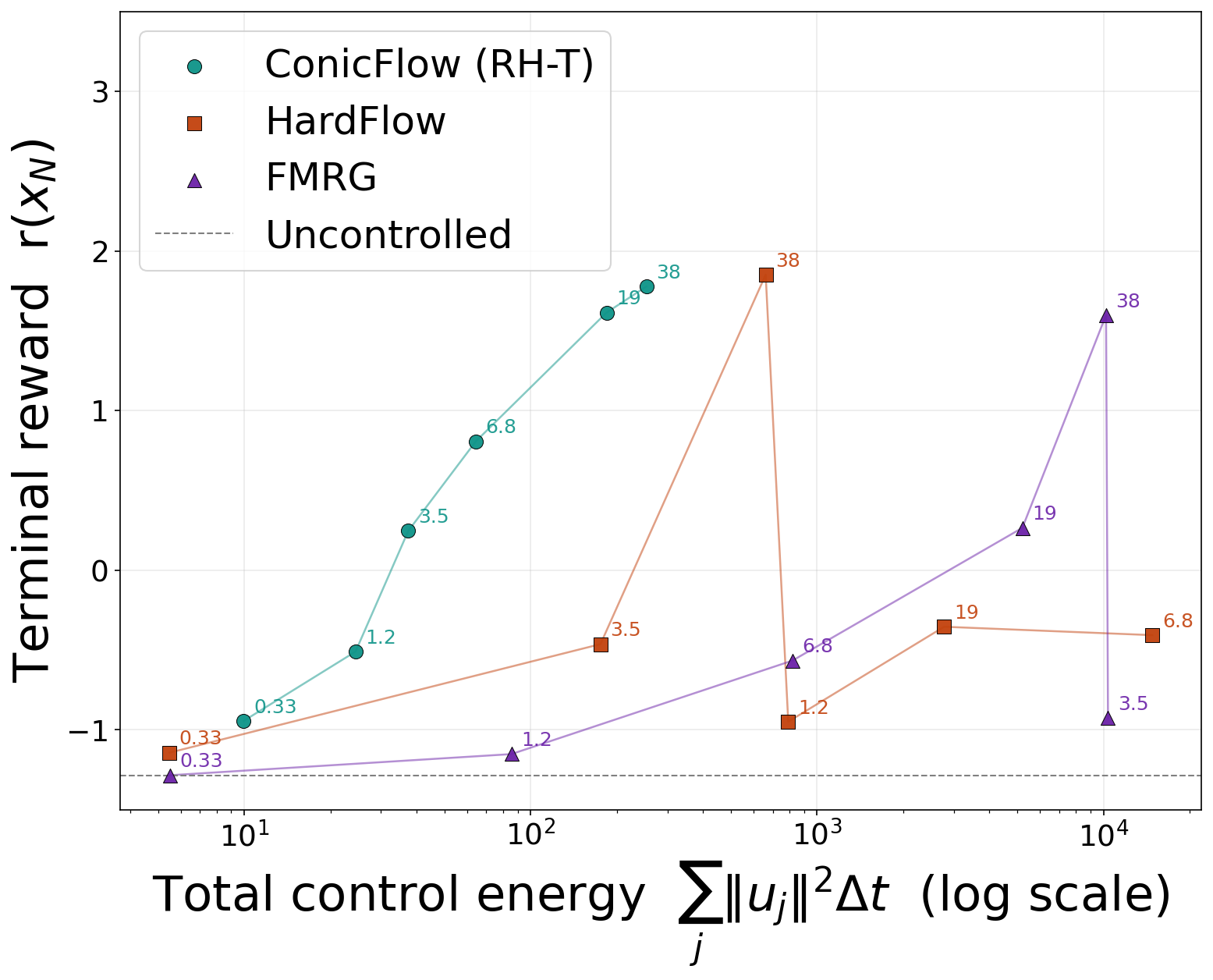}
    \caption{ImageReward, ``a photo of a sleeping cat'' $\rightarrow$ ``a photo
    of a cat with wide open eyes'', seed~42.}
    \label{fig:pareto-app-ir-cat}
  \end{subfigure}\hfill
  \begin{subfigure}[h]{0.49\textwidth}
    \centering
    \includegraphics[width=\linewidth]{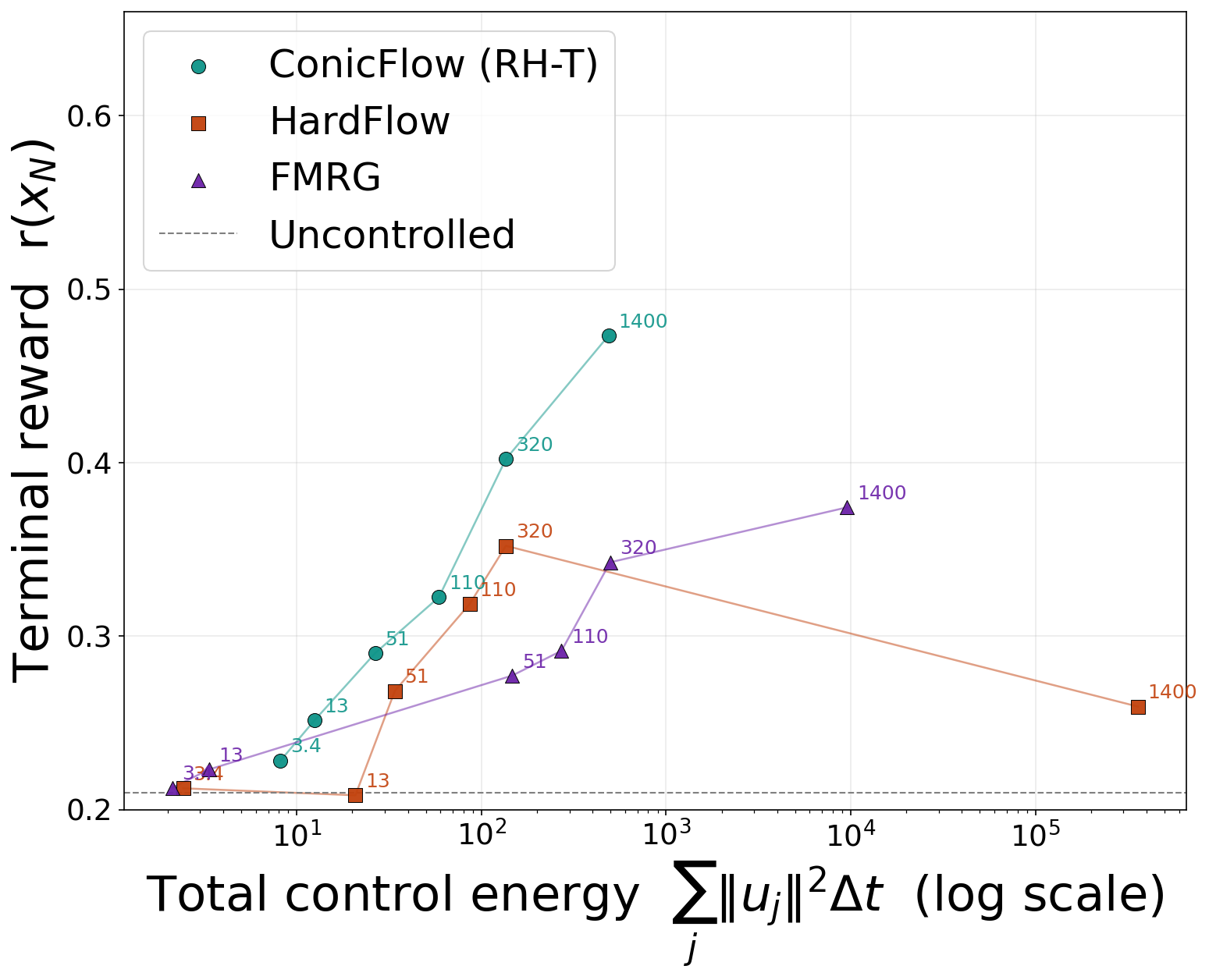}
    \caption{CLIP, ``a portrait of a man'' $\rightarrow$ ``a smiling face'',
    seed~7.}
    \label{fig:pareto-app-clip-portrait}
  \end{subfigure}
  \caption{Reward--effort trade-off on two further editing instances, $N=8$,
  $H=2$. The horizontal axis is control energy
  $\sum_k \lVert u_k \rVert^2 \Delta t$ and the vertical axis the achieved
  reward $r(x_N)$, so up and to the left is better. Each marker is one value of
  $\lambda$.}
  \label{fig:pareto-appendix}
\end{figure*}

\newcommand{\ugridw}{0.49\linewidth}

\begin{figure*}[p]
  \centering
  \begin{subfigure}[t]{\ugridw}
    \centering
    \includegraphics[width=\linewidth]{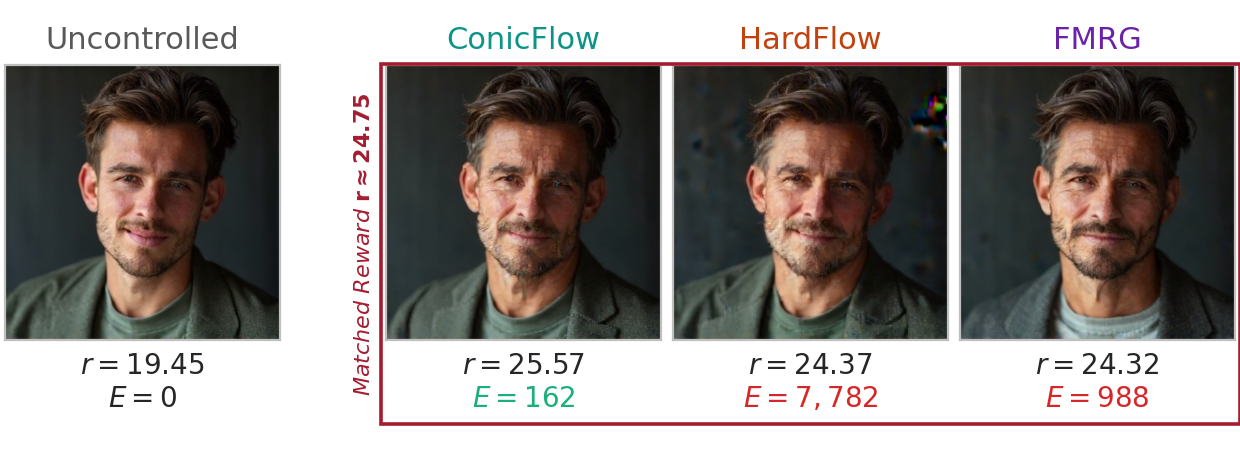}
    \caption{PickScore, seed~$1289$: \emph{``a portrait of a young man''} $\to$ \emph{``a portrait of an old man''}}
    \label{fig:ugrid-u01}
  \end{subfigure}\hfill
  \begin{subfigure}[t]{\ugridw}
    \centering
    \includegraphics[width=\linewidth]{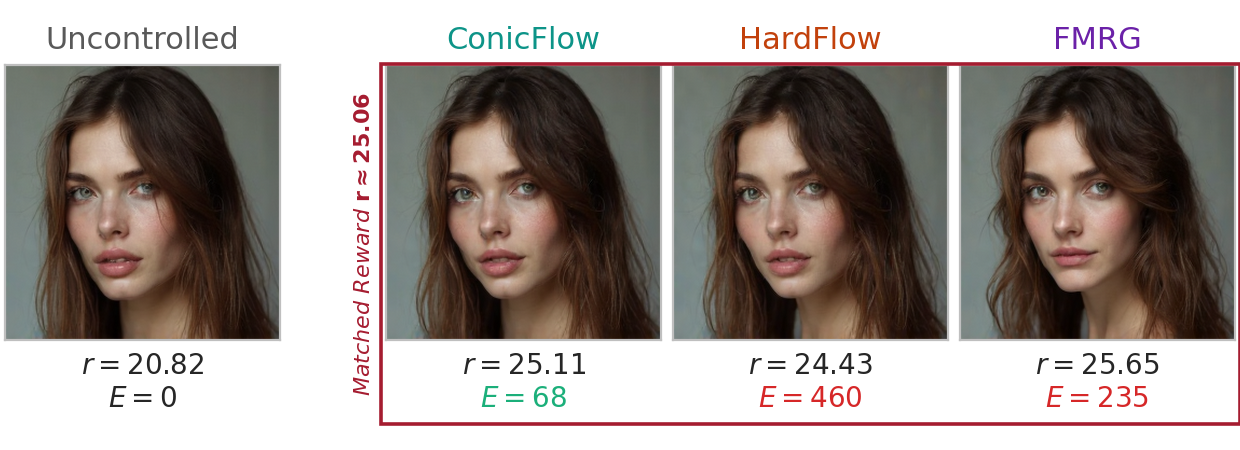}
    \caption{PickScore, seed~$317$: \emph{``a woman with straight hair''} $\to$ \emph{``\ldots with curly hair''}}
    \label{fig:ugrid-u02}
  \end{subfigure}

  \begin{subfigure}[t]{\ugridw}
    \centering
    \includegraphics[width=\linewidth]{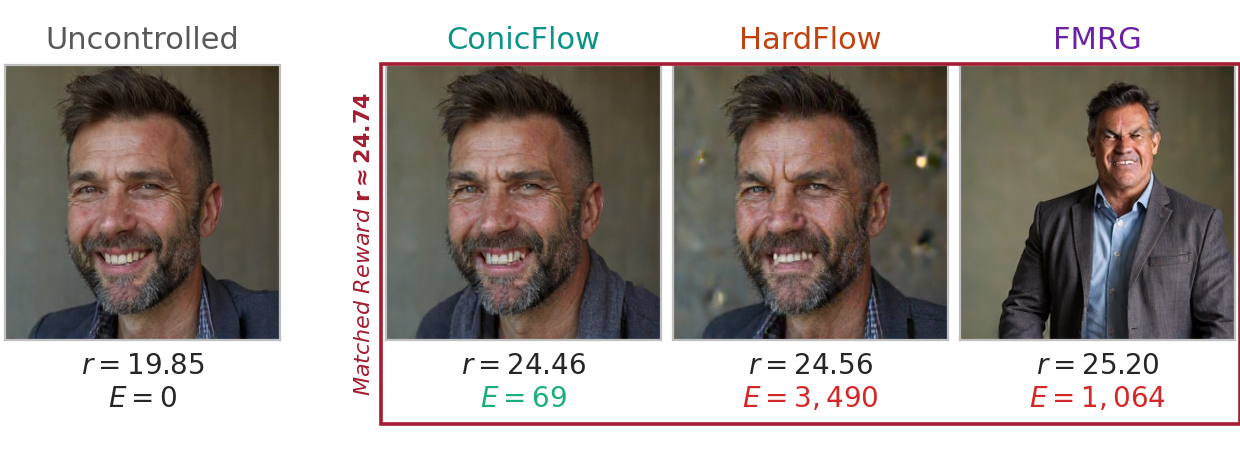}
    \caption{PickScore, seed~$317$: \emph{``a portrait of a smiling man''} $\to$ \emph{``\ldots an angry man''}}
    \label{fig:ugrid-u03}
  \end{subfigure}\hfill
  \begin{subfigure}[t]{\ugridw}
    \centering
    \includegraphics[width=\linewidth]{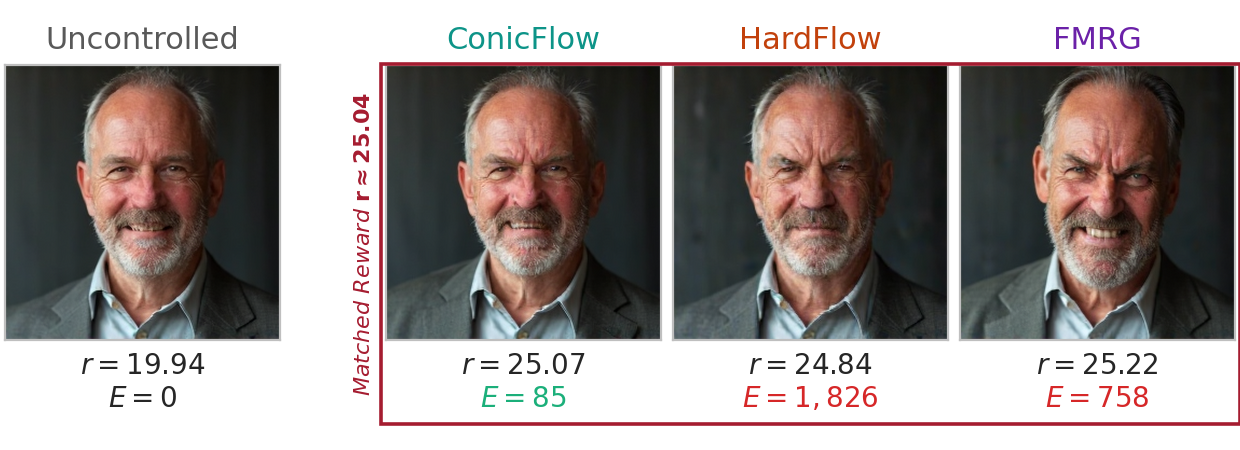}
    \caption{PickScore, seed~$1289$: \emph{``a portrait of a smiling man''} $\to$ \emph{``\ldots an angry man''}}
    \label{fig:ugrid-u04}
  \end{subfigure}

  \begin{subfigure}[t]{\ugridw}
    \centering
    \includegraphics[width=\linewidth]{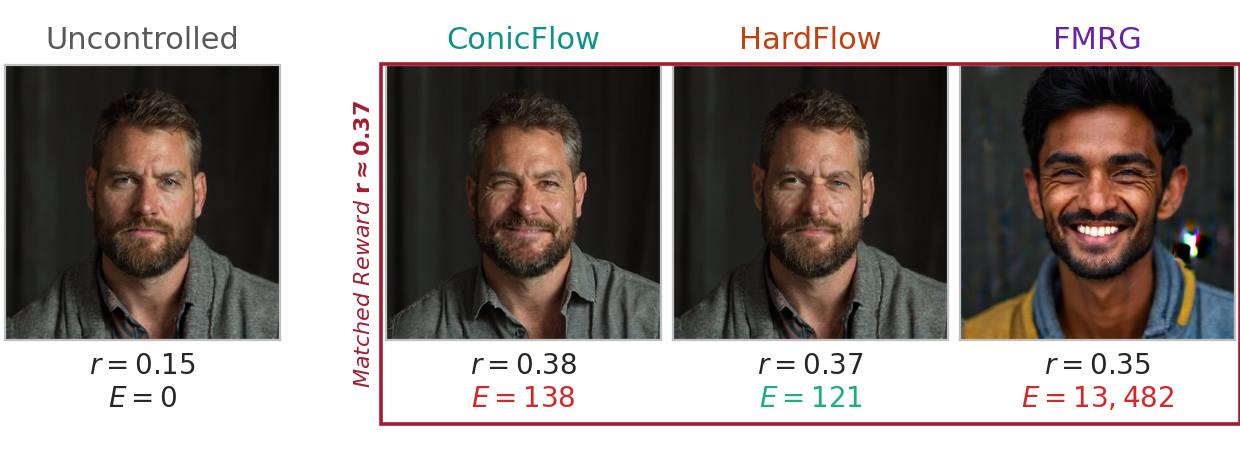}
    \caption{CLIP, seed~$42$: \emph{``a portrait of a man''} $\to$ \emph{``a smiling face''}}
    \label{fig:ugrid-u05}
  \end{subfigure}\hfill
  \begin{subfigure}[t]{\ugridw}
    \centering
    \includegraphics[width=\linewidth]{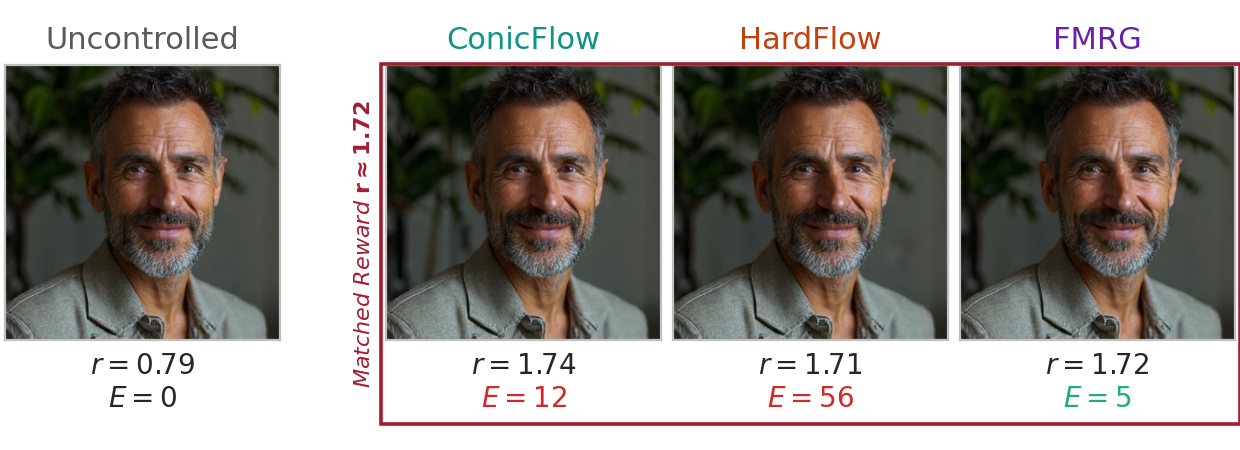}
    \caption{ImageReward, seed~$7$: \emph{``a portrait of a man''} $\to$ \emph{``a smiling face''}}
    \label{fig:ugrid-u06}
  \end{subfigure}

  \begin{subfigure}[t]{\ugridw}
    \centering
    \includegraphics[width=\linewidth]{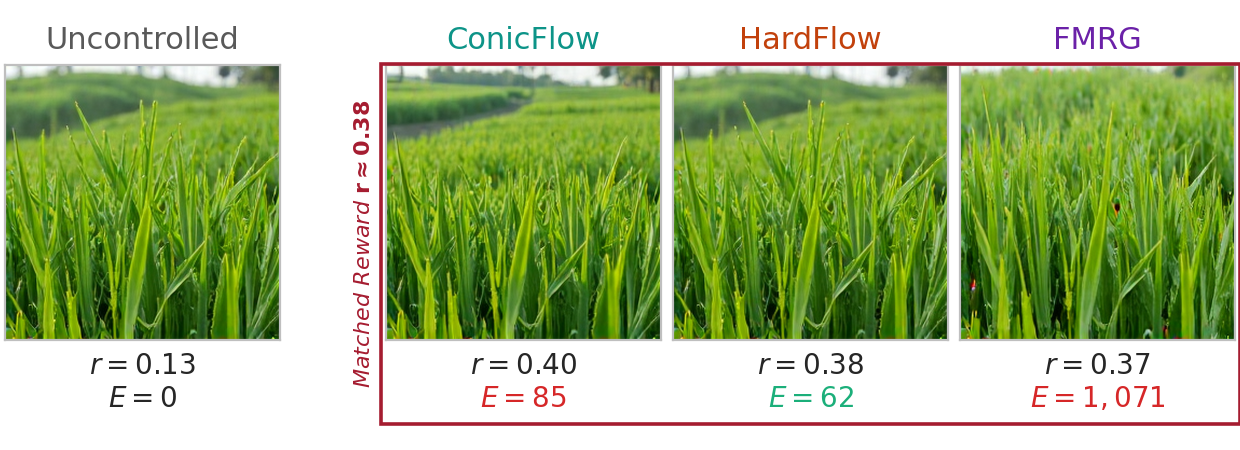}
    \caption{CLIP, seed~$42$: \emph{``a field of green grass''} $\to$ \emph{``a field of red poppies''}}
    \label{fig:ugrid-u07}
  \end{subfigure}\hfill
  \begin{subfigure}[t]{\ugridw}
    \centering
    \includegraphics[width=\linewidth]{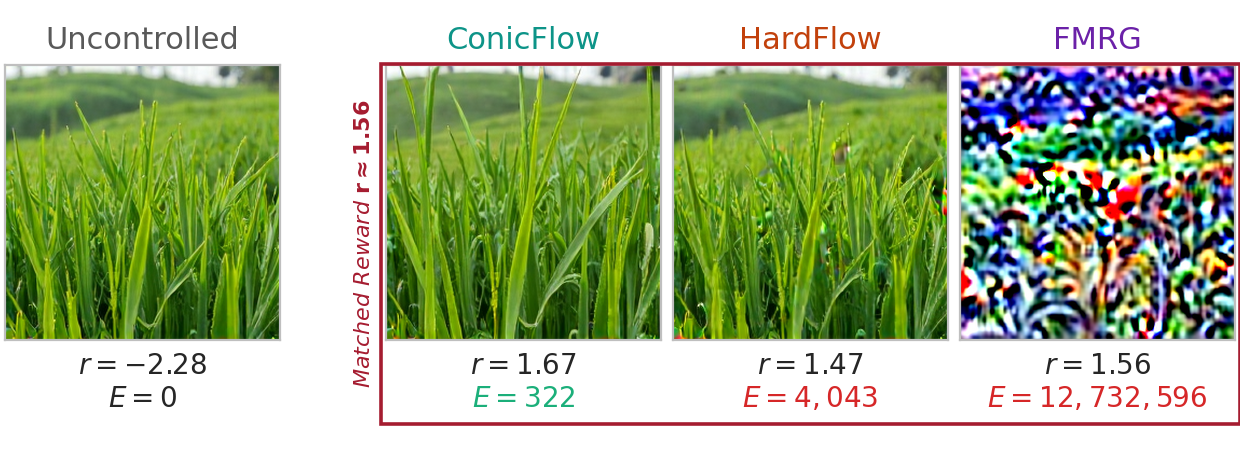}
    \caption{ImageReward, seed~$42$: \emph{``a field of green grass''} $\to$ \emph{``a field of red poppies''}}
    \label{fig:ugrid-u08}
  \end{subfigure}

  \begin{subfigure}[t]{\ugridw}
    \centering
    \includegraphics[width=\linewidth]{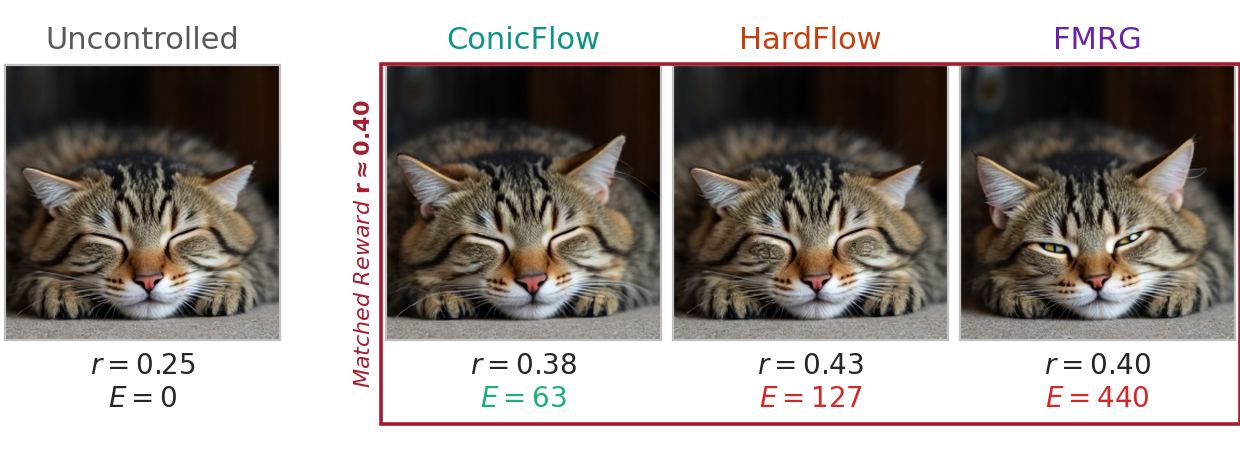}
    \caption{CLIP, seed~$42$: \emph{``a photo of a sleeping cat''} $\to$ \emph{``\ldots with wide open eyes''}}
    \label{fig:ugrid-u09}
  \end{subfigure}\hfill
  \begin{subfigure}[t]{\ugridw}
    \centering
    \includegraphics[width=\linewidth]{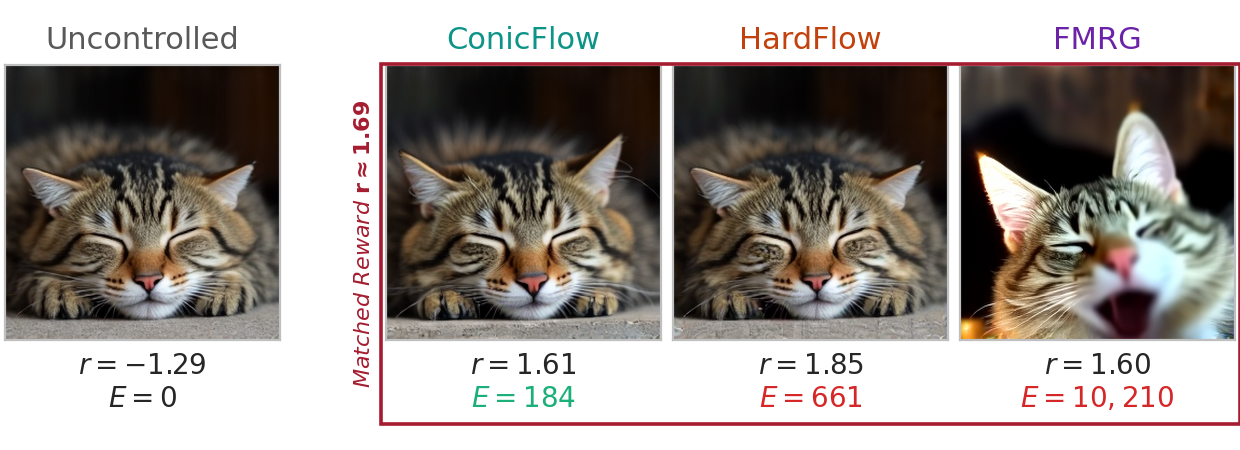}
    \caption{ImageReward, seed~$42$: \emph{``a photo of a sleeping cat''} $\to$ \emph{``\ldots with wide open eyes''}}
    \label{fig:ugrid-u10}
  \end{subfigure}
  \caption{More examples of unconstrained editing. Each panel is one
  (reward, prompt, seed) instance at a matched-reward level. Panels \subref{fig:ugrid-u07}--\subref{fig:ugrid-u10} are the harder case: turning a green field red, or opening a sleeping cat's eyes, asks for the
  image to be restructured rather than adjusted, and no method delivers it.}
  \label{fig:unconstrained-grid}
\end{figure*}

\paragraph{Constrained} We present additional visualizations of the constrained image editing comparisons in Figure~\ref{fig:constrained-grid-1},~\ref{fig:constrained-grid-2}, and~\ref{fig:constrained-grid-3}, similar to Figure~\ref{fig:edit-rows} in the main paper.

\newcommand{\cgridw}{0.49\linewidth}

\begin{figure*}[p]
  \centering
  \begin{subfigure}[b]{\linewidth}
    \centering
    \includegraphics[width=\cgridw]{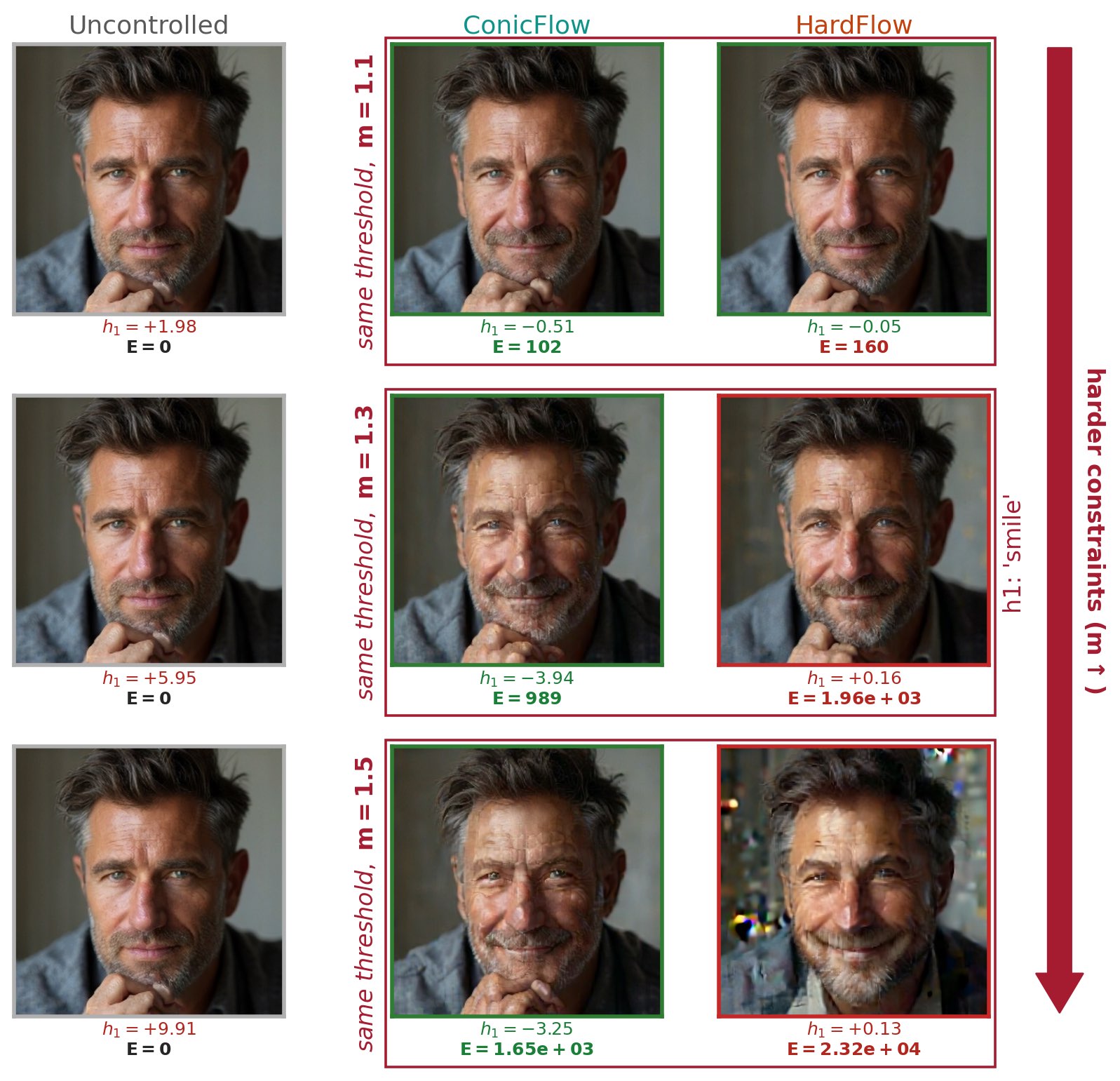}\hfill
    \includegraphics[width=\cgridw]{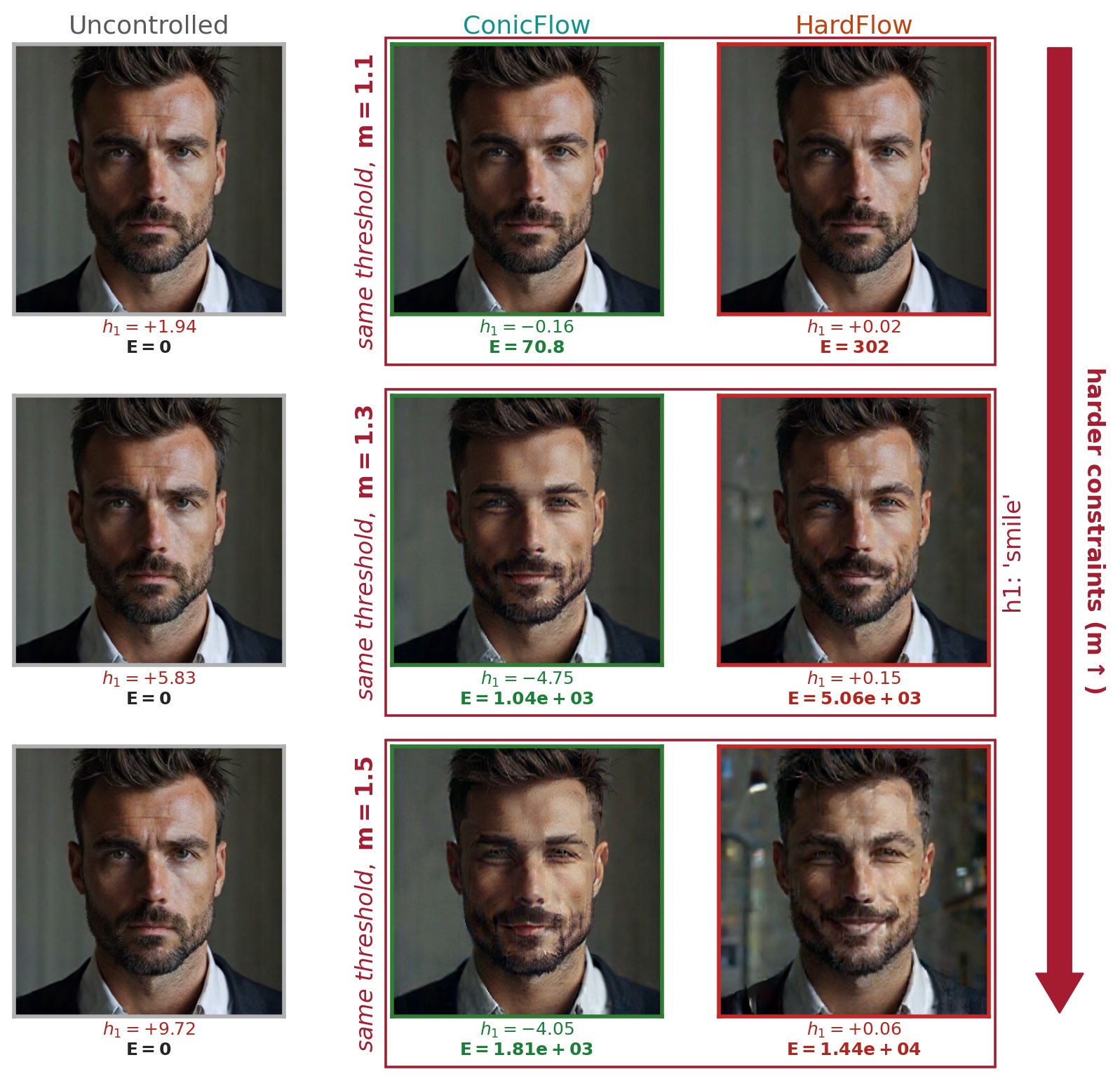}
    \caption{\emph{``a portrait of a man''} $\to$ \emph{``a smiling face''}}
    \label{fig:cgrid-p01_smile}
  \end{subfigure}

  \begin{subfigure}[b]{\linewidth}
    \centering
    \includegraphics[width=\cgridw]{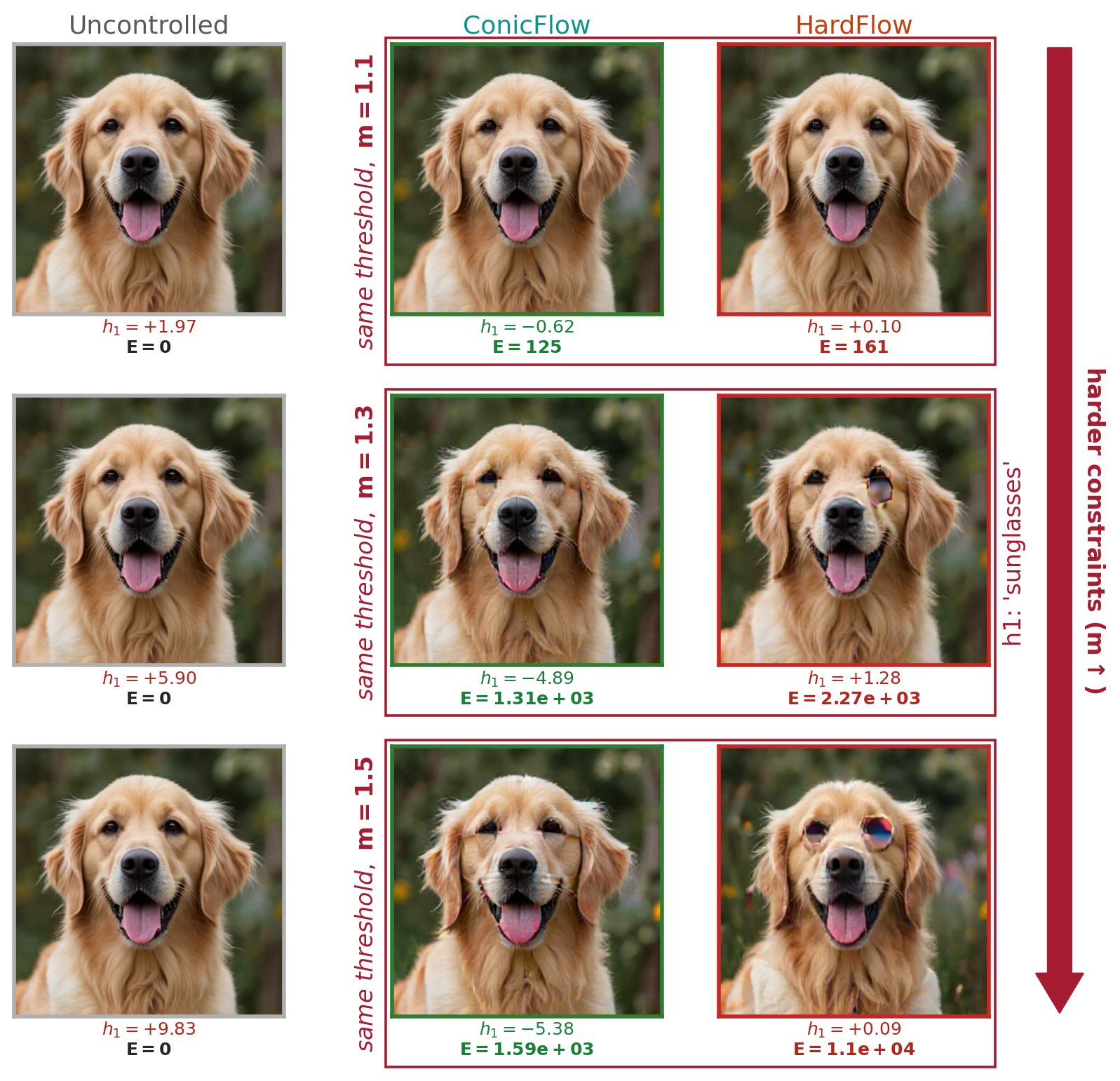}\hfill
    \includegraphics[width=\cgridw]{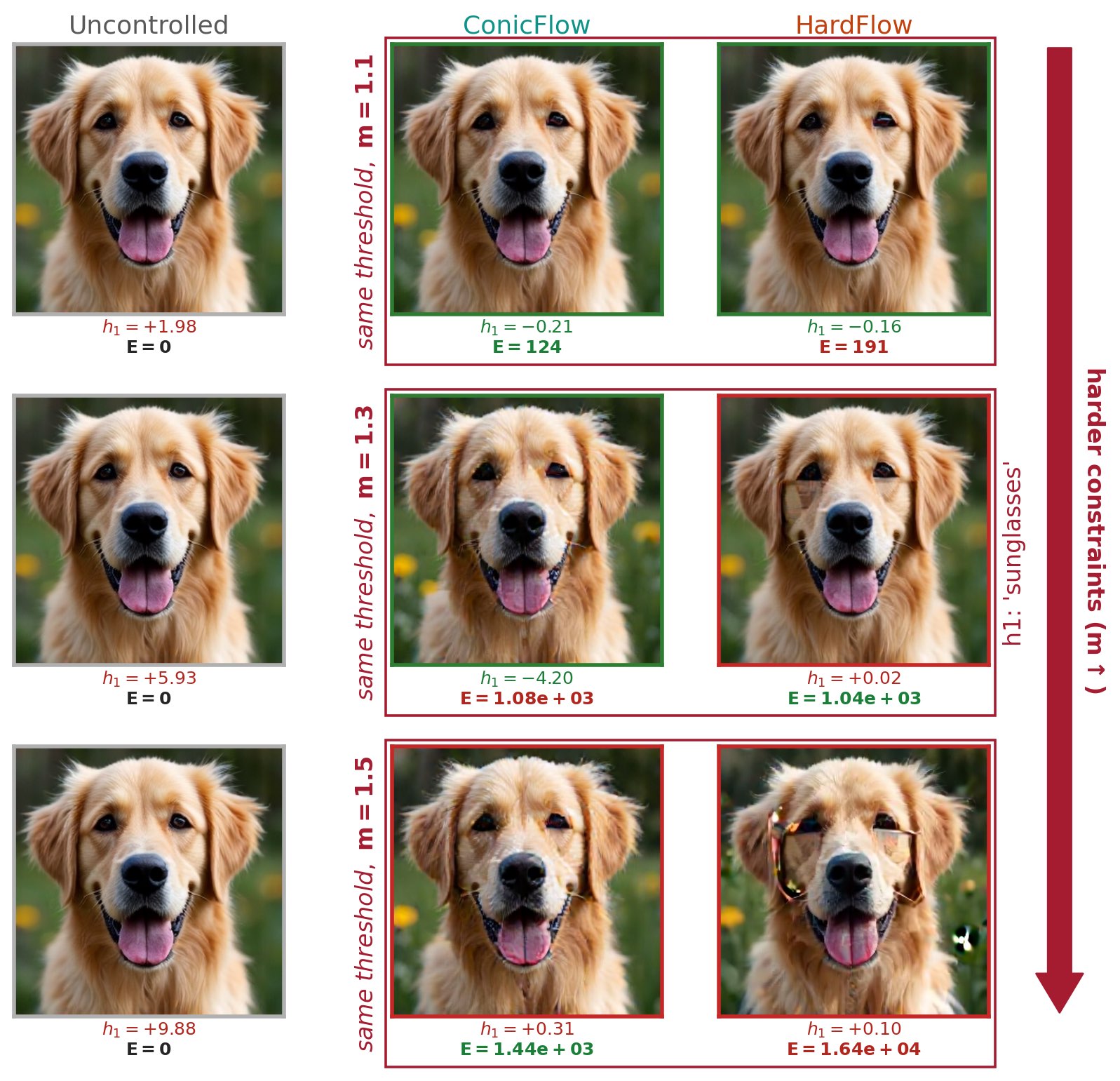}
    \caption{\emph{``a photo of a golden retriever''} $\to$ \emph{``a dog wearing sunglasses''}}
    \label{fig:cgrid-p02_sunglasses}
  \end{subfigure}

  \begin{subfigure}[b]{\linewidth}
    \centering
    \includegraphics[width=\cgridw]{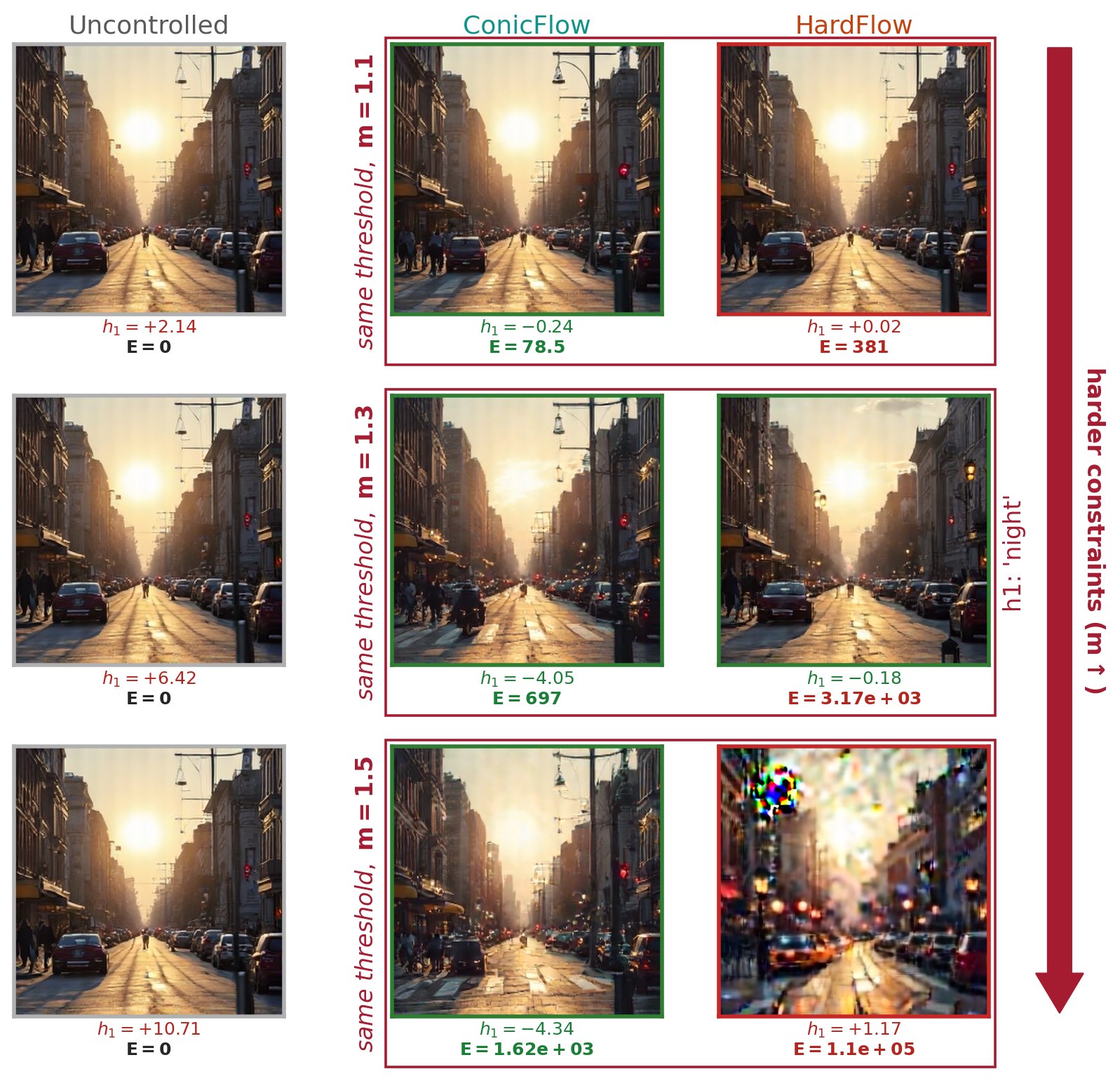}\hfill
    \includegraphics[width=\cgridw]{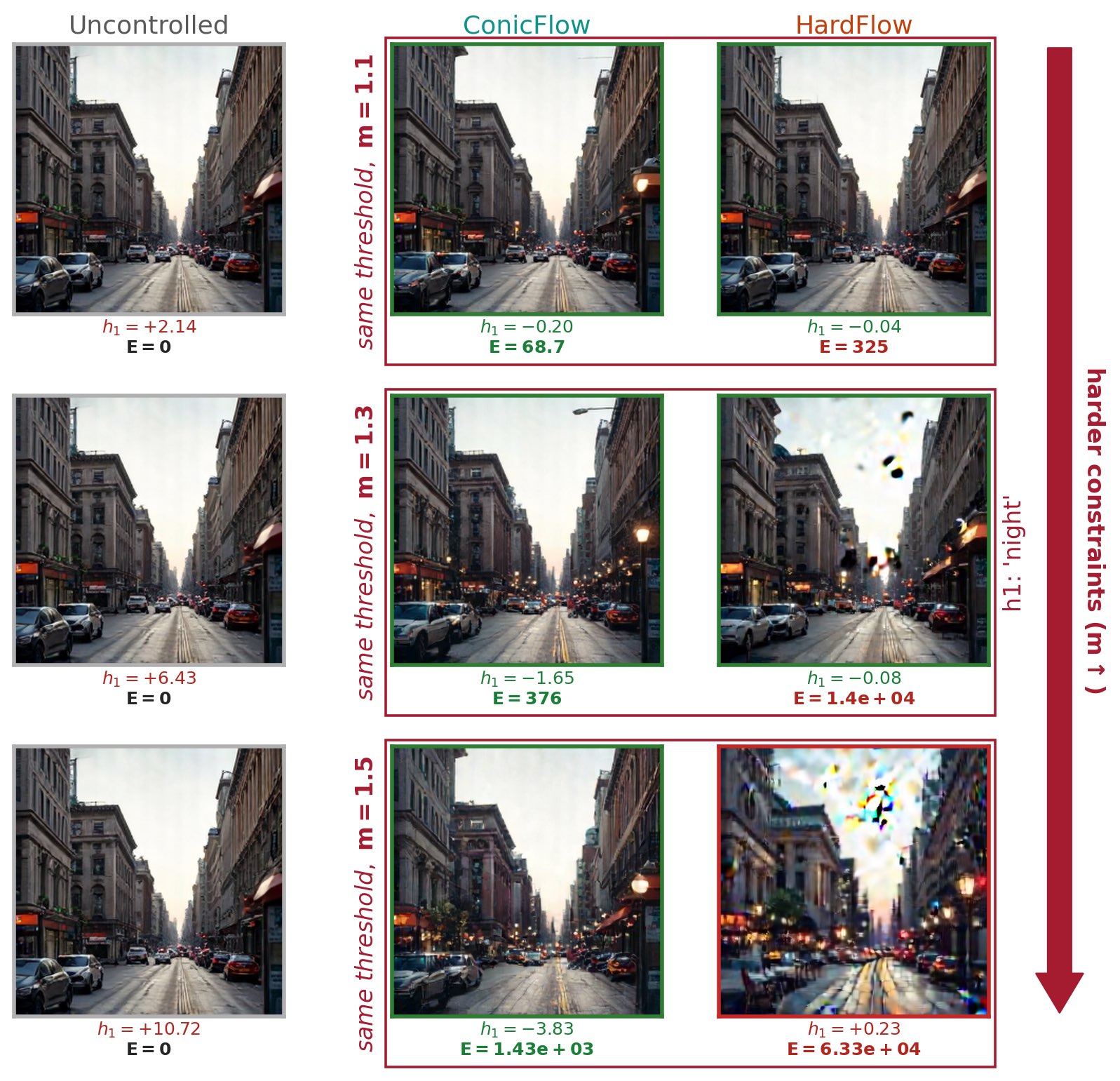}
    \caption{\emph{``a city street at noon''} $\to$ \emph{``a city street at night''}}
    \label{fig:cgrid-p04_night}
  \end{subfigure}
  \caption{More examples of constrained editing. Each panel is one (prompt, seed)
  instance. Instance details are in Table~\ref{tab:constrained-instances}.}
  \label{fig:constrained-grid-1}
\end{figure*}

\begin{figure*}[p]
  \centering
  \begin{subfigure}[b]{\linewidth}
    \centering
    \includegraphics[width=\cgridw]{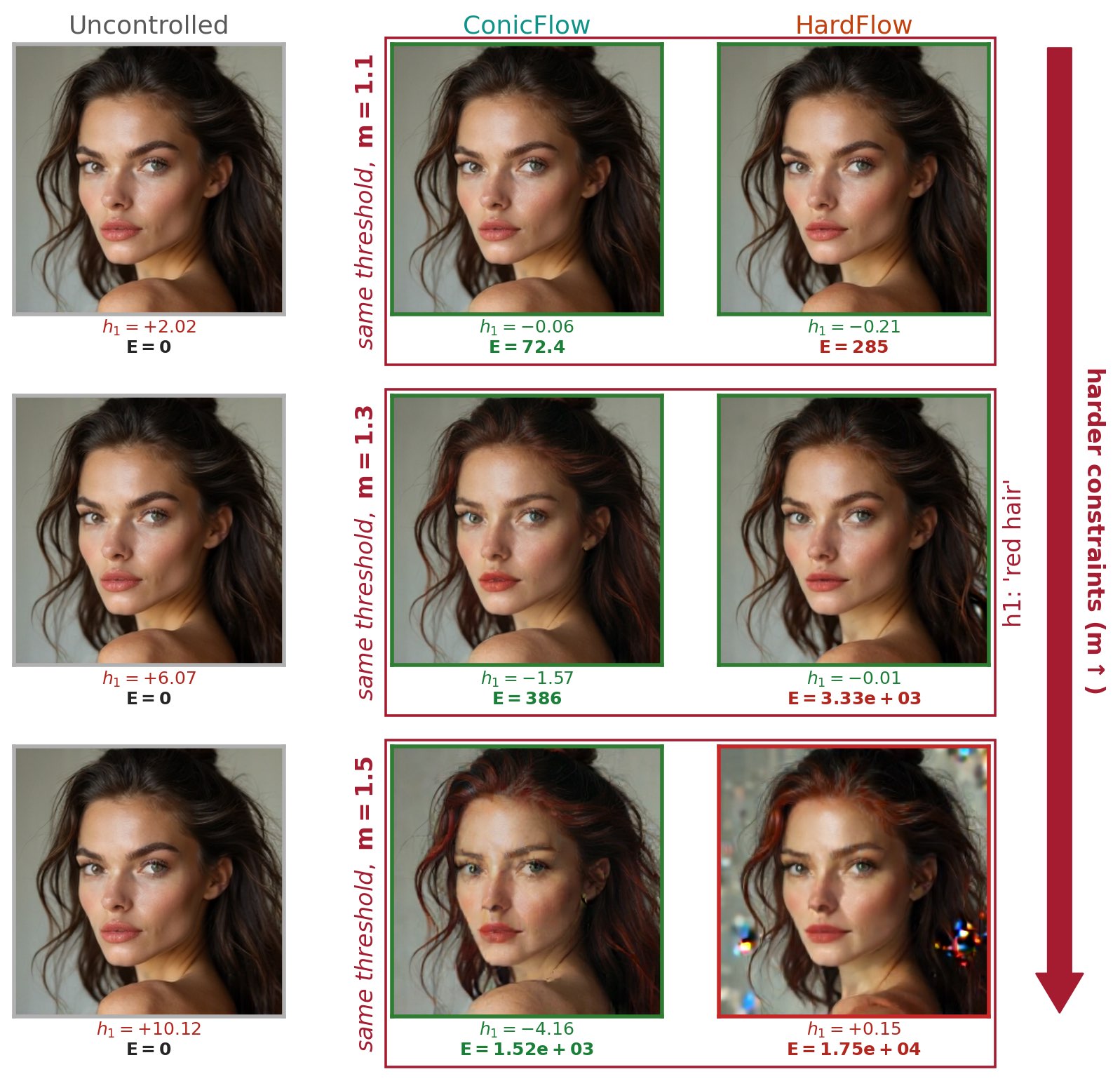}\hfill
    \includegraphics[width=\cgridw]{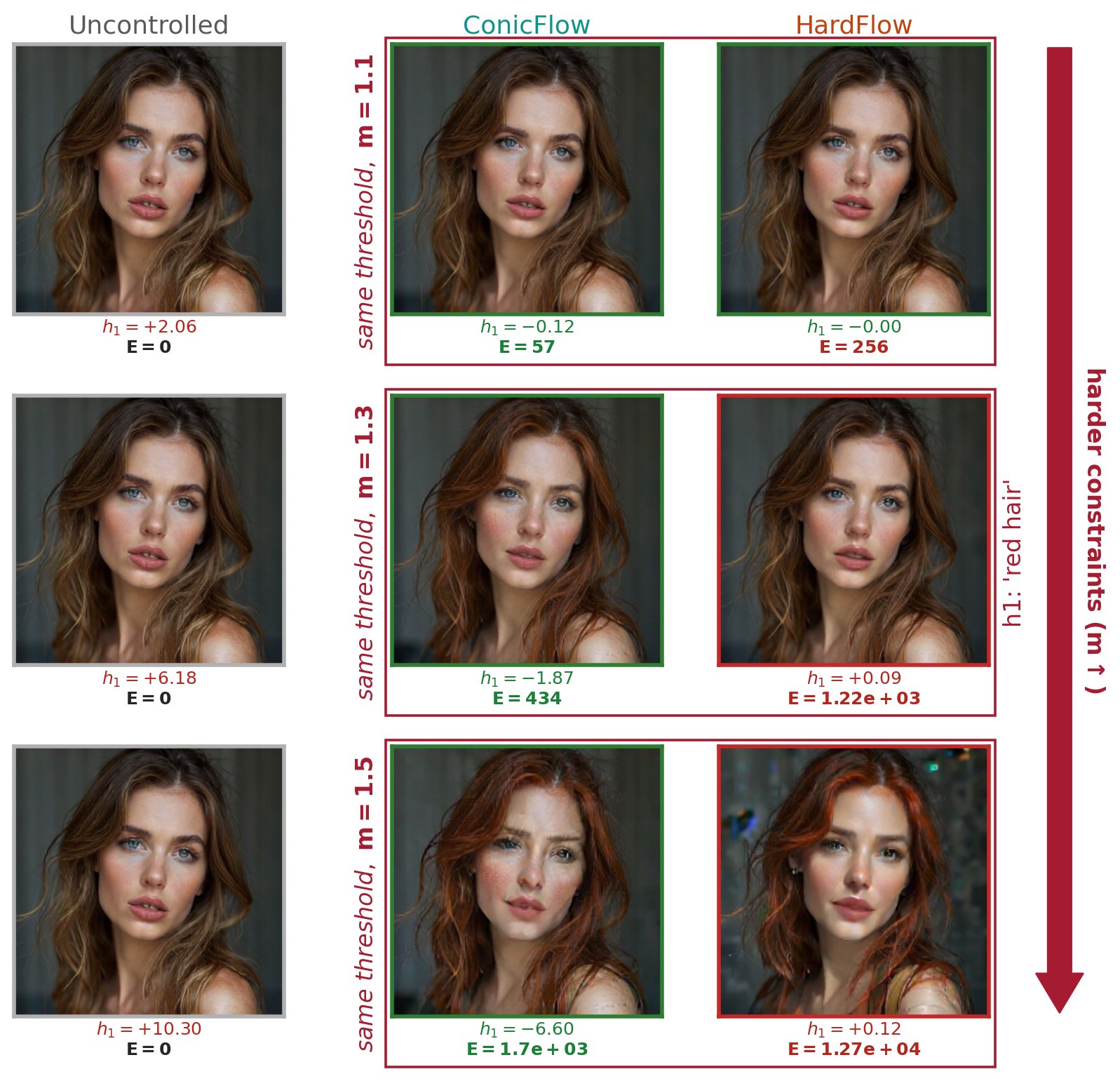}
    \caption{\emph{``a portrait of a woman''} $\to$ \emph{``a woman with bright red hair''}}
    \label{fig:cgrid-p05a_redhair}
  \end{subfigure}

  \begin{subfigure}[b]{\linewidth}
    \centering
    \includegraphics[width=\cgridw]{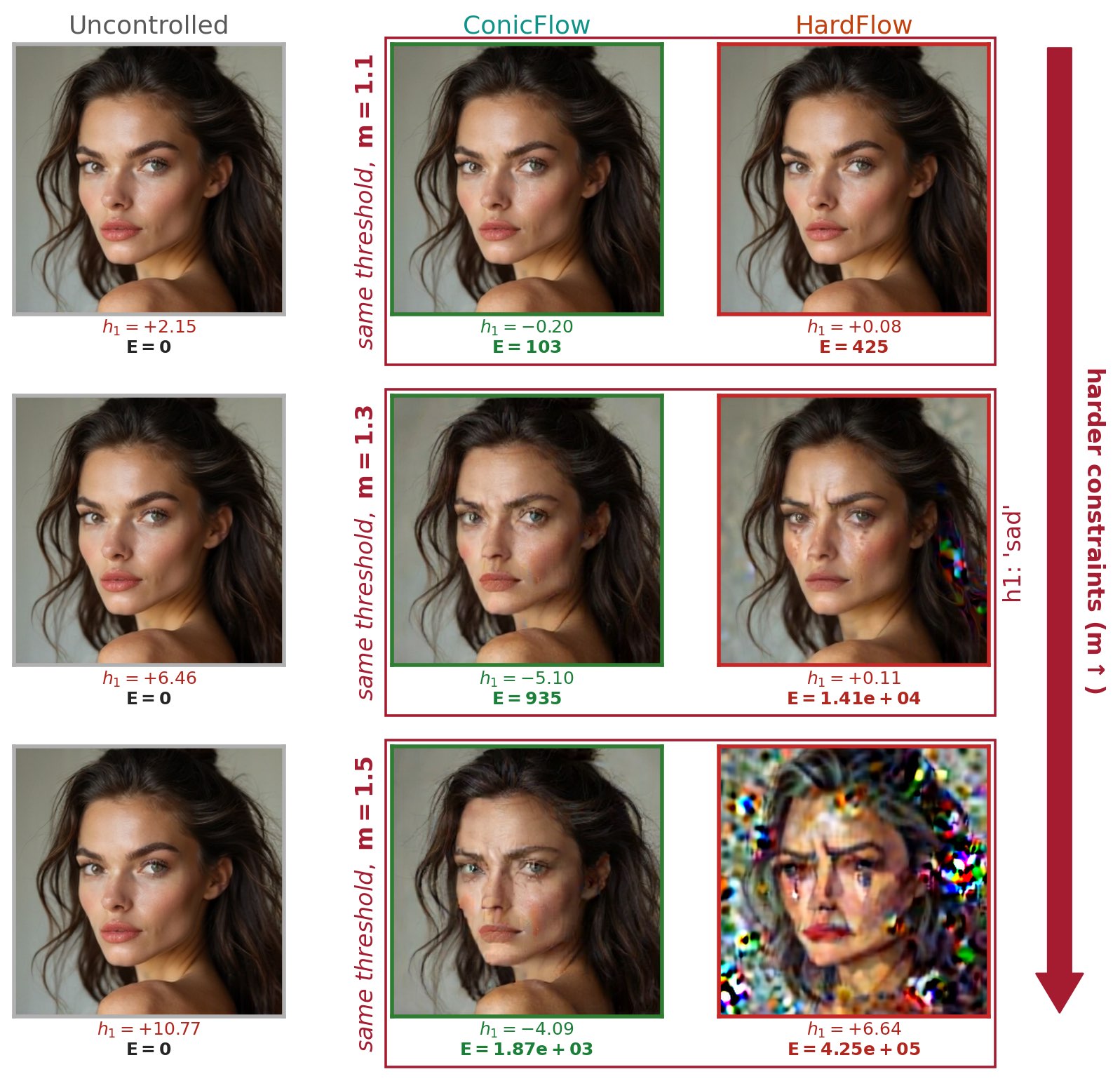}\hfill
    \includegraphics[width=\cgridw]{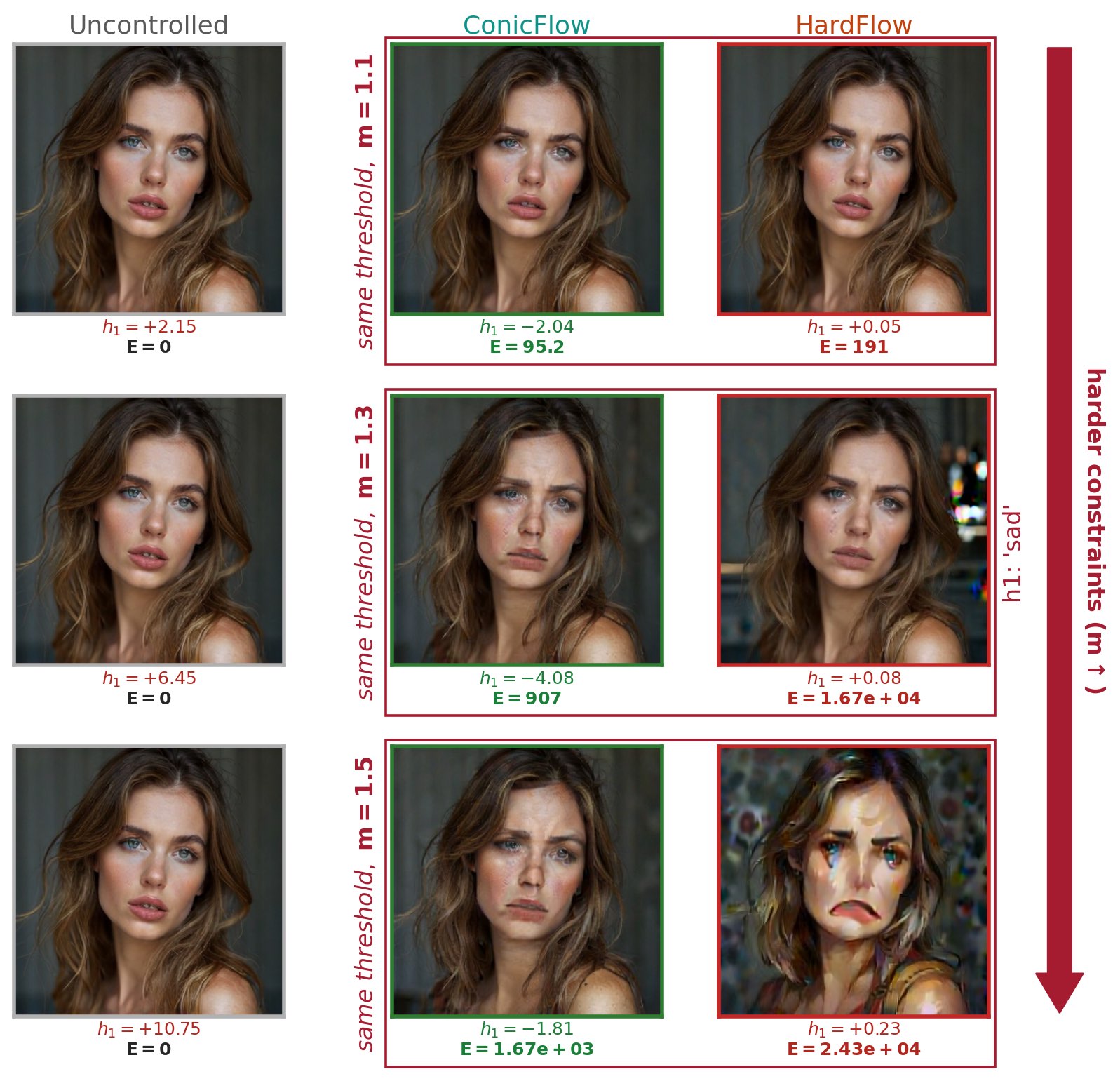}
    \caption{\emph{``a portrait of a woman''} $\to$ \emph{``sad frowning woman''}}
    \label{fig:cgrid-p05b_sad}
  \end{subfigure}

  \begin{subfigure}[b]{\linewidth}
    \centering
    \includegraphics[width=\cgridw]{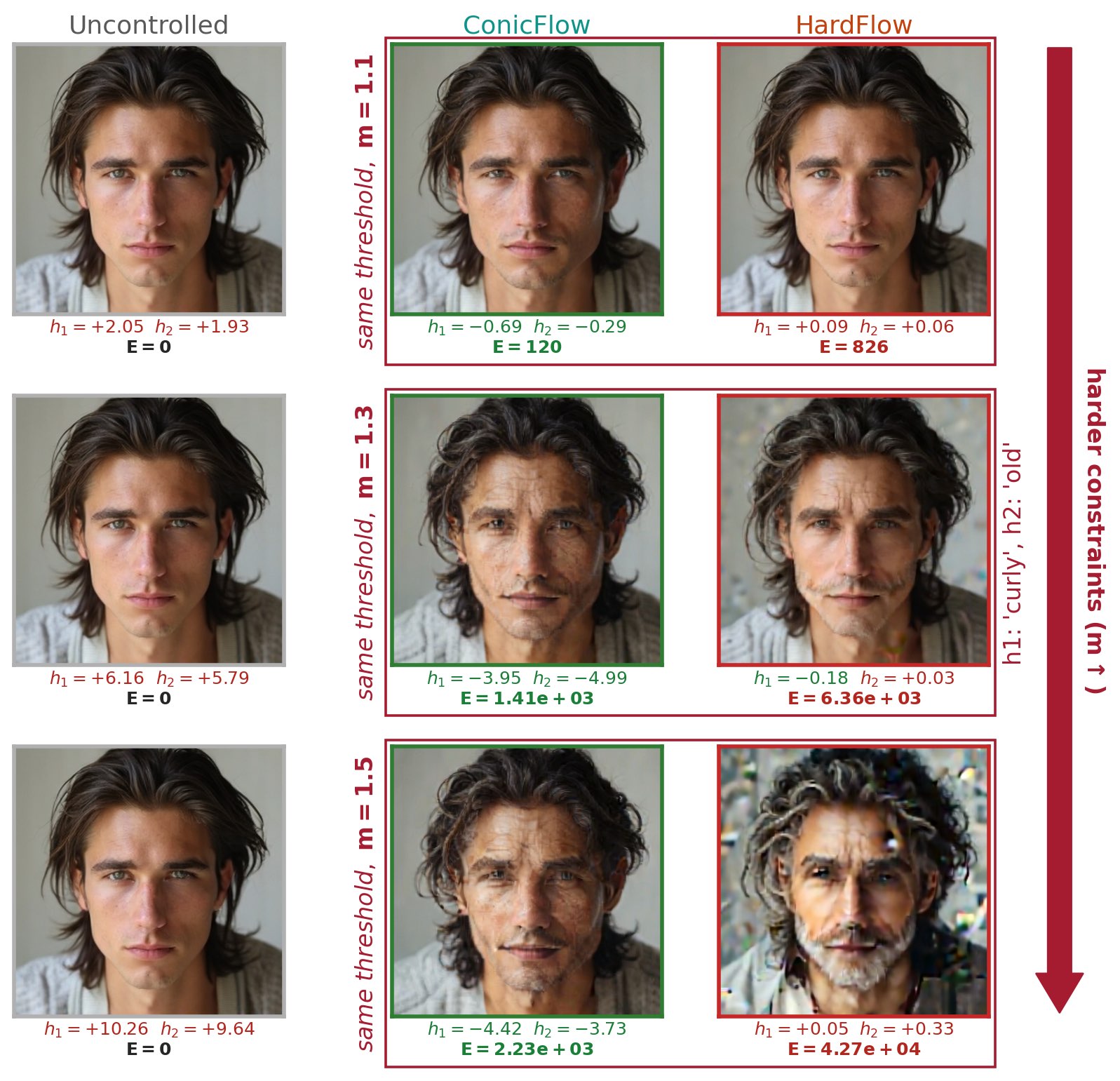}\hfill
    \includegraphics[width=\cgridw]{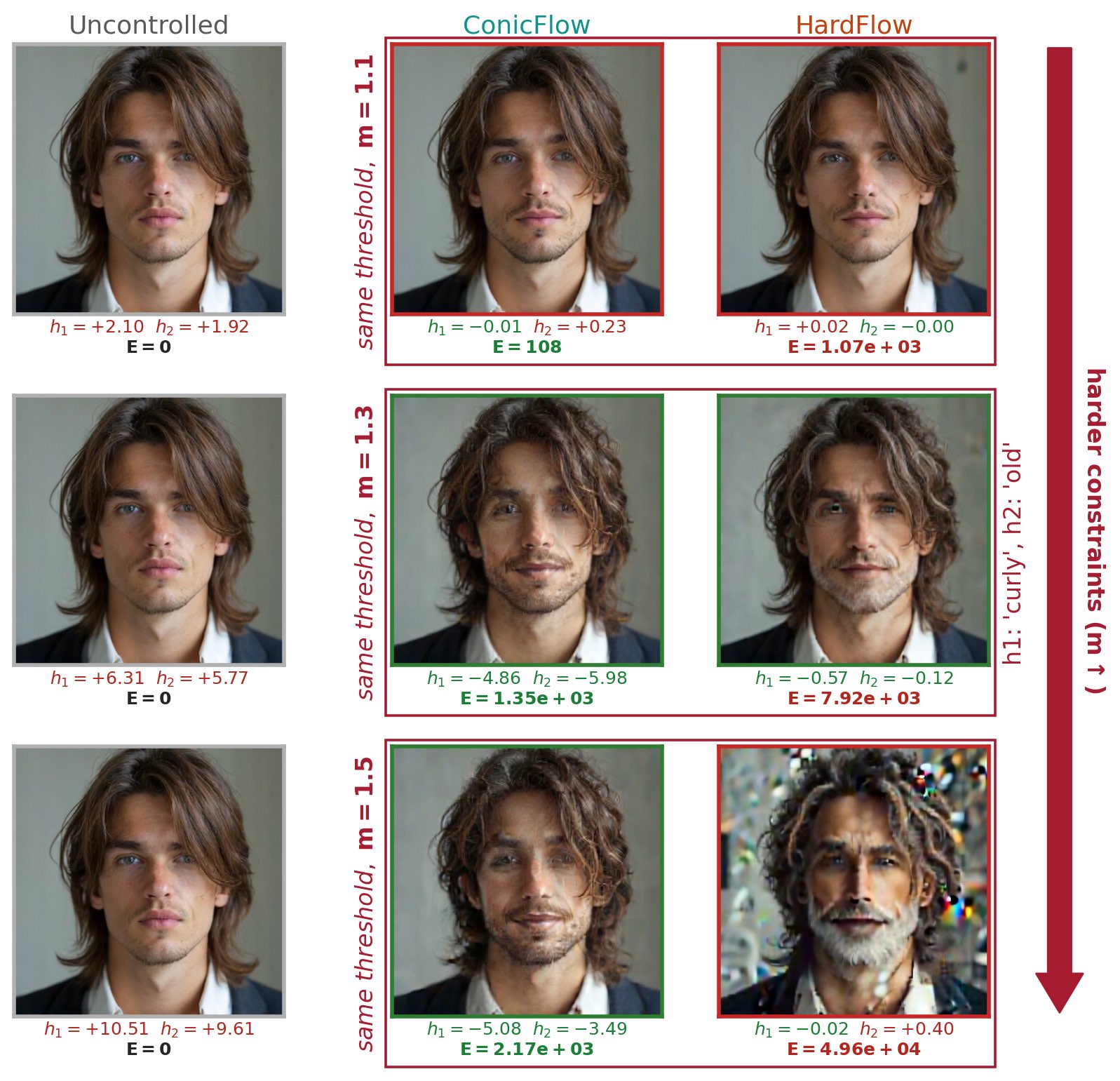}
    \caption{\emph{``portrait of a young man with straight hair''} $\to$ \emph{``man with curly hair; photo of very old man''}}
    \label{fig:cgrid-p09_curly_old}
  \end{subfigure}
  \caption{More examples of constrained editing. Each panel is one (prompt, seed) instance. Instance details are in Table~\ref{tab:constrained-instances}.}
  \label{fig:constrained-grid-2}
\end{figure*}

\begin{figure*}[p]
  \centering
  \begin{subfigure}[b]{\linewidth}
    \centering
    \includegraphics[width=\cgridw]{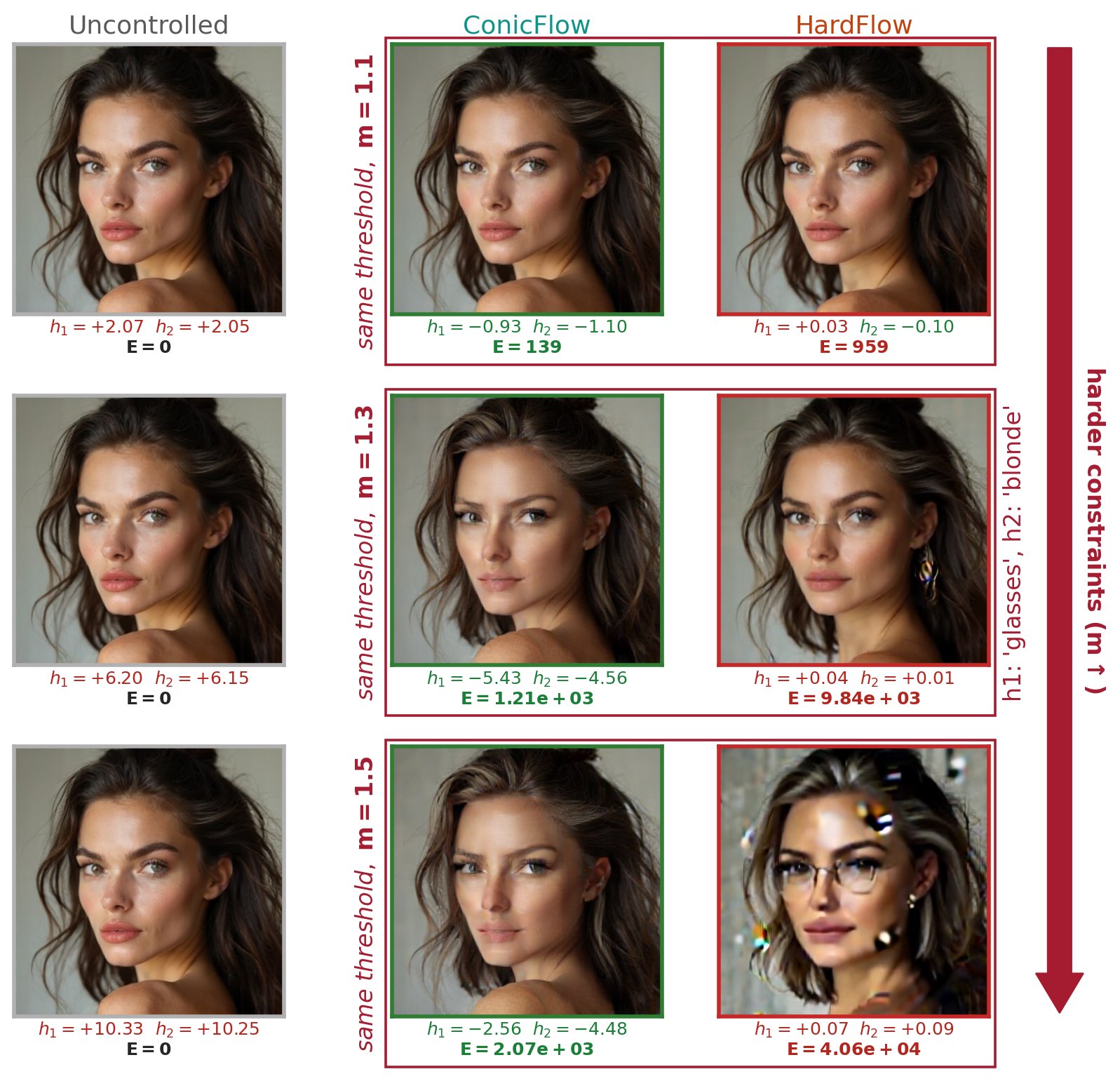}\hfill
    \includegraphics[width=\cgridw]{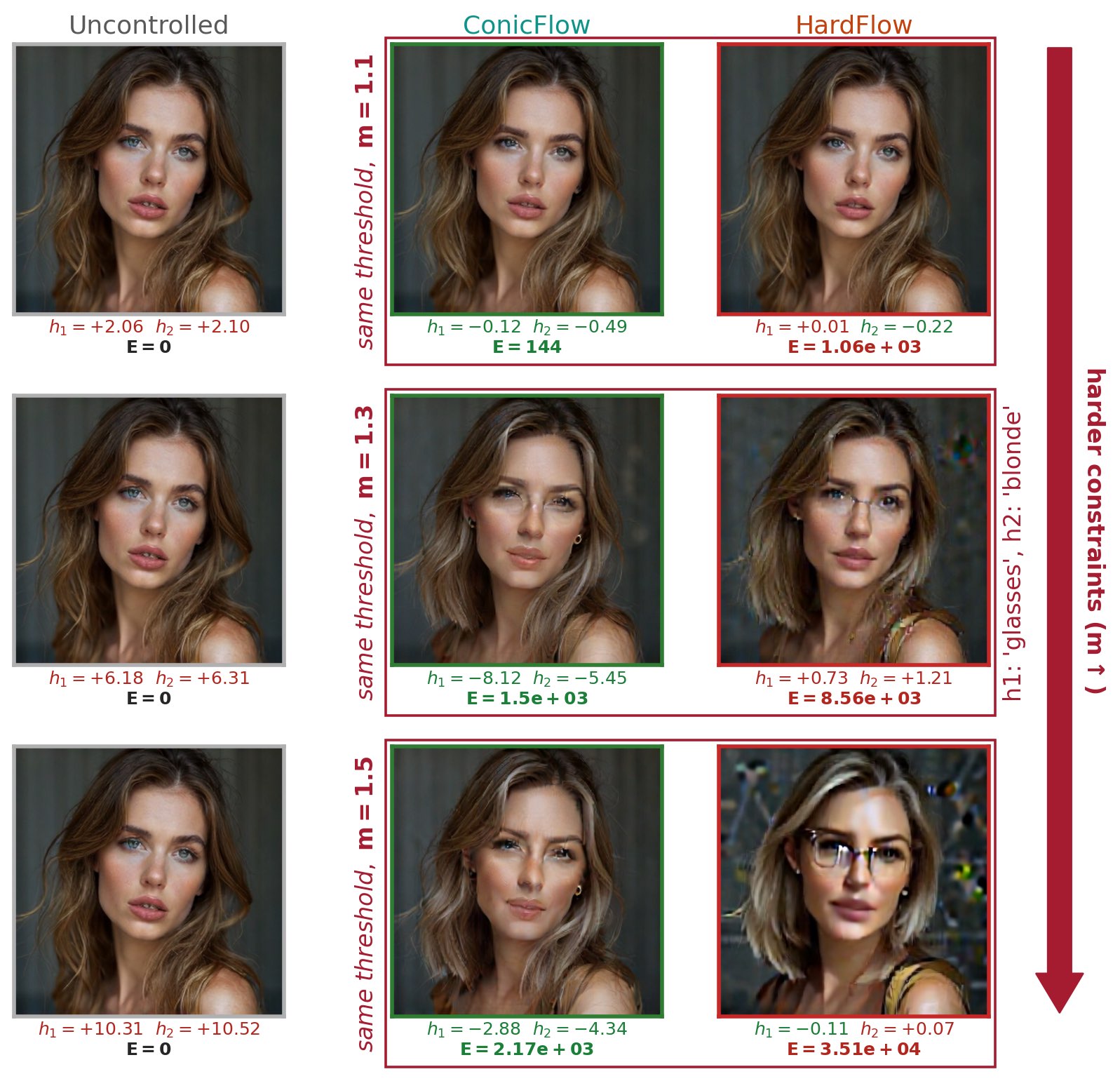}
    \caption{\emph{``a portrait of a woman''} $\to$ \emph{``a woman wearing glasses; a woman with short blonde hair''}}
    \label{fig:cgrid-p10_glass_blonde}
  \end{subfigure}

  \begin{subfigure}[b]{\linewidth}
    \centering
    \includegraphics[width=\cgridw]{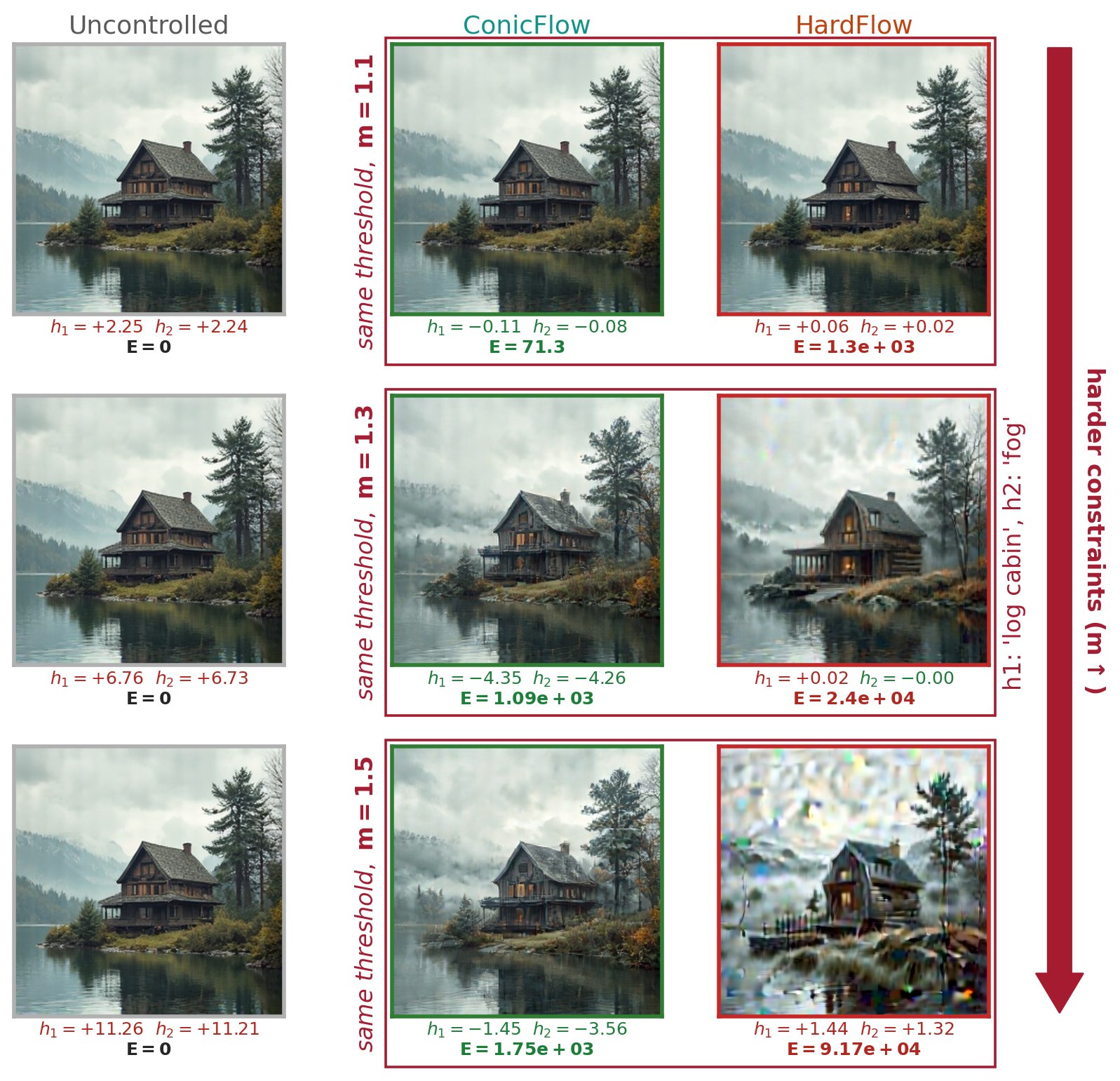}\hfill
    \includegraphics[width=\cgridw]{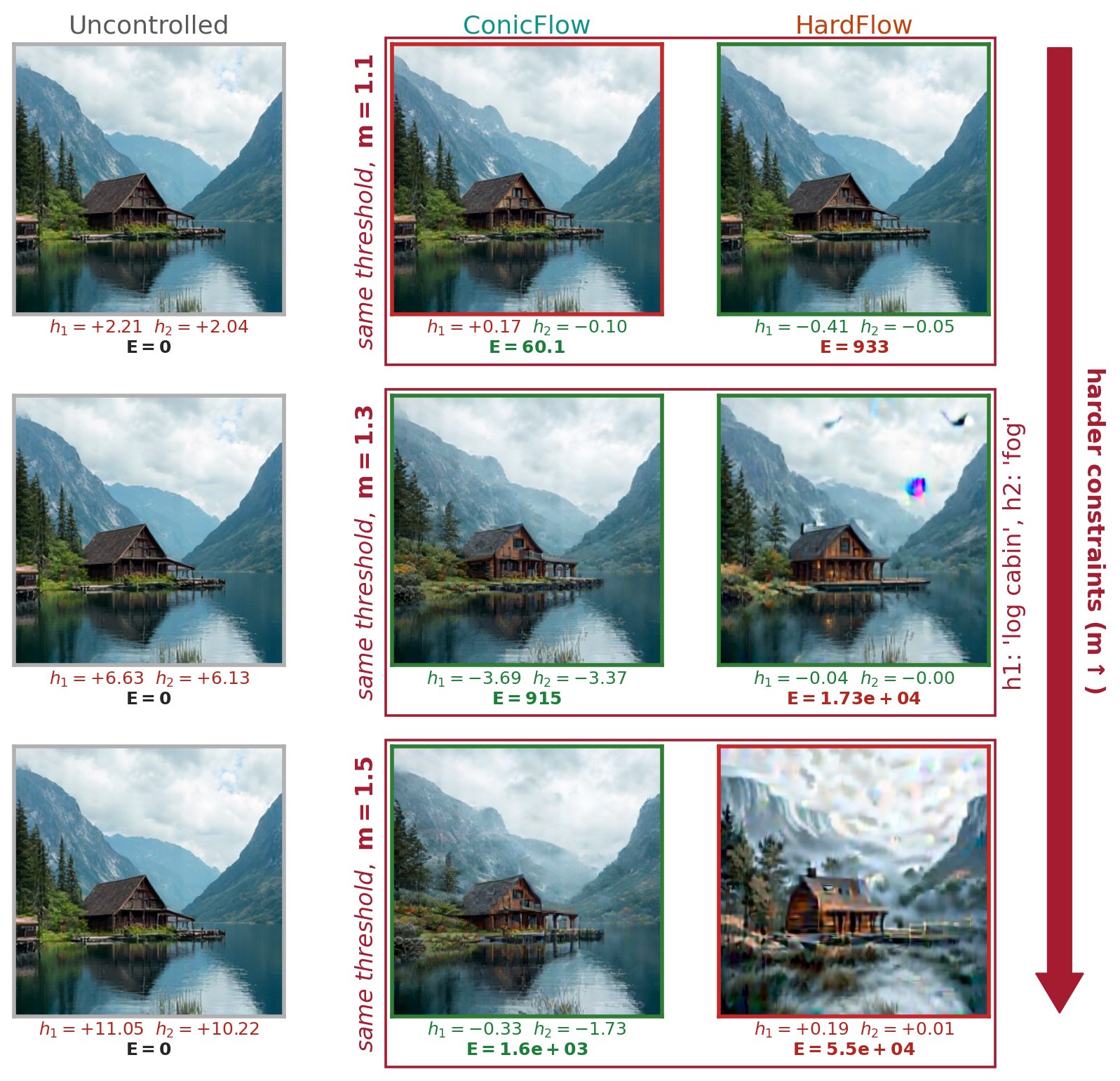}
    \caption{\emph{``a house by a lake''} $\to$ \emph{``a wooden log cabin; a house in heavy fog''}}
    \label{fig:cgrid-p11_cabin_fog}
  \end{subfigure}

  \begin{subfigure}[b]{\linewidth}
    \centering
    \includegraphics[width=\cgridw]{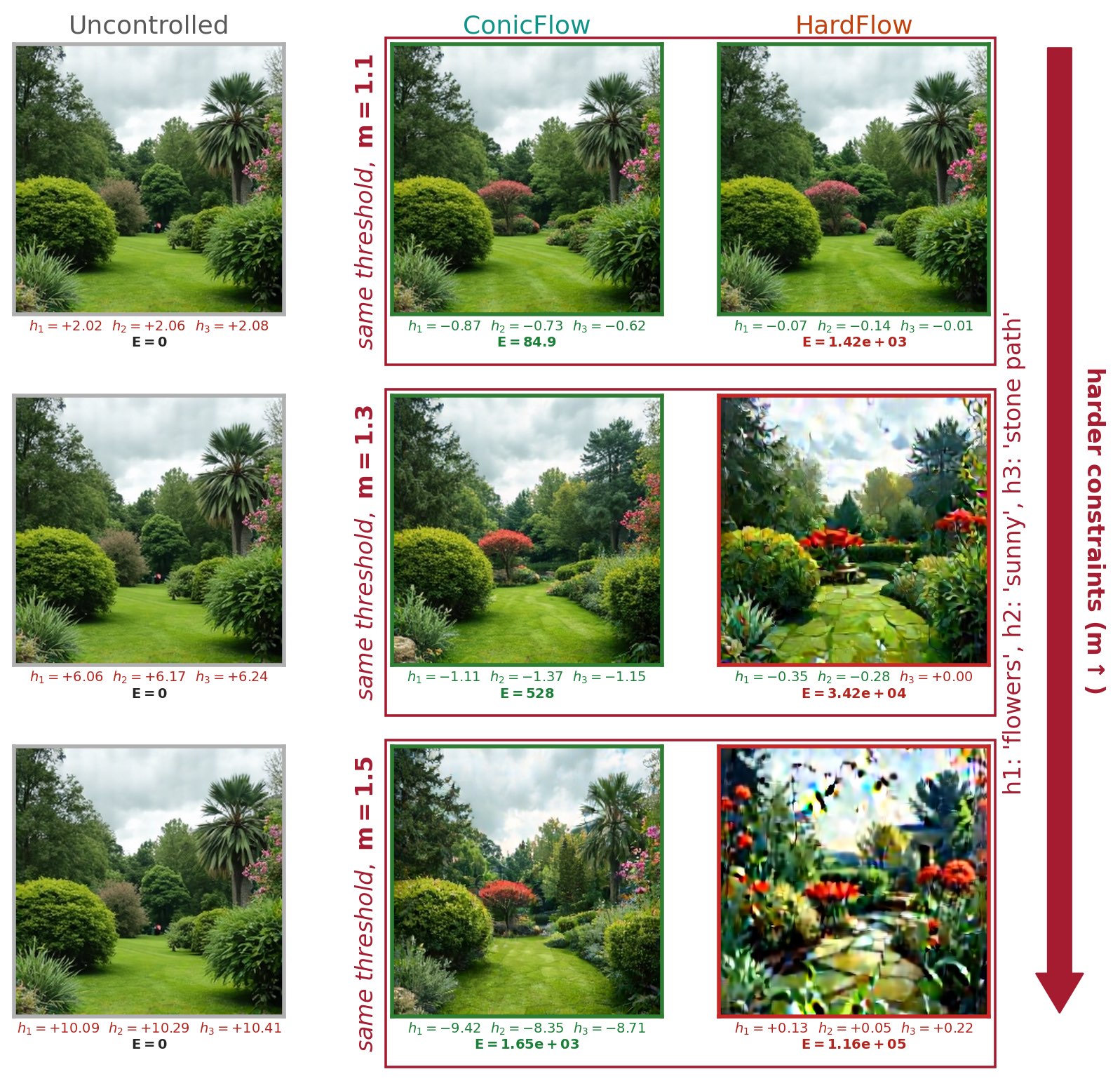}\hfill
    \includegraphics[width=\cgridw]{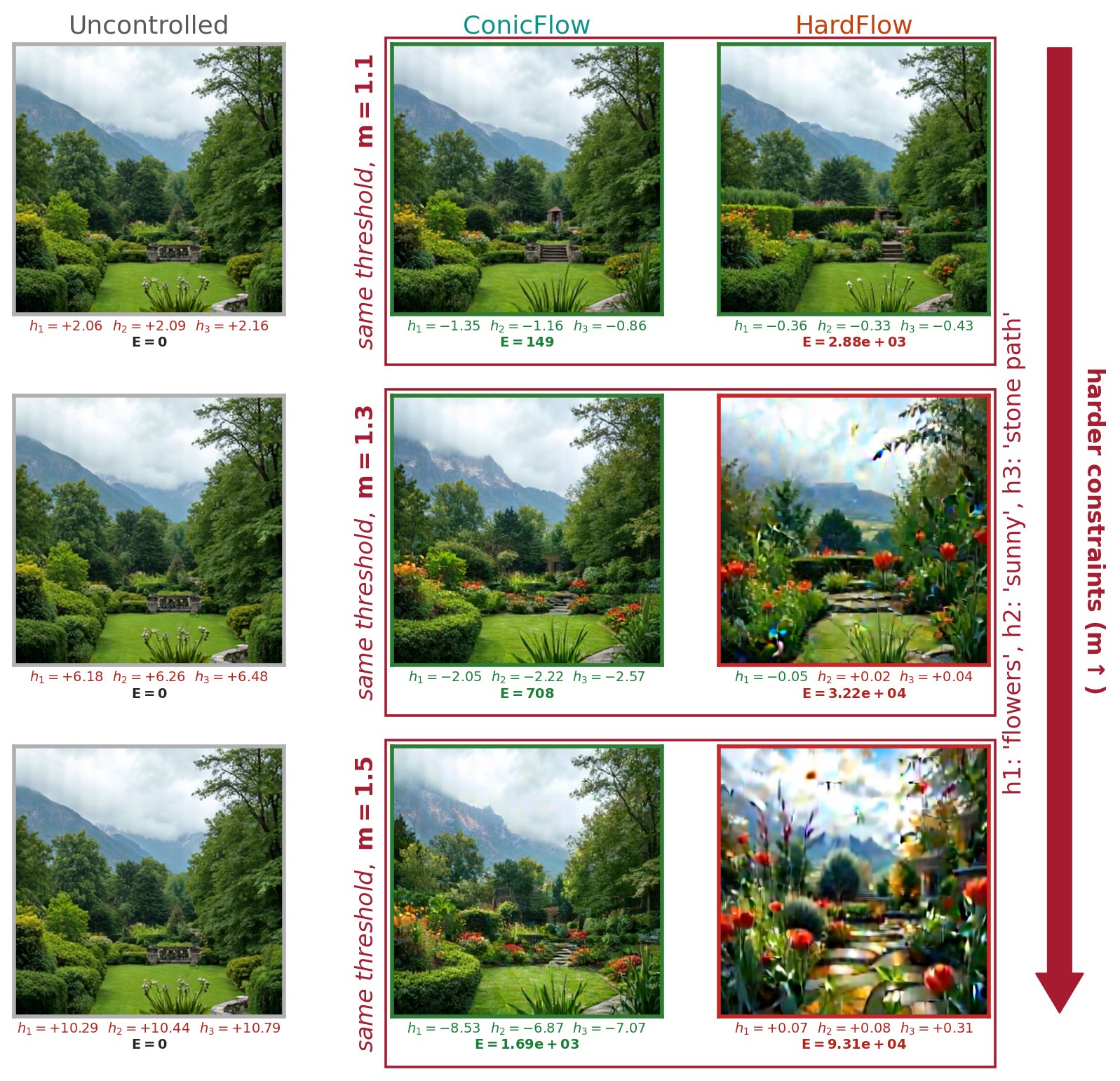}
    \caption{\emph{``a green garden on an overcast day''} $\to$ \emph{``a garden with red flowers; a garden in bright sunshine; a garden with a stone path''}}
    \label{fig:cgrid-t4_red_sun_path}
  \end{subfigure}
  \caption{More examples of constrained editing. Each panel is one (prompt, seed)
  instance. Instance details are in Table~\ref{tab:constrained-instances}.}
  \label{fig:constrained-grid-3}
\end{figure*}

\clearpage
\section{Ablations}
\label{sec:ablations}

Three questions are relevant to the comparison in Section~\ref{sec:experiments}: (1) whether \ours{}'s advantage is an artifact of the planning horizon we happened to choose, (2) whether it survives once the baselines are given the same compute, and (3) how much of it comes from where the horizon objective is evaluated. We address all three in this section on the editing task at $N=8$, on exactly the shared $\lambda$
grids of Section~\ref{sec:experiments}.

\subsection{Planning horizon}
\label{sec:abl-horizon}

\ours{} plans over $H$ steps and applies the first. We re-ran it at
$H \in \{1,4,6,8\}$ against the shipped $H=2$ configuration: four editing
prompts at seed $317$, $n_{\mathrm{inner}}=24$, $\mathrm{lr}=10^{-2}$, terminal
estimate by a single flow-map call (``jump''), six $\lambda$ per cell, so $24$
operating points per $H$. HardFlow and FMRG-J have no planning horizon and are
not re-run.

\begin{table}[h]
\caption{Planning horizon, editing, $N=8$, seed $317$; $\Delta r$, $E$ and time
           are medians over the $24$ operating points in each $H$. Wall clock is
           measured on shared cards and is indicative only.}
  \centering
  \small
  \begin{tabular}{cccc}
    \toprule
    $H$ & $\Delta r$ & $E$ & time \\
    \midrule
    $1$ & $3.98$ & $78.5$  & $207$\,s \\
    $2$ & $4.45$ & $\mathbf{68.7}$  & $249$\,s \\
    $4$ & $4.39$ & $70.6$  & $288$\,s \\
    $6$ & $4.67$ & $90.3$  & $321$\,s \\
    $8$ & $\mathbf{4.95}$ & $101.4$ & $264$\,s \\
    \bottomrule
  \end{tabular}
  \label{tab:abl-horizon}
\end{table}

Moving from a one-step to a two-step horizon improves both quantities at once:
$\Delta r$ rises $12\%$ and $E$ falls $12\%$. Beyond $H=2$ the two diverge.
Reward keeps rising, apart from a plateau at $H=4$, and reaches its maximum at
$H=8$, $24\%$ above $H=1$ for $24\%$ more NFE, where
$\mathrm{NFE} = n_{\mathrm{inner}}\big(N(N-1)/2 + H\big)$. Energy grows much
faster over the same range: from $H=2$ to $H=8$, $E$ rises $48\%$ while
$\Delta r$ gains only $11\%$. Longer horizons therefore buy reward with
disproportionately more control effort, and on the objective $H=2$ is the best
of the five.

\subsection{Matched wall clock}
\label{sec:abl-wallclock}

\ours{} is the only arm that backpropagates through the transformer, so at equal
$\lambda$ it also spends more compute: $239$\,s against $36$\,s for HardFlow
and $33$\,s for FMRG-J in Section~\ref{sec:experiments}. We removed that advantage by
retuning $n_{\mathrm{inner}}$ to $12$, $167$ and $13$ respectively, which brings
all three to $\approx\!122$\,s per run, and re-ran the four editing prompts at
seeds $317$ and $1289$ on the same shared $\lambda$ grids: $48$ operating points
at equal $\lambda$ \emph{and} equal compute.

\begin{table}[h]
\caption{Equal $\lambda$ and equal wall clock, editing, $N=8$, seeds $317$ and
           $1289$; $\Delta r$, $E$ and time are medians.}
  \centering
  \small
  \begin{tabular}{lcccc}
    \toprule
    & $n_{\mathrm{inner}}$ & time & $\Delta r$ & $E$ \\
    \midrule
    \ours{}  & $12$  & $124$\,s & $\mathbf{4.16}$ & $84.6$ \\
    HardFlow & $167$ & $123$\,s & $2.39$ & $555.2$ \\
    FMRG-J   & $13$  & $128$\,s & $1.84$ & $\mathbf{32.5}$ \\
    \bottomrule
  \end{tabular}
  \label{tab:abl-wallclock}
\end{table}

At matched compute \ours{} still delivers the largest reward gain, a median
$\Delta r$ of $4.16$ against $2.39$ for HardFlow and $1.84$ for FMRG-J
(Table~\ref{tab:abl-wallclock}). The two baselines fall short in opposite
directions. HardFlow, given more than three times its shipped compute, reaches
only $57\%$ of \ours{}'s reward gain and pays $6.6\times$ the control energy for
it ($E=555.2$ against $84.6$). FMRG-J spends the least energy of any arm,
$E=32.5$, but only because it barely moves the sample: its reward gain is under
half of \ours{}'s. \ours{} is the only arm that combines a large reward gain with
moderate energy, and extra compute alone does not move either baseline there.

\subsection{Objective evaluation point}
\label{sec:abl-endpoint}

The RH-T variant of \ours{} scores the horizon objective at the flow-map prediction of the terminal latent rather than at the planning-window state $x_H$ it actually reaches. We ran a comparison between RH-H and RH-T variants. We used four editing
prompts at 2 seeds, $n_{\mathrm{inner}}=24$,
$\mathrm{lr}=10^{-2}$, six $\lambda$ values per cell, so $48$
operating points. As seen in Table~\ref{tab:abl-endpoint}, evaluating at the terminal point is worth more than the compute. RH-H runs
$2.6\times$ faster but costs $1.54$ in
median $\Delta r$ while \emph{raising} median energy by a quarter, so the
RH-H is dominated by RH-T on both axes.

\begin{table}[h]
\caption{Where the horizon objective is evaluated, editing, $N=8$, seeds
           $317$ and $1289$; $\Delta r$, $E$ and time are
           medians.}
  \centering
  \small
  \begin{tabular}{llccc}
    \toprule
    & objective at & time & $\Delta r$ & $E$ \\
    \midrule
    \ours{}  & endpoint & $238$\,s & $\mathbf{4.79}$ & $\mathbf{68.7}$ \\
             & horizon  & $90$\,s  & $3.25$ & $86.9$ \\
    \midrule
    HardFlow & ---      & $36$\,s  & $2.00$ & $553.9$ \\
    FMRG-J   & ---      & $33$\,s  & $2.39$ & $87.0$ \\
    \bottomrule
  \end{tabular}
  \label{tab:abl-endpoint}
\end{table}

\end{document}